\documentclass{article}
\usepackage{algorithmic}

\usepackage{etoolbox}
\newtoggle{neurips}
\togglefalse{neurips}
\newcommand{\neurips}[1]{\iftoggle{neurips}{#1}{}}
\newcommand{\arxiv}[1]{\iftoggle{neurips}{}{#1}}

\neurips{
  \PassOptionsToPackage{numbers, compress, square}{natbib}
  \usepackage[]{neurips_2024}
  \usepackage[numbers]{natbib}
  \bibliographystyle{abbrvnat}
}
\usepackage{caption}
\usepackage{natbib}

\date{}

\arxiv{
\usepackage[letterpaper, left=1in, right=1in, top=1in,
bottom=1in]{geometry}
  \usepackage{parskip}
}

\neurips{
  \usepackage{parskip}
  }

\PassOptionsToPackage{hypertexnames=false}{hyperref}  %

\usepackage[svgnames]{xcolor}
\usepackage[colorlinks=true, linkcolor=blue!70!black, citecolor=blue!70!black,urlcolor=black,breaklinks=true]{hyperref}
\colorlet{txblue}{RoyalBlue!70!NavyBlue}
\hypersetup{linkcolor=txblue,
            citecolor=txblue}
\usepackage{microtype}
\usepackage{hhline}

\usepackage{amsthm}
\usepackage{mathtools}
\usepackage{amsmath}
\usepackage{bbm}
\usepackage{amsfonts}
\usepackage{amssymb}
\usepackage[nameinlink,capitalize]{cleveref}

\makeatletter
\newcommand{\neutralize}[1]{\expandafter\let\csname c@#1\endcsname\count@}
\makeatother

\newtheorem*{lemma*}{Lemma}

\usepackage{algorithm}

\arxiv{
\usepackage{natbib}
\bibliographystyle{plainnat}
\bibpunct{(}{)}{;}{a}{,}{,}
}

\usepackage{xpatch}

\usepackage{thmtools}
\usepackage{thm-restate}
\declaretheorem[name=Theorem,parent=section]{theorem}
\declaretheorem[name=Lemma,parent=section]{lemma}
\declaretheorem[name=Assumption, parent=section]{assumption}
\declaretheorem[name=Condition, parent=section]{condition}

\declaretheorem[name=Remark, parent=section]{remark}
\declaretheorem[name=Proposition, parent=section]{proposition}

\usepackage{crossreftools}
\makeatletter
  \renewenvironment{proof}[1][Proof]%
  {%
   \par\noindent{\bfseries\upshape {#1.}\ }%
  }%
  {\qed\newline}
  \makeatother

\theoremstyle{definition}  %

\newtheorem{corollary}{Corollary}[section]

\theoremstyle{plain}
\newtheorem{definition}{Definition}[section]

\xpatchcmd{\proof}{\itshape}{\normalfont\proofnameformat}{}{}
\newcommand{\proofnameformat}{\bfseries}

\renewcommand{\eqref}[1]{\texorpdfstring{\hyperref[#1]{(\ref*{#1})}}{(\ref*{#1})}}
\crefformat{equation}{#2Eq.\,(#1)#3}
\Crefformat{equation}{#2Eq.\,(#1)#3}

\Crefformat{figure}{#2Figure~#1#3}
\Crefformat{assumption}{#2Assumption~#1#3}

\Crefname{assumption}{Assumption}{Assumptions}

\crefname{fact}{Fact}{Facts}

\Crefformat{figure}{#2Figure #1#3}
\Crefformat{assumption}{#2Assumption #1#3}

\usepackage{crossreftools}
\usepackage{xparse}

\ExplSyntaxOn
\DeclareDocumentCommand{\XDeclarePairedDelimiter}{mm}
 {
  \__egreg_delimiter_clear_keys: %
  \keys_set:nn { egreg/delimiters } { #2 }
  \use:x %
   {
    \exp_not:n {\NewDocumentCommand{#1}{sO{}m} }
     {
      \exp_not:n { \IfBooleanTF{##1} }
       {
        \exp_not:N \egreg_paired_delimiter_expand:nnnn
         { \exp_not:V \l_egreg_delimiter_left_tl }
         { \exp_not:V \l_egreg_delimiter_right_tl }
         { \exp_not:n { ##3 } }
         { \exp_not:V \l_egreg_delimiter_subscript_tl }
       }
       {
        \exp_not:N \egreg_paired_delimiter_fixed:nnnnn 
         { \exp_not:n { ##2 } }
         { \exp_not:V \l_egreg_delimiter_left_tl }
         { \exp_not:V \l_egreg_delimiter_right_tl }
         { \exp_not:n { ##3 } }
         { \exp_not:V \l_egreg_delimiter_subscript_tl }
       }
     }
   }
 }

\keys_define:nn { egreg/delimiters }
 {
  left      .tl_set:N = \l_egreg_delimiter_left_tl,
  right     .tl_set:N = \l_egreg_delimiter_right_tl,
  subscript .tl_set:N = \l_egreg_delimiter_subscript_tl,
 }

\cs_new_protected:Npn \__egreg_delimiter_clear_keys:
 {
  \keys_set:nn { egreg/delimiters } { left=.,right=.,subscript={} }
 }

\cs_new_protected:Npn \egreg_paired_delimiter_expand:nnnn #1 #2 #3 #4
 {%
  \mathopen{}
  \mathclose\c_group_begin_token
   \left#1
   #3
   \group_insert_after:N \c_group_end_token
   \right#2
   \tl_if_empty:nF {#4} { \c_math_subscript_token {#4} }
 }
\cs_new_protected:Npn \egreg_paired_delimiter_fixed:nnnnn #1 #2 #3 #4 #5
 {
  \mathopen{#1#2}#4\mathclose{#1#3}
  \tl_if_empty:nF {#5} { \c_math_subscript_token {#5} }
 }
\ExplSyntaxOff

\XDeclarePairedDelimiter{\supnorm}{
  left=\lVert,
  right=\rVert,
  subscript=\infty
  }

\let\Pr\undefined

\DeclareMathOperator{\Pr}{Pr}

\def\ddefloop#1{\ifx\ddefloop#1\else\ddef{#1}\expandafter\ddefloop\fi}
\def\ddef#1{\expandafter\def\csname bb#1\endcsname{\ensuremath{\mathbb{#1}}}}
\ddefloop ABCDEFGHIJKLMNOPQRSTUVWXYZ\ddefloop
\def\ddefloop#1{\ifx\ddefloop#1\else\ddef{#1}\expandafter\ddefloop\fi}
\def\ddef#1{\expandafter\def\csname b#1\endcsname{\ensuremath{\mathbf{#1}}}}
\ddefloop ABCDEFGHIJKLMNOPQRSTUVWXYZ\ddefloop
\def\ddef#1{\expandafter\def\csname sf#1\endcsname{\ensuremath{\mathsf{#1}}}}
\ddefloop ABCDEFGHIJKLMNOPQRSTUVWXYZ\ddefloop
\def\ddef#1{\expandafter\def\csname c#1\endcsname{\ensuremath{\mathcal{#1}}}}
\ddefloop ABCDEFGHIJKLMNOPQRSTUVWXYZ\ddefloop
\def\ddef#1{\expandafter\def\csname h#1\endcsname{\ensuremath{\widehat{#1}}}}
\ddefloop ABCDEFGHIJKLMNOPQRSTUVWXYZ\ddefloop
\def\ddef#1{\expandafter\def\csname hc#1\endcsname{\ensuremath{\widehat{\mathcal{#1}}}}}
\ddefloop ABCDEFGHIJKLMNOPQRSTUVWXYZ\ddefloop
\def\ddef#1{\expandafter\def\csname t#1\endcsname{\ensuremath{\widetilde{#1}}}}
\ddefloop ABCDEFGHIJKLMNOPQRSTUVWXYZ\ddefloop
\def\ddef#1{\expandafter\def\csname tc#1\endcsname{\ensuremath{\widetilde{\mathcal{#1}}}}}
\ddefloop ABCDEFGHIJKLMNOPQRSTUVWXYZ\ddefloop
\def\ddefloop#1{\ifx\ddefloop#1\else\ddef{#1}\expandafter\ddefloop\fi}
\def\ddef#1{\expandafter\def\csname scr#1\endcsname{\ensuremath{\mathscr{#1}}}}
\ddefloop ABCDEFGHIJKLMNOPQRSTUVWXYZ\ddefloop

\input{widebar}

\usepackage{hyperref}       
\usepackage{url}            
\usepackage{amsmath}        
\usepackage{natbib}         
\usepackage[english]{babel}
\usepackage{graphicx}       
\usepackage{booktabs}
\usepackage{makecell}
\usepackage{hyperref}

\usepackage{math_commands}  
\usepackage{caption}
\usepackage[toc,page,header]{appendix}
\usepackage{minitoc}

\title{Improved Algorithms for Nash Welfare in Linear Bandits}

\author{
  Dhruv Sarkar\thanks{Indian Institute of Technology Kharagpur, \texttt{dhruv.sarkar@kgpian.iitkgp.ac.in}} 
  \and
  Nishant Pandey\thanks{Indian Institute of Technology Kanpur, \texttt{nishantp22@iitk.ac.in}} 
  \and
  Sayak Ray Chowdhury\thanks{Indian Institute of Technology Kanpur, \texttt{sayakrc@iitk.ac.in}} 
}

\usepackage{amssymb,amsfonts,amsmath,amsthm} 
\usepackage{amsmath}
\usepackage{amsfonts} 
\usepackage{mathtools} 
\usepackage{bm} 
\usepackage{mathrsfs}
\usepackage{thmtools}
\usepackage{thm-restate}

\usepackage{enumerate}
\usepackage{comment}
\usepackage{multirow}
\usepackage{xcolor}         
\usepackage{xfrac}

\usepackage{subcaption}

\begin{document}

\maketitle

\begin{abstract}

Nash regret has recently emerged as a principled fairness-aware performance metric for stochastic multi-armed bandits, motivated by the Nash Social Welfare objective. Although this notion has been extended to linear bandits, existing results suffer from suboptimality in ambient dimension $d$, stemming from proof techniques that rely on restrictive concentration inequalities. In this work, we resolve this open problem by introducing new analytical tools that yield an order-optimal Nash regret bound in linear bandits. Beyond Nash regret, we initiate the study of $p$-means regret in linear bandits, a unifying framework that interpolates between fairness and utility objectives and strictly generalizes Nash regret. We propose a generic algorithmic framework, FairLinBandit, that works as a meta-algorithm on top of any linear bandit strategy. We instantiate this framework using two bandit algorithms: Phased Elimination and Upper Confidence Bound, and prove that both achieve sublinear $p$-means regret for the entire range of $p$. Extensive experiments on linear bandit instances generated from real-world datasets demonstrate that our methods consistently outperform the existing state-of-the-art baseline.Our experiments can be reproduced using the following code: \url{https://github.com/NP-Hardest/FairLinBandit}.
\end{abstract}

\section{Introduction}

In the classical stochastic multi-armed bandit (MAB) problem, the learner is typically evaluated by the arithmetic mean of expected rewards, or equivalently, by the standard notion of cumulative regret \citep{auer2002finite}. While this objective is well-suited for maximizing total utility, it is inherently insensitive to how rewards are distributed across arms or agents. For instance, in a setting with two agents, a policy that allocates nearly all resources to one agent with slightly higher expected reward may achieve near-optimal arithmetic regret, even if the other agent receives almost no reward. Such behavior is particularly problematic in high-stakes domains, such as clinical trials \citep{tewari2017ads}, where extreme disparities across patients are unacceptable, and fairness considerations are as critical as performance.

To address this limitation, a recent work has proposed \emph{Nash regret} as a fairness-aware performance metric for stochastic bandit problems \citep{barman2023fairness}. Motivated by the Nash Social Welfare (NSW) objective --  a metric well-known in the literature for satisfying key fairness axioms \citep{moulin2004fair} -- Nash regret evaluates performance using the geometric mean of expected rewards rather than the conventional arithmetic mean. As a result, policies that severely disadvantage any agent incur large Nash regret, making the metric particularly effective at balancing efficiency and fairness. 

The \emph{$p$-means regret}, $p \in \mathbb{R}$, serves as a principled and flexible generalization of Nash regret \citep{krishna2025p}. The $p$-means welfare evaluates performance through the power mean of agents' expected rewards, thereby interpolating between different fairness--utility trade-offs as $p$ varies. In particular, smaller values of $p$ place greater emphasis on fairness by penalizing low-reward agents more strongly, while larger values of $p$ increasingly favor aggregate utility, recovering the standard arithmetic objective for $p=1$. This unifying perspective subsumes Nash regret as a special case ($p=0$) and enables a systematic study of fairness-aware learning objectives within a single analytical framework. 

The Nash/$p$-means regret minimization problem has been studied for the stochastic $k$-armed bandit problem, and (almost) optimal regret bounds have been achieved \citep {barman2023fairness,krishna2025p,sarkar2025revisiting}. However, the problem becomes substantially more challenging in the linear bandit setting since the number of arms is infinite and arms' rewards are dependent on each other. Here, each arm corresponds to a $d$-dimensional vector $x \in \mathbb{R}^d$, and its expected reward is linear in $x$, e.g.,  $\langle x, \theta^* \rangle$, for an unknown parameter vector $\theta^* \in \mathbb{R}^d$. \citet{sawarni2023nash} were the first to analyze Nash regret for linear bandits. Their upper bound suffers from a strictly suboptimal dependence on the ambient dimension, scaling as $d^{5/4}$, whereas a known lower bound scales linearly with $d$ \citep{dani2008stochastic}. This limitation is not merely a technical artifact of the analysis; rather, it stems from a fundamental bottleneck in algorithm design.

Consequently, the problem of attaining \emph{order-optimal Nash regret} in linear bandits remained unresolved and was explicitly posed as an open question by \citet{sawarni2023nash}. At the same time, the broader problem of \emph{minimizing $p$-means regret} in linear bandits had remained open. Against this backdrop, we make the following contributions:

(i) We resolve the open question of \citet{sawarni2023nash} by introducing a meta-algorithm, \textsc{FairLinBandit}, to minimize Nash regret in linear bandits. We show that a carefully designed initial exploration phase with a data-adaptive stopping rule, followed by any optimal linear bandit algorithm for utility maximization, is sufficient to achieve \emph{order-optimal} Nash regret $\widetilde O(d/\sqrt T)$. 

(ii) We achieve this result with new analytical techniques that overcome the previously unavoidable dimensional barriers. In particular, we depart from prior Nash regret analysis that requires reward-estimate dependent confidence intervals, necessitating restrictive multiplicative concentration inequalities \citep{sawarni2023nash}. Instead, we work with UCB-style confidence intervals independent of estimated rewards, enabled by our novel data-adaptive stopping rule for the exploration phase.

(iii) Generalizing the techniques further, we show that \textsc{FairLinBandit} enjoys a $p-$means regret of $\widetilde O(d/\sqrt T)$ for $p\geq0$ and $\widetilde{O}(d/\sqrt{T}\cdot d^{|p|/2}\cdot\max\{1,|p|\})$ for $p<0$. To the best of our knowledge, this is the first work to study $p$-means regret minimization in linear bandits, substantially broadening the scope and practical applicability of fairness-aware sequential learning algorithms.

(iv) Beyond yielding optimal dimension dependence, our approach broadens the applicability of welfare quantification in linear bandits to sub-Gaussian reward distributions, moving away from only non-negative (e.g., sub-Poisson) reward distributions considered in prior work \citep{sawarni2023nash}. Moreover, our meta-algorithm \textsc{FairLinBandit} enables a generic reduction framework for Nash and $p$-means regret minimization 
with \emph{any} optimistic linear bandit algorithm originally designed for average regret minimization.

(v) We instantiate \textsc{FairLinBandit} with two complementary algorithms: Linear Phased Elimination (\textsc{LinPE}) and Linear Upper Confidence Bound (\textsc{LinUCB}). We evaluate both variants on bandit instances derived from MSLR-WEB10K and Yahoo! Learning to Rank Challenge datasets. The results demonstrate that our algorithms consistently outperform the state-of-the-art \textsc{LinNash} algorithm of \citet{sawarni2023nash}, achieving faster convergence and improved stability across a range of experimental settings.

A survey on related work is presented in Appendix, Sec.~\ref{sec:relworks}.

\section{Problem Statement}\label{sec:prob-state}

In the stochastic linear bandit problem, a learner interacts with the environment over $T$ rounds.  The learner is given an action set $\mathcal{X} \subset \mathbb{R}^d$ is compact, spans $\mathbb{R}^d$. Each arm $x \in \mathcal{X}$ is associated with a stochastic reward $r_x$ and its expected value $\E[r_x | x]=\langle x, \theta^* \rangle$ is a linear function of $x$ for some unknown parameter vector $\theta^* \in \mathbb{R}^d$. At each round $t$, the learner selects an arm $x_t \in \mathcal{X}$ and observes the corresponding reward $r_t:=r_{x_t}$. Let 
$x^* \in \arg\max_{x \in \mathcal{X}} \langle x, \theta^* \rangle$ denotes an optimal arm with the highest expected reward. The learner’s goal is to choose arms over a time horizon $T$ to minimize the 
\emph{Nash regret}, defined as
\begin{align*}\label{eq:nash}
\NRg_T 
:= \max_{x \in \mathcal{X}} \langle x, \theta^* \rangle 
- \left( \prod\nolimits_{t=1}^T \mathbb{E}\!\left[ \langle x_t, \theta^* \rangle \right] \right)^{1/T}.
\end{align*}
Unlike standard (average) regret, which is based on the arithmetic mean of expected rewards, Nash regret evaluates performance via the Nash Social Welfare (NSW) objective, i.e., the geometric mean of expected rewards. Defining the standard regret as
$\ARg_T 
:= \max_{x \in \mathcal{X}} \langle x, \theta \rangle 
- \frac{1}{T}\sum_{t=1}^T \mathbb{E}\!\left[ \langle x_t, \theta^* \rangle \right]$,
the AM--GM inequality implies that $\NRg_T \ge \ARg_T$ \citep{barman2023fairness}. Consequently, minimizing Nash regret is more challenging than minimizing average regret, while offering a stronger notion of fairness by discouraging policies that select arms with very low expected reward over the horizon.

The $p$-means regret, parameterized by $p \in \mathbb{R}$, generalizes Nash regret by evaluating performance via the power mean of expected rewards. It is defined as
\begin{equation*}\label{eq:pmean-reg}
    \Rg^p_T 
    := \max_{x \in \mathcal{X}} \langle x, \theta \rangle  
    - \left( \frac{1}{T}\sum\nolimits_{t=1}^T \bigl(\mathbb{E}[\langle x_t, \theta \rangle]\bigr)^p \right)^{\!1/p}.
\end{equation*}
It captures a broad spectrum of fairness--utility trade-off, with the parameter $p$ maintaining the balance between these two conflicting objectives \citep{krishna2025p}.

For $p>1$, the $p$-mean regret increasingly emphasizes rounds (e.g., individuals) with high rewards. In the limit $p \to \infty$, it becomes fully utilitarian, focusing solely on the best-off individual. At $p=1$, it reduces to standard (arithmetic) regret, focusing on average utility over $T$ rounds. For $p<1$, fairness considerations become more prominent: the power mean satisfies the Pigou--Dalton transfer principle \citep{moulin2004fair}, favoring reward redistributions from better-off to worse-off rounds. In particular, $p$-means regret recovers Nash regret for $p=0$, focusing on average welfare across $T$ rounds. As $p$ decreases further, it increasingly emphasizes rounds with low expected rewards. In the limit $p \to -\infty$, it becomes fully Rawlsian, focusing solely on the worst-off individual. Overall, smaller values of $p$ place greater emphasis on fairness, whereas larger values prioritize utility, with $p \le 1$ being the primary regime of interest for fairness-aware learning objectives.

Throughout this work, we make the following assumptions.

\begin{assumption}\label{ass:all} 
We assume that the action set $\mathcal{X} \subset \mathbb{R}^d$ is compact and spans $\mathbb{R}^d$.
The parameter vector and the arms are norm-bounded, i.e., 
$\|\theta^*\|_2 \le 1$ and $\|x\|_2 \le 1$ for all $x \in \mathcal{X}$. 
For each arm $x$, its observed reward $r_x$ is independent across realizations and is assumed to be $\sigma$-sub-Gaussian conditioned on $x$ with mean $\langle x, \theta^* \rangle$, namely,
\begin{align*}
\mathbb{E}\!\left[\exp\!\left(\zeta \bigl(r_x - \langle x, \theta^* \rangle \bigr)\right)| x\right]
\le \exp\!\left(\sigma^2 \zeta^2/2\right)~,
\quad \forall \zeta \in \mathbb{R}.
\end{align*}
The expected rewards are non-negative, i.e., 
$\langle x, \theta^* \rangle \ge 0$ for all $x \in \mathcal{X}$.
\end{assumption}
The norm-boundedness and sub-Gaussian reward assumptions are standard in the linear bandit literature \citep{chu2011contextual,abbasi2011improved}. The norm bounds can be relaxed to any other constants. The assumption of non-negative \emph{expected} rewards $\E[r_x]$ is standard in the fairness-aware bandit literature \citep{barman2023fairness,krishna2025p}. It is necessary to define objectives, such as Nash regret and $p$-means regret, that rely on the geometric mean or the power mean. It also naturally arises in social welfare settings such as clinical trials or resource allocation, where arms are pre-screened to be, on average, beneficial.

\begin{remark}
Unlike prior work on fairness-aware linear bandits \citep{sawarni2023nash}, we do \emph{not} assume non-negative (random) realized rewards $r_x$. Allowing negative rewards (e.g., sub-Gaussian) is essential for modeling realistic outcomes; e.g., a treatment may be beneficial on average but can cause negative side effects in some instances.
\end{remark}
\textbf{Notations.} 
Given a sample space $\Omega$, the probability simplex $\Delta(\Omega)$ on $\Omega$ and a discrete probability distribution $\lambda \in \Delta(\Omega)$, we define its support as 
$\text{Supp}(\lambda) = \{ x \in \Omega : \Pr_{X \sim \lambda}\{X = x\} > 0 \}$. 
For a matrix $V \!\in\!\mathbb{R}^{d \times d}$ and a vector $a \in \mathbb{R}^d$, 
$\|a\|_V = \sqrt{a^\top V a}$ denotes the $V$-weighted norm.

\section{Our Algorithm: FairLinBandit}

In this section, we present \textsc{FairLinBandit} (Algorithm~\ref{algo:linwelfare}), a meta-algorithm for minimizing Nash and 
$p$-mean regret in linear bandits. The framework adopts a two-phase structure. The first phase focuses on 
sufficiently exploring the action set $\mathcal{X}$ while ensuring that expected rewards remain bounded away 
from zero, thereby preventing the geometric mean from collapsing. The second phase uses a regret-minimizing algorithm for linear bandits and exploits information acquired during Phase~I while doing selective exploration.


\subsection{Phase I (Exploration and Warm-up)} 
\label{subsec:phase1}

The goal of this phase is to obtain an accurate estimate of the unknown parameter $\theta^*$ while identifying 
a set of ``good'' arms and ensuring that the geometric mean of expected rewards remains bounded away from zero. 
To achieve this, we interleave two complementary geometric strategies: 

\textbf{(i) D-optimal design for information maximization.} To minimize uncertainty in estimating $\theta^*$, we find a distribution $\lambda^* \in \Delta(\mathcal{X})$ with support over at most $d(d+1)/2$ arms that maximizes $\log \det\!\big(U(\lambda)\big)$, where $U(\lambda)= \sum_{x \in \mathcal{X}} \lambda_x x x^\top$ denotes the design matrix. The objective is concave, allowing efficient computation of an optimal solution $\lambda^*$. To implement this design (known as the D-optimal design), we adopt a round-robin allocation for $\widetilde T$ rounds over the support of $\lambda^*$. Specifically, we pull each arm $z \in \text{Supp}(\lambda^*)$ at least $\lceil \lambda^*_z \widetilde{T}/3 \rceil$ times, which ensures that the empirical design matrix $V$ satisfies $V \succeq \frac{\widetilde{T}}{3} U(\lambda^*)$. By Kiefer--Wolfowitz Theorem, $\lambda^*$ also minimizes the maximum predictive variance $g(\lambda)=\max_{x \in \ca{X}} ||x||^2_{U(\lambda)^{-1}}$ (the G-optimal design objective), with $g(\lambda^*) = d$  \citep{lattimore2020bandit}. As a consequence, for any arm $x \in \mathcal{X}$, the empirical predictive variance $x^\top V^{-1} x$ remains bounded by $3d/\widetilde{T}$. 

\textbf{(ii) John ellipsoid-based robust initialization.} While arms pulled following D-optimal design guarantees uncertainty reduction, naively following it may lead to selecting arms with very small expected rewards, causing the geometric mean of rewards—and hence the Nash Social Welfare objective—to collapse. To prevent this, we incorporate a robust initialization strategy. Let $\text{conv}(\mathcal{X})$ be the convex hull of the action set $\mathcal{X} \subset \mathbb{R}^d$ and $E$ the John ellipsoid of $\text{conv}(\mathcal{X})$ with center $c$, satisfying $E \subseteq \text{conv}(\mathcal{X}) \subseteq c + d(E - c)$. Via Caratheodory’s theorem, the center $c$ can be expressed as a convex combination of $d+1$ arms in $\cX$, inducing a distribution $\rho \in \Delta(\cX)$ such that $\mathbb{E}_{x \sim \rho}[x] = c$ \citep{eckhoff1993helly}. This guarantees an expected reward $\E[r_x]$ of at least $\langle x^*, \theta^* \rangle/(d+1)$, when $x \sim\rho$ \citep{sawarni2023nash}.

At each round of Phase~I, similar to \citet{sawarni2023nash}, our subroutine \textsc{PullArms} (Algorithm~\ref{alg:Part1}) flips a fair coin: with probability $1/2$, it selects an arm according to a D-optimal design (via 
round-robin allocation) to minimize estimation uncertainty, and with probability $1/2$, it samples an arm from 
the John ellipsoid distribution $\rho$ to preserve welfare. By interleaving the D-optimal design and the John ellipsoid distribution with equal probability, Phase~I simultaneously achieves efficient exploration and welfare preservation.

\begin{algorithm}[t]
    \small
    \caption{\textsc{FairLinBandit} (Meta-algorithm)}
    \label{algo:linwelfare}
    \flushleft\textbf{Input:} Arm set $\cal X$, time horizon $T$, fairness parameter $p \in \mathbb{R}$, sub-Gaussian parameter $\sigma$.
    \begin{algorithmic}[1]
        \STATE Initialize $t=1$, $V = [0]_{d\times d}$, $s = [0]_d$ and $\thth = [0]_d$. 
        \STATE \textit{// Phase I}
        \STATE Find the D-optimal design $\lambda^*\!\in\! \Delta(\ca{X})$
        and John ellipsoid distribution $\rho \!\in\! \Delta(\ca{X})$ as described in Section \ref{subsec:phase1}, Phase I. 
        \STATE Set $p=1$ if $p \geq -1$ and $\widetilde{{T}} = \lceil 72\log T \rceil$.
        \WHILE{ $t \!\le\! \frac{48 \sigma^2 d^2\log T}{\left(\!\max\limits_{x\in \mathcal{X}}\langle x,\thth\rangle \!\right)^2}$ \OR $ t \!\le\! \frac{900 p^2 \sigma^2 d^2\log T}{\left(\!\max\limits_{x\in \mathcal{X}}\langle x,\thth\rangle -\sqrt{\frac{48\sigma^2d^2\log T}{t}}\!\right)^2}$}
            \STATE Update $(\thth, s, V) \gets$ \textsc{PullArms}($\rho, \lambda^*, \widetilde{T}, s, V$)
            \STATE Set $t \gets t+\widetilde{T}$ and $\widetilde{T} \gets 2\widetilde{T}$.
        \ENDWHILE
        \STATE \textit{// Phase II}
        \STATE Set $\tau = \widetilde T/2$ and run \textsc{LinPE} ($\tau,s,V$) or \textsc{LinUCB} ($\tau,s,V$) for the remaining $T-\tau$ rounds.
    \end{algorithmic}
\end{algorithm}

The above interleaving strategy in Phase~I, along with the so-called Nash confidence bound (e.g., a confidence bound that depends on reward estimates)-based optimistic algorithm in Phase~II, however, doesn't guarantee an order-optimal Nash regret \citep{sawarni2023nash}. While the authors conjecture that one would require fundamentally new analytical techniques that move beyond estimate-dependent confidence bounds, it was not immediately clear how to design such a technique that ensures the geometric mean of expected rewards doesn't collapse. We resolve this problem by introducing a novel data-driven terminating condition for Phase~I that facilitates the use of estimate-independent confidence bounds (e.g., the upper confidence bound) in Phase~II.

\textbf{Data-adaptive terminating condition.} While the \textsc{LinNash} algorithm of \citet{sawarni2023nash} runs Phase~I for a fixed, pre-decided number of rounds, we employ a \emph{variable-length} exploration phase governed by a data-adaptive stopping rule. 
Specifically, Phase~I of \textsc{FairLinBandit} continues as long as time index $t$ and the least-squares parameter estimate $\thth$ at time $t$ satisfy
\begin{align}\label{eqn:termcond}
t \!\le\! \max\left(\frac{48 \sigma^2 d^2\log T}{\left(\!\max\limits_{x\in \mathcal{X}}\langle x,\thth\rangle \!\right)^2},\frac{900 p^2 \sigma^2 d^2\log T}{\left(\!\max\limits_{x\in \mathcal{X}}\langle x,\thth\rangle -\sqrt{\frac{48\sigma^2d^2\log T}{t}}\!\right)^2}\right)
\end{align}
At first glance, this adaptive stopping rule might appear incompatible with the round-robin allocation used to implement D-optimal design, which typically assumes prior knowledge of the exploration length $\widetilde{T}$. 
We resolve this tension via a doubling schedule. Phase~I is initialized with $\lceil 72 \log T \rceil$ rounds and subsequently proceeds in epochs of geometrically increasing length. Importantly, the epochs are never restarted: all observations are retained, and the sufficient statistics $V$ and $s$ are continuously updated by accumulating $x_t x_t^\top$ and $r_t x_t$ at every round within the \textsc{PullArms} subroutine. The terminating condition~\eqref{eqn:termcond} is checked only at the end of each epoch, with the time between each evaluation increasing geometrically.

This data-adaptive design ensures that Phase~I terminates after $\tau=\widetilde \Theta\big(\!\frac{d^2}{\langle \thta, x^* \rangle^2}\!\big)$ rounds, which crucially helps us work with the upper confidence bound (UCB) interval in Phase~II instead of the Nash confidence bound (NCB) interval. Since the UCB interval 
turns out to be tighter than the NCB interval, we obtain a tighter, order-optimal Nash regret.


The termination condition~\eqref{eqn:termcond} is also central to bounding $p$-means regret for all 
$p \in \mathbb{R}$—with Nash regret as a special case—within a unified algorithmic framework. To this end, we 
normalize the fairness parameter to $p=1$ whenever $p \ge -1$, since the Phase~I stopping condition turns out to be 
independent of $p$ in this regime.

\subsection{Phase II (Explore-exploit using Bandit Algorithms)}

\begin{algorithm}[t]
    \small	\caption{\textsc{PullArms} (Subroutine in Phase~I)}
    \textbf{Input:} Distributions $\rho, \lambda^*$, statistics $s,V$, sequence length $\widetilde{{T}}$ 
    
    \begin{algorithmic}[1]
        \STATE Initialize counts $c_z=0, \ \forall \ z \in \mathcal{X}$
        \FOR{$t =1 $ to $\widetilde{{T}}$} \label{line:for_loop1}
        \STATE With probability $1/2$ set  \texttt{flag} = \textsc{U}, else, set \texttt{flag} = \textsc{D}.
        \IF { \texttt{flag} = \textsc{U}  or $\cA = \emptyset$}
        \STATE Sample $x_t$ randomly from the distribution $\rho$. \label{line:sample_from_U}
        \ELSIF{ \texttt{flag} = \textsc{D}}
        \STATE Pick the next arm $z$ in ${\cal X}$ (round robin) and set $x_t = z$.
        \STATE Set $c_z \leftarrow c_z + 1 $.
        \STATE If $c_z \geq \lceil \lambda^*_z \ \widetilde{{T}}/3 \rceil$, then
        update ${\cal X} \leftarrow {\cal X} \setminus \{z\}$.
        \ENDIF
        \STATE Pull arm $x_t$, observe reward $r_t$
        \STATE Update $V \leftarrow V + x_t x_t^\T$, $s \gets s+r_t x_t$.
        \ENDFOR
        \STATE Compute estimate $\thth = V^{-1}s$ \label{line:initial_estimate}
        \STATE \textbf{return} $ \thth, s, V$
    \end{algorithmic}
    \label{alg:Part1}
\end{algorithm}

In this phase, the goal is to identify near-optimal arms and efficiently converge to the optimal action $x^*$. Our meta-algorithm \textsc{FairLinBandit} permits the use of any standard optimistic linear bandit algorithm for the remaining $(T-\tau)$ rounds in 
Phase~II, initialized with the sufficient statistics $V$ and $s$ obtained from Phase~I. In this 
work, we instantiate \textsc{FairLinBandit} using two representative algorithms: (a) Linear Upper Confidence Bound (\textsc{LinUCB}) and (b) Linear Phased Elimination (\textsc{LinPE}).

\begin{algorithm}[t]
    \small	\caption{\textsc{LinUCB} (Subroutine in Phase~II)}
    
     \textbf{Input:} Phase I length $\tau$, statistics $s, V$, regularizer $\alpha$.
     
    \begin{algorithmic}[1]
          \STATE Initialize $\overline V_\tau = V +\alpha I_{d}$ and $s_\tau = s$.
        \FOR{$t = \tau+1,\ldots,T$}
            \STATE Compute regularized estimate $\thth_{t-1} = \overline V_{t-1}^{-1}s_{t-1}$.
            \STATE Pull arm $x_t = \arg\max_{x\in \cal X} \left\lbrace \langle x,\thth_{t-1} \rangle + \beta_{t-1} \| x \|_{\overline V_{t-1}^{-1}}\right\rbrace$, where $\beta_{t-1} \!=\! \sigma \sqrt{d \log (1+\frac{t-1}{d\alpha})\!+\!2\log T}\!+\!\sqrt{\alpha}.$
            \STATE Observe reward $r_t$.
            \STATE Update $\overline V_t = \overline V_{t-1} + x_t x_t^\T$ and $s_t = s_{t-1} + r_t x_t$.
        \ENDFOR
    \end{algorithmic}
    \label{alg:Part2_UCB}
\end{algorithm}

\textbf{(b) \textsc{LinUCB}:} This subroutine (Algorithm~\ref{alg:Part2_UCB}) follows the \emph{optimism in the face of uncertainty} principle \citep{abbasi2011improved}.
At the start of Phase~II (time $\tau$), \textsc{LinUCB} sets $s_\tau = s$ and $\overline V_\tau = V + \alpha I_d$. for some regularizer $\alpha > 0$. At each subsequent round, it selects the most optimistic arm 
$x_{t+1}= \arg\max_{x \in \cal X} \max_{\theta \in E_t}\dotprod{x}{\theta}$ over the ellipsoid $E_t = \{ \theta: \|\theta-\thth_{t}\|_{\overline V_{t}}\le \beta_t\}$ with center $\widehat{\theta}_{t} = \overline V_{t}^{-1} s_{t}$ and radius $\beta_t$. 
The radius $\beta_t$ is set approximately as 
$O(\sigma\sqrt{d\log t})$ so that the unknown parameter $\theta^*$ lies in the ellipsoid $E_t$ for all rounds $t$ of Phase~II with high probability. Consequently, for each of the expected rewards $\langle x, \thta \rangle$, we obtain the UCB $\dotprod{x}{\widehat{\theta}_{t}} + \beta_t \|x\|_{\overline V_{t}^{-1}}$, which trades off exploitation with exploration effectively, where
the arm selected at each round admits the highest UCB.

Intuitively, the extensive exploration in Phase~I yields well-conditioned matrices $\overline V_t$ in Phase~II, resulting in tight confidence ellipsoids. Consequently, the width $\beta_t \|x\|_{V_{t-1}^{-1}}$ of the UCB interval remains at most $\widetilde O(d/\sqrt{\tau})$ for all arms $x$, which eventually leads to order-optimal Nash regret.

\begin{remark}
The NCB intervals used by \citet{sawarni2023nash} depend explicitly on estimated rewards, resulting 
in confidence widths that scale as $d^{5/4}$ in the dimension $d$ and yielding suboptimal Nash 
regret. The motivation for using NCB stems from the concern that UCB intervals may exceed the optimal reward 
$\langle x^*, \theta^* \rangle$, potentially causing the geometric mean to collapse by selecting arms with very 
small expected rewards. In contrast, our choice of Phase~I length 
$\tau = \widetilde \Theta\!\left(\frac{d^2}{\langle \theta^*, x^* \rangle^2}\right)$ ensures that the UCB width 
$\widetilde{O}(d/\sqrt{\tau})$ does not exceed $\langle x^*, \theta^* \rangle$, eliminating this concern.
\end{remark}



\textbf{(b) \textsc{LinPE}:} This subroutine (Algorithm~\ref{alg:Part2_PE}) follows an episodic, doubling-window approach \citep{chu2011contextual}. After Phase~I, we construct a 
surviving set of arms $\widetilde{\cal X}$ consisting of those that are sufficiently close to the current 
empirical leader $\gamma = \arg\max_{z \in \cal X} \dotprod{z}{\thth}$. In particular, we retain only those 
arms satisfying $\dotprod{x}{\thth} \ge \gamma - 8\sqrt{\frac{d^2\sigma^2 \log T}{\tau}}$. Phase~II then proceeds in episodes of geometrically increasing length using a doubling trick. In each episode, \textsc{LinPE} solves a new D-optimal design problem restricted to the surviving set $\widetilde{\cal X}$ and pulls 
each arm in the support of the resulting design for a number of rounds proportional to its weight $\lambda^*_a$. 
At the end of each episode, the estimate $\thth$ is updated using the newly observed rewards, the leader 
$\gamma$ is recomputed, and arms that no longer satisfy the confidence condition are eliminated.

Intuitively, once Phase~I provides a sufficiently accurate estimate of $\theta^*$, Phase~II focuses on efficiency: by restricting attention to $\widetilde{\cal X}$, every arm pulled is near-optimal. The doubling schedule allows the algorithm to progressively refine its estimates, ensuring an order-optimal Nash regret.

\begin{remark}
\textsc{LinPE} updates its parameter estimates only at the end of each episode, using them to eliminate 
suboptimal arms. In the next episode, it operates on the surviving set $\widetilde{\cX}$, but discards all previously collected information. As a result, \textsc{LinPE} typically converges more slowly than 
\textsc{LinUCB}, which updates its estimates at every round while retaining the entire history. However, since 
\textsc{LinPE} restricts arm selection to the shrinking set of ``good'' arms $\widetilde{\cX}$, it enjoys 
significantly lower time complexity than \textsc{LinUCB}, which always selects from the full arm set 
$\cX$. These trade-offs are confirmed by our experimental results.
\end{remark}

\begin{algorithm}[t]
    \small	\caption{\textsc{LinPE} (Subroutine in Phase~II)}

    \textbf{Input:} Phase I length $\tau$, statistics $s, V$.
    
    \begin{algorithmic}[1]
         \STATE Initialize $t =\tau + 1$, $T' = \frac{2}{3} \ \tau$ and $\thth = V^{-1}s$.
        \STATE Find $\widetilde{\cal{X}} = \left \{ x \in {\cal{X}}: \dotprod{x}{\thth} \geq \max\limits_{z \in \cX} \dotprod{z}{ \thth} - 8\sqrt{\frac{d^2\sigma^2\log T}{\tau}}\right \}$.
           \WHILE{$t \le T$}
            \STATE Set $V \gets [0]_{d \times d}$, $s\gets [0]_{d}$ and find the D-optimal design $\lambda^*\in \Delta(\widetilde{\cal{X}})$ as described in Section~\ref{subsec:phase1}.
            \FOR{all $a \in \s{Supp}(\lambda)$ }
                \STATE Pull arm $a$ for the next $N_a = \lceil \lambda^*_a \ T' \rceil$ rounds. 
                \STATE Observe $N_a$ rewards $z_1, \dots, z_{N_a}$ and set $t \gets t + N_a$.
                \STATE Update $V \leftarrow V + N_a \cdot aa^\T$ and $s\leftarrow s+(\sum_{j=1}^{N_a} z_j)a$.
            \ENDFOR
            \STATE Compute $\thth = V^{-1}s$.
            \STATE Find $\widetilde{\cal{X}} \!=\! \left \{ x \in {\cal{X}}: \dotprod{x}{\thth} \!\geq\! \max\limits_{z \in \cX} \dotprod{z}{\thth}\!-\! 8\sqrt{\frac{d^2\sigma^2\log T}{T'}}\right\}$.\label{line:elimination_criterial_infinite}
            \STATE Update $T' \leftarrow 2T'$.
        \ENDWHILE 
    \end{algorithmic}
    \label{alg:Part2_PE}
\end{algorithm}

\section{Theoretical Results}

The following theorem establishes an upper bound on the Nash regret of \textsc{FairLinBandit}. Both Phase~II 
instantiations, \textsc{LinUCB} and \textsc{LinPE}, achieve the same order-wise bound (differing only in 
constants) since both rely on additive confidence widths, while the primary contribution to the regret bound 
is driven by Phase~I.

\begin{restatable}{theorem}{TheoremImprovedNashRegret}\textup{(Nash Regret of \textsc{FairLinBandit})}
\label{theorem:improvedNashRegret}
Fix an action set $\cX \subset \mathbb{R}^d, d \in \mathbb{N}$ and a moderately large time horizon $T$. Then, under Assumption~\ref{ass:all}, \textsc{FairLinBandit}, instantiated with either \textsc{LinPE} or \textsc{LinUCB} (with a regularizer $\alpha = O(1)$), enjoys a Nash regret 
\begin{align*}
\NRg_T = O \left(\frac{\sigma d\,\log T}{\sqrt{T}}\right).
\end{align*}
\end{restatable}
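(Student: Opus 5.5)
We plan to follow the standard two-phase decomposition of the geometric mean, with the data-adaptive stopping rule \eqref{eqn:termcond} supplying the bound on the Phase~I length that \citet{sawarni2023nash} lacked. Write $\mu^*=\langle x^*,\theta^*\rangle$. First, we define a good event $\mathcal{G}$ of probability at least $1-O(1/T)$. On $\mathcal{G}$, at the end of every Phase~I epoch the least-squares estimate satisfies $|\langle x,\thth-\theta^*\rangle|\le \sqrt{48\sigma^2 d^2\log T/t}$ for all $x\in\cX$. This follows from the G-optimal guarantee $\|x\|^2_{V^{-1}}\le 3d/t'$ (where $t'$ is the current epoch length), a sub-Gaussian tail bound, and a union bound over the support of $\lambda^*$ and the $O(\log T)$ epochs. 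The $d^2$ factor, rather than $d$, absorbs the fact that the design only covers the latest epoch. On $\mathcal{G}$ we also have the Phase~II confidence bounds: $\theta^*\in E_t$ for \textsc{LinUCB} (self-normalized bound of \citet{abbasi2011improved}), and the elimination guarantees for \textsc{LinPE}. Off $\mathcal{G}$ we only use nonnegativity of the expected rewards. Since $\prod_t\E[\langle x_t,\theta^*\rangle]\ge\prod_t \Pr(\mathcal{G})\E[\langle x_t,\theta^*\rangle\mid\mathcal{G}]$, this costs a factor $(1-O(1/T))$, i.e.\ an additive $O(\mu^*/T)$ term.

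Second, we show that on $\mathcal{G}$ the stopping time satisfies $\tau=\Theta(\sigma^2d^2\log T/(\mu^*)^2)$, capped at $T$. For the upper bound: once $t\ge C\sigma^2d^2\log T/(\mu^*)^2$ for a suitable constant $C$, the estimation error is at most $\mu^*/c$, so $\max_x\langle x,\thth\rangle\ge\mu^*(1-1/c)$ and both terms in \eqref{eqn:termcond} fail; doubling at most doubles the overshoot. For the lower bound: while the loop continues, the first term forces $t\lesssim \sigma^2d^2\log T/\max_x\langle x,\thth\rangle^2$. At stopping, $\max_x\langle x,\thth\rangle\le\mu^*+\sqrt{48\sigma^2d^2\log T/\tau}$, and together with the failed second condition (with $p=1$) this gives $\sqrt{\sigma^2d^2\log T/\tau}\le \mu^*/c'$. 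This last fact is the crucial one: all confidence widths after Phase~I are at most a constant fraction of $\mu^*$.

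Third, we bound the welfare of each phase. In Phase~I every round samples from $\rho$ with probability $1/2$, so $\E[\langle x_t,\theta^*\rangle]\ge \mu^*/(2(d+1))$. Its contribution to the geometric mean is therefore at least $(\mu^*)^{\tau/T}(2(d+1))^{-\tau/T}\ge \mu^*(1-\frac{\tau}{T}\log(2d+2))$. Because $\tau=O(\sigma^2d^2\log T/(\mu^*)^2)$, multiplying by $\mu^*$ gives a regret term $O(\sigma^2d^2\log T\log d/(\mu^*T))$. This is below $\sigma d\log T/\sqrt T$ whenever $\mu^*\gtrsim \sigma d\log d/\sqrt T$. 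In the opposite case the theorem is trivial, since $\NRg_T\le\mu^*$.

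In Phase~II, on $\mathcal{G}$, each pulled arm has gap at most $O(\beta\|x\|_{\overline V^{-1}})$. Since $\overline V_t\succeq V_\tau$, this is at most $O(\sigma d\sqrt{\log T/\tau})\le\mu^*/2$, so every Phase~II expected reward is at least $\mu^*/2$. This lets us use $\log(1-u)\ge -2u$ for $u\le 1/2$, so the Phase~II geometric mean is at least $\mu^*\exp(-\frac{2}{T\mu^*}\sum_t \Delta_t)$. The sum $\sum_t\Delta_t$ is the ordinary regret, $O(\sigma d\sqrt{T}\log T)$ for \textsc{LinUCB} via the elliptical potential lemma, and similarly for \textsc{LinPE} via per-episode width $8\sqrt{d^2\sigma^2\log T/T'}$ summed over doubling episodes. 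Combining gives $\mu^*(1-O(\sigma d\log T/(\mu^*\sqrt T)))$, i.e.\ an additive $O(\sigma d\log T/\sqrt T)$.

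The main obstacle is the second step: making the stopping-time sandwich hold rigorously when the confidence radius used in \eqref{eqn:termcond} is in terms of the cumulative $t$ while the D-optimal guarantee covers only the latest epoch. Handling the doubling epochs and the union bound over random epoch count also needs care. We will take this as a lemma and expect to adjust constants there. The rest is the product/log expansion above.
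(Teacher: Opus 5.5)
Your outline is correct and is essentially the paper's proof. It uses the same good-event conditioning via nonnegativity of $\langle x_t,\theta^*\rangle$ and the same sandwich $\tau=\Theta(\sigma^2d^2\log T/(\mu^*)^2)$ extracted from the stopping rule; your argument at the stopping time itself, combining the failed second condition with $\max_x\langle x,\widehat\theta\rangle\le\mu^*+\sqrt{48\sigma^2d^2\log T/\tau}$, is a more direct version of Lemma~\ref{lem:phase1Length}. It also uses the same John-ellipsoid bound $\mu^*/(2(d+1))$ in Phase~I. In Phase~II it uses the same per-round gaps bounded by a constant fraction of $\mu^*$, followed by linearization and the elliptical-potential (or doubling-episode Cauchy--Schwarz) sum.

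One correction in the concentration step. The second factor of $d$ in $\sqrt{48\sigma^2d^2\log T/t}$ is not slack for the epoch structure. On the Chernoff event, each completed epoch of length $\widetilde T$ contributes at least $\frac{\widetilde T}{3}U(\lambda^*)$ in the Loewner order, so $V_t\succeq\frac{t}{3}U(\lambda^*)$ cumulatively. The extra $d$ comes instead from needing the bound uniformly over the infinite set $\mathcal{X}$, for example at the data-dependent maximizer of $\langle x,\widehat\theta\rangle$. A union bound over $\mathrm{Supp}(\lambda^*)$ controls only finitely many directions, so you must add one of two steps:
\begin{itemize}
\item transfer the bound to all $x$ via $x=\sum_{z}\lambda^*_z\, z\, z^\top U(\lambda^*)^{-1}x$ and Cauchy--Schwarz, which costs $\|x\|_{U(\lambda^*)^{-1}}\le\sqrt d$; or
\item as the paper does, bound $\|\widehat\theta_t-\theta^*\|_{V_t}\le 4\sigma\sqrt{d\log T}$ with an $\varepsilon$-net and multiply by $\|x\|_{V_t^{-1}}\le\sqrt{3d/t}$.
\end{itemize}
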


\begin{remark}[Lower bound]
The best-known lower bound for average regret in the linear bandit problem with infinitely many arms is 
$\Omega\!\left(d/\sqrt{T}\right)$ \citep{dani2008stochastic}. By the AM--GM inequality, the Nash 
regret also admits the same lower bound. Our upper bound 
matches this lower bound up to polylogarithmic factors and is therefore order-optimal.
\end{remark}

\textbf{Resolving an open problem.}
The current state of the art is the \textsc{LinNash} algorithm of \citet{sawarni2023nash}, whose Nash regret scales as $\widetilde O (d^{5/4}/\sqrt{T})$. As noted by the authors, this suboptimal dimension dependence stems from their use of specialized multiplicative concentration inequalities arising from confidence intervals (NCB) that depend on estimated rewards. In contrast, our data-adaptive termination rule~\eqref{eqn:termcond} enables the construction of confidence intervals that are independent of reward estimates, i.e., standard UCB-style intervals. This permits us to use self-normalized martingale concentration bounds \citep{abbasi2011improved} for \textsc{LinUCB}, or Hoeffding bounds for \textsc{LinPE}, which are tighter than multiplicative bounds. As a result, we obtain an improved, order-optimal dependence on the dimension $d$ and resolve the open problem of~\citet{sawarni2023nash}.

Moreover, NCBs and multiplicative bounds restrict~\citet{sawarni2023nash} to work with non-negative rewards only. To do so, they consider sub-Poisson rewards rather than sub-Gaussian. While non-negative sub-Gaussian rewards are also sub-Poisson, we argue that their analysis breaks down even in this special case. See Appendix~\ref{sec:poisson} for details.

\textbf{$p$-mean regret.} The next theorem establishes upper bounds on the $p-$mean regret of \textsc{FairLinBandit} for all $p\in \mathbb{R}$.

\begin{restatable}{theorem}{TheoremPMean}\textup{($p-$mean regret of \textsc{FairLinBandit})}
\label{thm:negative-regret}
Fix a fairness parameter $p\in \mathbb{R}$. Then, under the same premise of Theorem~\ref{theorem:improvedNashRegret},
the $p$-mean regret of \textsc{FairLinBandit} satisfies  
\begin{align*}
\mathrm{R}^{p}_T & =
\begin{cases}
    O\left(\frac{\sigma d\, \log T }{\sqrt T}\cdot d^{\frac{|p|}{2}}\cdot\max(1,|p|)\right), & \text{$p<0$}~,\\
     O \left(\frac{\sigma d\,\log T }{\sqrt{T}}\right), & \text{$p\geq0$}~.
     \end{cases}
\end{align*}
\end{restatable}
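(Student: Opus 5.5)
We split by the sign of $p$ and reuse the Nash-regret analysis of Theorem~\ref{theorem:improvedNashRegret}. Write $\mu^* = \langle x^*,\theta^*\rangle$ and $\mu_t = \mathbb{E}[\langle x_t,\theta^*\rangle]$. The power mean $M_p(\mu_1,\dots,\mu_T) = \bigl(\frac1T\sum_t \mu_t^p\bigr)^{1/p}$ is nondecreasing in $p$, and $M_0$ is the geometric mean. Hence for $p\ge 0$,
\[
\mathrm{R}^p_T = \mu^* - M_p \le \mu^* - M_0 = \NRg_T,
\]
and the $p\ge 0$ case follows immediately from Theorem~\ref{theorem:improvedNashRegret}. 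For $-1\le p<0$, the algorithm sets $p=1$ and therefore runs exactly as in the Nash case. In this range $d^{|p|/2}\max(1,|p|)\le \sqrt d$, so the bound is weaker than what we need to prove and the argument below suffices. The real work is $p<0$.

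For $p<0$, we first fix a good event $G$ of probability $1-O(1/T)$. On $G$, the Phase~I estimate satisfies $|\langle x,\hat\theta-\theta^*\rangle|\le\sqrt{48\sigma^2d^2\log T/t}$ at every epoch end (via the G-optimal design bound $x^\top V^{-1}x\le 3d/\widetilde T$ together with sub-Gaussian concentration and a union bound over the $O(\log T)$ epochs). Also on $G$, the Phase~II confidence sets contain $\theta^*$. Under $G$, the stopping rule~\eqref{eqn:termcond} gives two facts. First, $\tau=\Theta\bigl(\max(1,p^2)\,\sigma^2d^2\log T/\mu^{*2}\bigr)$. Second, $\mu^*\ge 30|p|\sqrt{\sigma^2 d^2\log T/\tau}$ up to constants, because the second term of the rule lower-bounds the estimated maximum minus its width. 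We need a bound that holds in expectation, since $\mu_t$ are expectations. Because $\Pr(G^c)\le 1/T$, the contribution of $G^c$ to each $\mu_t$ is at most $1/T$, and the per-round lower bounds below lose only a $(1-1/T)$ factor.

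Next we lower bound $\mu_t$ in each phase.
\begin{itemize}
\item[(a)] \emph{Phase~I.} Half the pulls come from $\rho$, so $\mu_t\ge \mu^*/(2(d+1))$.
\item[(b)] \emph{Phase~II.} Both \textsc{LinUCB} and \textsc{LinPE} pull arms with gap at most $c\,\sigma d\sqrt{\log T/t}$ on $G$, and the warm start ensures this is $\le c\,\sigma d\sqrt{\log T/\tau}\le \mu^*/(c'|p|)$. Hence $\mu_t\ge \mu^*\bigl(1-\epsilon_t\bigr)$ with $\epsilon_t = O\bigl(\sigma d\sqrt{\log T/t}/\mu^*\bigr)\le 1/(30|p|)$.
\end{itemize}
Now substitute into the power mean with $q=|p|$:
\[
\frac1T\sum_t \mu_t^{-q}\le \frac{\tau}{T}\Bigl(\frac{2(d+1)}{\mu^*}\Bigr)^{q} + \frac1T\sum_{t>\tau}\mu^{*-q}(1-\epsilon_t)^{-q}.
\]
Since $q\epsilon_t\le 1/30$, we have $(1-\epsilon_t)^{-q}\le 1+2q\epsilon_t$. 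Summing gives $\sum_t\epsilon_t = O(\sigma d\sqrt{T\log T}/\mu^*)$. We then use $(1+y)^{-1/q}\ge 1-y/q$ and multiply by $\mu^*$. The Phase~II part contributes regret $O(\sigma d\sqrt{\log T/T})$, and the factor $q$ cancels against the $1/q$.

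For the Phase~I part, set $y_1 = \frac{\tau}{T}(2(d+1))^q$. Its contribution to the regret is $\mu^* y_1/q$. Plugging in $\tau\lesssim \max(1,q^2)\sigma^2d^2\log T/\mu^{*2}$ gives a term of order
\[
\frac{\max(1,q^2)\,\sigma^2d^2\log T\,(2d+2)^q}{q\,\mu^* T}.
\]
This is the \textbf{main obstacle}: a direct substitution leaves $d^{q}$ rather than $d^{q/2}$, and $1/\mu^*$ can be large. To handle it, we case-split on $\mu^*$.
\begin{itemize}
\item If $\mu^*\le \sigma d\,d^{q/2}\max(1,q)\log T/\sqrt T$, the regret is trivially at most $\mu^*$ and the claimed bound holds.
\item Otherwise, $1/\mu^* \le \sqrt T/(\sigma d\, d^{q/2}\max(1,q)\log T)$. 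Substituting, the term becomes $O\bigl(\sigma d\,\max(1,q)\, d^{q/2}\log T/\sqrt T\bigr)$ after absorbing the constant $2^q$ into $d^{q/2}$ (or into the $O$ for fixed $p$).
\end{itemize}
We expect the exact exponent bookkeeping here to be delicate. If $(1+y_1)^{-1/q}$ is not in the linear regime, i.e.\ $y_1>1$, we note that this already forces $\mu^*$ to be small, and this case is again covered by the trivial branch.
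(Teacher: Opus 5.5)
Your skeleton matches the paper's: power-mean monotonicity for $p\ge 0$; for $p<0$, a Phase~I/Phase~II split of $\sum_t\mu_t^{-q}$, the bound $\mu^*/(2(d+1))$ in Phase~I, the linearization $(1-x)^{-q}\le 1+2qx$ in Phase~II justified by the warm start, and a case split on $\mu^*$. Your closing step $(1+y)^{-1/q}\ge 1-y/q$ holds for every $y\ge 0$. That makes it cleaner than the paper's conjugate manipulation, and your ``linear regime'' worry is moot.

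\textbf{Step (b) fails for \textsc{LinUCB}.} \textsc{LinUCB} gives no per-round guarantee $\Delta_t\lesssim\sigma d\sqrt{\log T/t}$. A suboptimal arm with a fixed gap is still pulled occasionally at late rounds, whenever its growing bonus catches up. Per round you only have $\Delta_t\le 2\beta_{t-1}\|x_t\|_{\overline V_{t-1}^{-1}}\le\sqrt{48\sigma^2d^2\log T/\tau}$ (Lemma~\ref{lem:norm-bound-ucb}). With $\tau\asymp\max(1,q^2)\sigma^2d^2\log T/\mu^{*2}$, every $\epsilon_t$ is then of order $1/\max(1,q)$, so summing these bounds yields a regret bound of order $\mu^*$, which is trivial.

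The missing idea is the paper's, in two steps:
\begin{itemize}
\item Use the per-round $\sqrt{1/\tau}$ bound only to justify the linearization.
\item Control $\sum_{t>\tau}\|x_t\|_{\overline V_{t-1}^{-1}}\le\sqrt{4Td\log T}$ by Cauchy--Schwarz and the elliptical potential lemma (Lemma~\ref{lem:elliptical-potential}).
\end{itemize}
Because these widths are random, you must also pass the expectation through the power. You can use Jensen, $(\mathbb{E}[X\mid\mathcal G])^{-q}\le\mathbb{E}[X^{-q}\mid\mathcal G]$, as the paper does. Alternatively, use $\mu_t\ge\mu^*\Pr(\mathcal G)-\mathbb{E}[w_t\mathbf 1_{\mathcal G}]$ with $w_t=2\beta_{t-1}\|x_t\|_{\overline V_{t-1}^{-1}}$, and sum the expected widths. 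For \textsc{LinPE} your (b) is fine: Lemma~\ref{lem:near_optimality_phaseII} gives the gap episode-wise, and the doubling schedule turns $\sqrt{1/(2^\ell\tau)}$ into $\sqrt{1/t}$.

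\textbf{A $1/q$ factor is lost for $-1<p<0$.} In this range $p_a=1$, so $\tau\asymp\sigma^2d^2\log T/\mu^{*2}$. Your Phase~I term is then $\asymp\sigma^2d^2\log T\,(2d+2)^q/(q\mu^*T)$. Under your threshold this gives only $O\bigl(\sigma d^{1+q/2}/(q\sqrt T)\bigr)$. That exceeds the claimed $O(\sigma d^{1+q/2}\log T/\sqrt T)$ by a factor $1/(q\log T)$. This factor is unbounded as $p\to 0^-$, which is exactly where the target tends to the Nash rate.

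The $1/q$ comes from bounding the Phase~II count $T-\tau$ by $T$. If you keep $T-\tau$, the Phase~I excess becomes $\frac{\tau}{T}\bigl((2(d+1))^q-1\bigr)\le\frac{\tau}{T}\,q\log(2(d+1))\,(2(d+1))^q$. The $q$ then cancels, at the cost of a $\log(2(d+1))=O(\log T)$ factor that the target already allows. The paper's write-up has the same looseness, so it does not supply this fix: for $|p|<1$ it substitutes $q^2$ for $p_a^2=1$ in the bound on $\tau$.
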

To the best of our knowledge, this is the first result on $p$-mean regret minimization in linear bandits, substantially broadening the scope of our framework for studying per-round fairness in bandit algorithms.

\textbf{No free lunch.}
The $p$-means regret matches the Nash regret rate $\widetilde{O}(d/\sqrt{T})$ for $p \ge 0$. Since the power mean is upper-bounded by the arithmetic mean for all $p \le 1$, the $p$-means regret is lower-bounded by the average regret $\Omega(d/\sqrt{T})$~\citep{dani2008stochastic}. Hence, our bound is order-optimal in the regime $p \in [0,1]$. As $p$ decreases, corresponding to stricter fairness requirements, the regret 
bound degrades. For $p \in [-1,0)$, the worst-case bound scales as $\widetilde O (d^{3/2}/\sqrt{T})$, while for $p < -1$, the regret grows exponentially with $|p|$. In particular, unless the horizon satisfies 
$T \ge \Omega\!\left(p^2 d^{|p|}\right)$, the bound becomes vacuous as $p \to -\infty$. This behavior reveals an 
intrinsic fairness–performance trade-off: enforcing stronger fairness guarantees warrants a substantially 
longer time horizon, reflecting a ``no free lunch'' phenomenon. Our experimental results corroborate this 
theoretical prediction. Whether our regret upper bound is tight in the regime $p < -1$ remains an open question.

\textbf{Comparison with~\citet{sarkar2025revisiting}}. The standard $k$-armed bandit problem with unknown means $\mu_1,\ldots,\mu_k$ can be viewed as a special case of 
$d$-dimensional linear bandits with $d=k$, where each arm corresponds to a canonical basis vector in 
$\mathbb{R}^k$ and the parameter vector is $\theta^* = (\mu_1,\ldots,\mu_k)$. In this setting, the best known bounds on $p$-means regret are $\widetilde{O}(\sqrt{k/T})$ for $p \ge 0$ and 
$\widetilde{O}(\sqrt{k/T}\cdot k^{|p|/2}\max\{1,|p|\})$ for $p < 0$ \citep{sarkar2025revisiting}. 
Comparing these rates with Theorem~\ref{thm:negative-regret} reveals an additional $\sqrt{k}$ factor in our 
bounds. This gap arises because we operate in the more general setting of infinite arms, where arms' rewards are dependent on each other via a common parameter $\thta$, whereas \citet{sarkar2025revisiting} considers a specialized setting of finitely many arms with mutually unrelated rewards.

\subsection{A Reduction Framework} 

The impact of our data-driven termination rule for Phase~I extends beyond achieving order-optimal Nash regret 
and resolving the associated open problem for infinite-armed linear bandits. More broadly, it yields a 
plug-and-play reduction framework for Nash regret minimization, allowing Phase~II to be instantiated with \emph{any} optimistic bandit algorithm designed for average regret minimization. All we need is: 
(i) an arm-independent bound on the confidence width $w_x := |\langle \theta^* - \widehat{\theta}, x \rangle|$ at the end of Phase~I; 
(ii) a time-uniform upper bound on the per-round regret 
$\Delta_t := \langle \theta^*, x^* - x_t \rangle$ during Phase~II; and 
(iii) a corresponding modification of the termination condition~\eqref{eqn:termcond} to reflect these bounds.

For example, \textsc{LinUCB} satisfies $w_x \le O\big(\sigma d\sqrt{\frac{\log \tau}{\tau}}\big)$ for all $x$ and $\Delta_t \le O\big(\sigma d\sqrt{\frac{\log T}{\tau}}\big)$ for all $t \!>\! \tau$, yielding Phase~I length $\tau \!= \!\Theta\!\left(\!\frac{\sigma^2 d^2 \log T}{\langle \theta^*, x^* \rangle^2}\!\right)$ and Nash regret $O\!\left(\!\frac{\sigma d\,\log T}{\sqrt{T}}\!\right)$. \textsc{LinPE} also exhibits similar bounds.

Instead, if we want to work with a randomized algorithm, e.g., Linear Thompson Sampling (\textsc{LinTS}), it can be plugged into Phase~II (with a modified termination rule). It satisfies the same bound for confidence widths, but a weaker $\widetilde O\left(\frac{\sigma d^{3/2}}{\sqrt{\tau}}\right)$ bound for per-round regret, which would have resulted in a larger Phase~I length $\widetilde \Theta\!\left(\!\frac{\sigma^2 d^3}{\langle \theta^*, x^* \rangle^2}\!\right)$ and a sub-optimal Nash regret $\widetilde O\!\left(\!\frac{\sigma d^{3/2}}{\sqrt{T}}\!\right)$. Note that
\textsc{LinTS} is known to suffer from suboptimal average regret~\citep{agrawal2013thompson}, and hence, when plugged in our framework, also yields a suboptimal Nash regret. 

If the number of arms is \emph{finite}, say $k$, with the same linear reward structure, then we can plug in the \textsc{SupLinUCB} algorithm of \citet{chu2011contextual} in Phase~II, which is known to enjoy an order-optimal average regret $\widetilde O(\sigma\sqrt{d/T})$ in this setting. It satisfies $w_x \lesssim O\big(\sigma \sqrt{\frac{d\log (k\tau)}{\tau}}\big)$ for all $x$ and $\Delta_t \lesssim O\big(\sigma \sqrt{\frac{d\log (kT)}{\tau}}\big)$ for all $t > \tau$. This would lead to the stopping rule $t \!\le\! \max\left(\frac{48 \sigma^2 d^2\log T}{\left(\!\max\limits_{x\in \mathcal{X}}\langle x,\thth\rangle \!\right)^2},\frac{900 p^2 \sigma^2 d^2\log T}{\left(\!\max\limits_{x\in \mathcal{X}}\langle x,\thth\rangle -\sqrt{\frac{48\sigma^2d^2\log T}{t}}\!\right)^2}\right)$ ($c_1,c_2$ are constants), which in turn would yield a Nash regret $ O\big(\sigma \sqrt{\frac{d}{T}}\log(kT)\big)$, matching a known lower bound. We conjecture that this reduction framework could be extended to more general non-linear rewards (e.g., logistic).


\section{Proof Techniques}
We will provide a proof sketch for \textsc{LinUCB} (proof for \textsc{LinPE} is similar). We condition our analysis on the following good events that occur with high probability:\\
$\mathcal{G}_1:$ Phase I guarantees adequate exploration of the support of $D$-optimal design, ensuring that the estimated parameter $\widehat{\theta}$ remains within a tight confidence bound of the true parameter $\thta$ at all times.\\
$\mathcal{G}_2:$ Throughout Phase~II under \textsc{LinUCB}, the unknown parameter $\theta^*$ lies within the confidence ellipsoids centered at the regularized least-squares estimates $\widehat{\theta}_t$ with radius $\beta_t$.

We refer the reader to Lemmas \ref{lem:goodevent_G1} and \ref{lem:goodevent_G2_UCB} in the Appendix for details of these good events. The next result bounds the length of Phase I (details in Lemma~\ref{lem:phase1Length}, Appendix). 
\begin{lemma}[Informal; Number of Rounds in Phase I]
    Under the event $\cG_1$, Phase I runs for $\tau= \Theta\big(\frac{d^2}{\langle x^*, \thta\rangle^2}\big)$ rounds. 
\end{lemma}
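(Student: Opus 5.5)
Write $\mu^* := \langle x^*,\theta^*\rangle$ and $L := \sigma^2 d^2 \log T$. The plan is to turn the stopping rule \eqref{eqn:termcond} into a statement about $\mu^*$ by using the uniform estimation guarantee supplied by $\mathcal{G}_1$. I will take $\mathcal{G}_1$ (Lemma~\ref{lem:goodevent_G1}) in the following form. At the end of every epoch of Phase~I, where $t$ rounds have elapsed, we have $V \succeq \frac{t}{6}U(\lambda^*)$ up to constants, because the rounds are split evenly between $\rho$ and the round-robin D-design. Kiefer--Wolfowitz then gives $\|x\|_{V^{-1}}^2 \le c\,d/t$ for all $x\in\mathcal{X}$. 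Combined with a sub-Gaussian bound over the support of $\lambda^*$ and a union bound over the $O(\log T)$ epochs, this yields
\[
\sup_{x\in\mathcal{X}} \bigl|\langle x,\widehat{\theta}-\theta^*\rangle\bigr| \le \varepsilon_t := \sqrt{\tfrac{c' L}{t}},
\]
with constants chosen so that $\varepsilon_t \le \sqrt{48L/t}$. It follows that $M_t := \max_x \langle x,\widehat{\theta}\rangle$ satisfies $|M_t-\mu^*|\le \varepsilon_t$. The whole argument is a sandwich between the two functions $\mu^*\pm\varepsilon_t$.

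\emph{Upper bound on $\tau$.} Let $t_0 = C L \max(1,p^2)/(\mu^*)^2$ for a large constant $C$. At any check time $t\ge t_0$ we have $\varepsilon_t \le \mu^*/\sqrt{C}$ and $\sqrt{48L/t}\le \mu^*\sqrt{48/C}$. Hence $M_t \ge \mu^*(1-1/\sqrt C)$ and $M_t - \sqrt{48L/t} \ge \mu^*/2$. Both thresholds in \eqref{eqn:termcond} are then at most $\max\bigl(48L/(\mu^*/2)^2,\ 900p^2L/(\mu^*/2)^2\bigr) < t$ once $C > 3600$. The condition therefore fails, and the loop exits at the first check time at or after $t_0$. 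Since checks occur at times $\sum_{j<k} 2^j\lceil 72\log T\rceil$, which grow geometrically, the first such time is at most $2t_0 + O(\log T)$. Because $\tau = \widetilde T/2$ is comparable to the final $t$, we obtain $\tau = O(L\max(1,p^2)/(\mu^*)^2)$. This is $O(d^2/(\mu^*)^2)$ for fixed $\sigma,p$, where $p$ has been normalized to $1$ whenever $p\ge -1$. One caveat is needed: if $t_0 > T$, Phase~I simply consumes the horizon, and this case is handled separately in the regret analysis.

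\emph{Lower bound on $\tau$.} Suppose the loop stops at time $t$. Then $t > 48L/M_t^2$, which gives $M_t > \sqrt{48L/t} \ge \varepsilon_t$, and hence $\mu^* \ge M_t - \varepsilon_t > 0$. In addition, $M_t \le \mu^*+\varepsilon_t$. Squaring $M_t>\sqrt{48L/t}$ and substituting $M_t\le\mu^*+\varepsilon_t$ gives $\sqrt{48L/t}<\mu^*+\varepsilon_t$. With $\varepsilon_t\le \frac12\sqrt{48L/t}$, which holds after adjusting constants in $\mathcal{G}_1$, this yields $\frac12\sqrt{48L/t} < \mu^*$, i.e.\ $t > 12L/(\mu^*)^2$. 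The second condition similarly yields $t \gtrsim p^2L/(\mu^*)^2$. Together these give $\tau=\Omega(d^2/(\mu^*)^2)$.

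\emph{Main obstacle.} The delicate step is making the constants in $\mathcal{G}_1$ compatible with the thresholds $48$ and $900$, since the lower bound requires $\varepsilon_t$ to be strictly smaller than the subtracted term $\sqrt{48L/t}$. The concentration must also hold uniformly over the random, data-dependent stopping time. I will handle the latter by proving the bound separately for each deterministic epoch endpoint, of which there are $O(\log T)$, and union bounding at confidence level $1/T^2$. The former I will handle by tracking the factor $3$ from the round-robin and the factor $2$ from the fair coin. A Chernoff bound shows that the number of D-rounds is at least $\widetilde T/3$ with high probability, which is where the choice $\widetilde T\ge 72\log T$ enters.
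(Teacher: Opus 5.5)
Your proposal is correct and follows essentially the paper's argument: you sandwich $M_t$ between $\mu^*\pm\varepsilon_t$ using the uniform width from $\mathcal{G}_1$, then show that the stopping rule must fail once $t\gtrsim p_a^2L/(\mu^*)^2$ and cannot fail before that, and you are more careful than the paper about epoch-end check times and $\tau=\widetilde T/2$. One correction to your ``main obstacle'': no margin $\varepsilon_t<\sqrt{48L/t}$ is needed. The paper's $\mathcal{G}_1$ gives exactly $\varepsilon_t=\sqrt{48L/t}$, so your first-condition route would yield only $\mu^*>0$. But once the first condition fails (so $M_t>\sqrt{48L/t}$ under $\mathcal{G}_1$), failure of the second gives $\mu^*\ge M_t-\varepsilon_t\ge M_t-\sqrt{48L/t}>30|p_a|\sqrt{L/t}$, i.e.\ $t>900p_a^2L/(\mu^*)^2$ with $p_a^2\ge 1$, which is exactly how the paper obtains its lower bound.
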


We next ensure that if an arm is pulled in Phase II, then its mean must be close to the optimal mean $\dotprod{x^*}{\thta}$, so that it does not significantly contribute to Nash or $p-$mean regret. See Lemma \ref{lem:instantaneous-regret}, \ref{lem:norm-bound-ucb}, and \ref{lem:near_optimality_phaseII} in the Appendix for details.
\begin{lemma}[Informal; Near optimality of Phase II arms]
If an arm $x_t$ is pulled in  Phase II, then its instantaneous regret $\dotprod{\thta}{x^*-x_t}$ is bounded by $O\left(\frac{d}{\sqrt{\tau}}\right)$ under the event $\cG_2$.
\end{lemma}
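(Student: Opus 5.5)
The plan is to compare the UCB of the pulled arm $x_t$ with that of $x^*$, using $\cG_2$, and then to bound the resulting confidence width via the Phase~I design.

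Fix $t>\tau$. By the LinUCB selection rule, $x_t$ maximizes $\langle x,\widehat\theta_{t-1}\rangle+\beta_{t-1}\|x\|_{\overline V_{t-1}^{-1}}$ over $\cX$. On $\cG_2$ we have $\|\theta^*-\widehat\theta_{t-1}\|_{\overline V_{t-1}}\le\beta_{t-1}$, so Cauchy--Schwarz gives, for every $x\in\cX$,
\[
|\langle x,\theta^*-\widehat\theta_{t-1}\rangle|\le \beta_{t-1}\|x\|_{\overline V_{t-1}^{-1}}.
\]
This yields the standard chain
\begin{align*}
\langle x^*,\theta^*\rangle &\le \langle x^*,\widehat\theta_{t-1}\rangle+\beta_{t-1}\|x^*\|_{\overline V_{t-1}^{-1}}\\
&\le \langle x_t,\widehat\theta_{t-1}\rangle+\beta_{t-1}\|x_t\|_{\overline V_{t-1}^{-1}}\\
&\le \langle x_t,\theta^*\rangle+2\beta_{t-1}\|x_t\|_{\overline V_{t-1}^{-1}}.
\end{align*}
Hence the instantaneous regret is at most $2\beta_{t-1}\sup_{x\in\cX}\|x\|_{\overline V_{t-1}^{-1}}$.

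Next I bound this width uniformly in $t$ and $x$. Since $\overline V_{t-1}\succeq \overline V_\tau\succeq V$, where $V$ is the Phase~I Gram matrix, we get $\|x\|_{\overline V_{t-1}^{-1}}\le\|x\|_{V^{-1}}$. The key input from Phase~I is a lower bound on $V$. The final epoch has length $\widetilde T=2\tau$ (because $\tau=\widetilde T/2$ and lengths double), and the round-robin step guarantees $V\succeq(\widetilde T_{\mathrm{last}}/3)U(\lambda^*)$, where $\widetilde T_{\mathrm{last}}$ is the length of the last completed epoch. Since epochs double, this is $\Omega(\tau)$. There is one caveat: the D-branch is taken only with probability $1/2$, and the round-robin quota $\lceil\lambda_z^*\widetilde T/3\rceil$ must be completed within the epoch. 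I will use a Chernoff bound on the number of D-flags. With $\widetilde T\ge 72\log T$, at least $\widetilde T/3$ D-flags occur with probability $1-1/T^{2}$, and I fold this into $\cG_1$ (or $\cG_2$). Then Kiefer--Wolfowitz gives $\|x\|_{U(\lambda^*)^{-1}}^2\le d$, so $\|x\|_{V^{-1}}\le\sqrt{3d/\widetilde T_{\mathrm{last}}}=O(\sqrt{d/\tau})$.

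Finally, with $\alpha=O(1)$, the radius satisfies $\beta_{t-1}=O(\sigma\sqrt{d\log T})$, since $\log(1+T/(d\alpha))=O(\log T)$. Multiplying the two bounds gives
\[
\langle\theta^*,x^*-x_t\rangle\le O\bigl(\sigma d\sqrt{\log T/\tau}\bigr)=O(d/\sqrt\tau)
\]
up to the logarithmic and $\sigma$ factors suppressed in the informal statement.

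The main obstacle is the step $V\succeq\Omega(\tau)U(\lambda^*)$. It requires relating the last epoch length to $\tau$, and it requires the round-robin quotas to actually be met despite the random coin flips. I would handle this through the concentration argument above, using $\widetilde T\ge 72\log T$. Everything else is the standard optimism argument.
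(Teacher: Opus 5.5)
Your proposal is correct and follows essentially the same route as the paper. The optimism chain is Lemma~\ref{lem:instantaneous-regret}. The uniform width bound $\|x\|_{\overline V_{t-1}^{-1}}\le\sqrt{3d/\tau}$, obtained from $\overline V_{t-1}\succeq V_\tau\succeq(\tau/3)U(\lambda^*)$ plus Kiefer--Wolfowitz, is Lemma~\ref{lem:norm-bound-ucb}, with the Chernoff count of Lemma~\ref{lem:phase_I_enough_pulls} placed in $\cG_1$. Finally, $\beta_T\le\sqrt{4d\sigma^2\log T}$ yields the paper's $\sqrt{48d^2\sigma^2\log T/\tau}$.

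Your use of the last epoch's per-epoch quotas is, if anything, more careful than the paper's cumulative claim $V_t\succeq(t/3)U(\lambda^*)$. The only slip is that after the loop, $\tau=\widetilde T/2$ is itself the length of the last executed epoch; the doubled value $\widetilde T=2\tau$ is never run. This is immaterial at the $O(\cdot)$ level.
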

We now give short proof sketches of our regret bounds. See Appendix~\ref{app:proofnash} and~\ref{sec:appPmean} for detailed proofs.

\textit{Proof Sketch (Theorem \ref{theorem:improvedNashRegret}).}
We split Nash Social Welfare as $\left(\prod_{t=1}^\tau \E[\dotprod{x_t}{\thta}]\right)^{1/T}$ and $\left(\prod_{t=\tau+1}^T \E[\dotprod{x_t}{\thta}]\right)^{1/T}$ corresponding to the two phases of the algorithm. The sampling strategy (mixing D-optimal design and John Ellipsoid distribution) ensures that for any $t \le \tau$, $\E[\dotprod{x_t}{\thta}] \ge \frac{\dotprod{x^*}{\thta}}{2(d+1)}$ (Lemma \ref{lem:phaseI_mult_bound} in the Appendix). Since Phase I runs for $\tau= \Theta\big(\frac{d^2}{\langle x^*, \thta\rangle^2}\big)$ rounds, the NSW in Phase I is lower bounded by $\dotprod{x^*}{\thta}^{\frac{\tau}{T}}\left(1 - O(d^2/T)\right)$.
In Phase II, we utilize the near-optimality guarantee. From the near-optimality lemma above, for any $t > \tau$, we have $\dotprod{x_t}{\thta} \ge \dotprod{x^*}{\thta} - \epsilon_t$, where $\epsilon_t \approx O(d/\sqrt{\tau})$. However, a tighter analysis (Lemma \ref{lem:nwphase2_UCB}, \ref{lem:nwphase2_PE} in the Appendix) shows that the cumulative effect over the remaining $T-\tau$ rounds ensures that the NSW in Phase II is lower bounded by $\dotprod{x^*}{\thta}^{\frac{T-\tau}{T}} \left( 1 - O\left(\frac{d \log T}{\sqrt T}\right) \right)$. Multiplying the bounds in both phases and substituting in the expression for Nash regret, we get $\NRg_T \le O\left(\frac{d \log T}{\sqrt{T}}\right)$. 

\textit{Proof Sketch (Theorem \ref{thm:negative-regret})}
We analyze the cases for $p \ge 0$ and $p < 0$ separately. For $p\ge 0$, the result follows directly from the Generalized Mean Inequality, which establishes that the $p$-mean regret is upper bounded by the Nash regret ($p=0$). For the regime $p < 0$ (setting $q=|p|$), we decompose the sum $\sum_{t=1}^T (\E[\dotprod{x_t}{\thta}])^{-q}$ into contributions from Phase I and Phase II. In Phase I, the expected rewards are lower bounded by $\frac{\dotprod{x^*}{\thta}}{2(d+1)}$, bounding the contribution $\sum_{t=1}^T (\E[\dotprod{x_t}{\thta}])^{-q}$ from Phase I by approximately $\frac{\tau d^q}{T\dotprod{x^*}{\thta}^q}$. In Phase II, we use the near-optimality condition $\dotprod{x_t}{\thta} \ge \dotprod{x^*}{\thta} - \epsilon_t$ and first-order approximations to linearize and upper bound the inverse rewards. The cumulative error term is then proportional to the sum of confidence widths, which is bounded by $\widetilde{O}(\sqrt{T})$ via the Cauchy-Schwarz inequality. Combining these components yields the final bound $O(d^{\frac{|p|+2}{2}} \log T / \sqrt{T})$.

 \begin{figure*}[t]\centering
		\begin{subfigure}[b]{0.30\textwidth}
        \centering
			\includegraphics[scale=0.28]{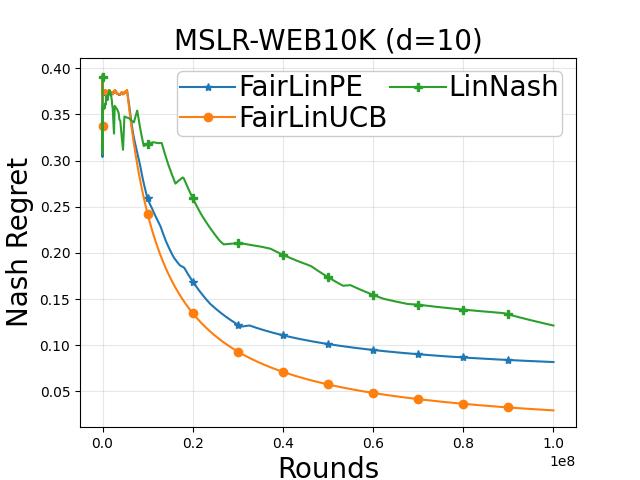}
			\caption{All Algorithms; $T=10^8$}
		\end{subfigure} 
		\begin{subfigure}[b]{0.30\textwidth}
        \centering
			\includegraphics[scale=0.28]{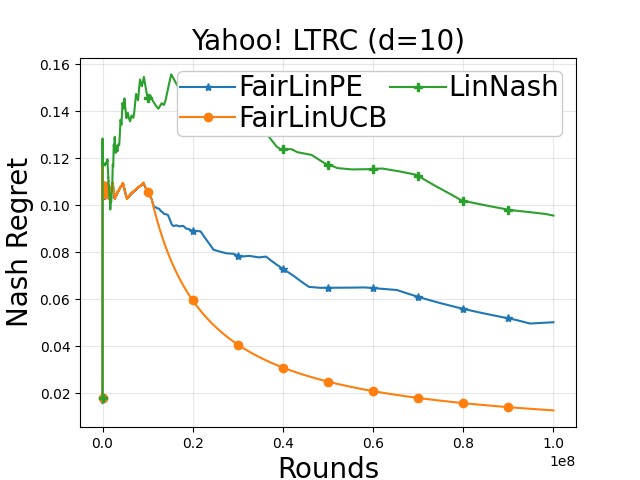}
			\caption{All Algorithms; $T=10^8$}
		\end{subfigure} 
		\begin{subfigure}[b]{0.30\textwidth}
        \centering
			\includegraphics[scale=0.28]{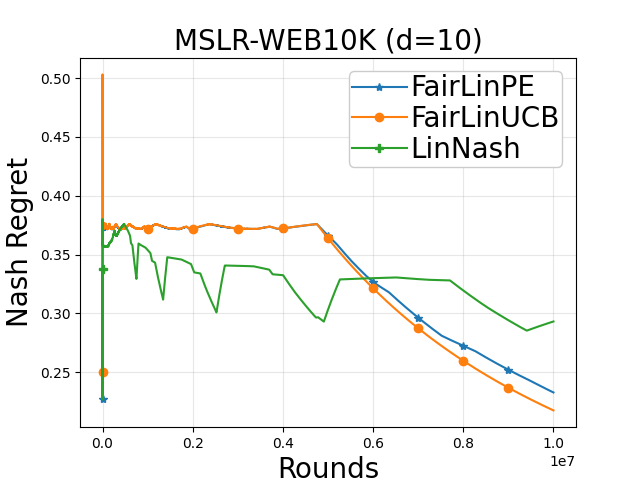}
			\caption{\centering All Algorithms; $T=10^7$}
		\end{subfigure} 
		\begin{subfigure}[b]{0.30\textwidth}
        \centering
			\includegraphics[scale=0.28]{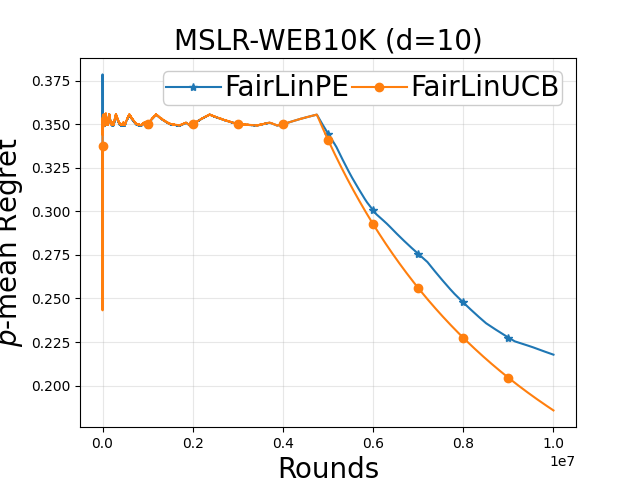}
			\caption{\centering FairLinPE vs FairLinUCB; $p=0.5$}
		\end{subfigure} 
		\begin{subfigure}[b]{0.30\textwidth}
        \centering
			\includegraphics[scale=0.28]{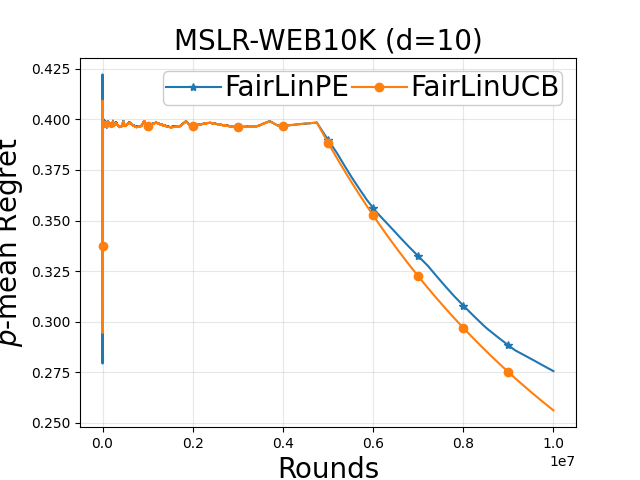}
			\caption{\centering FairLinPE vs FairLinUCB; $p=-0.5$}
		\end{subfigure} 
		\begin{subfigure}[b]{0.30\textwidth}
        \centering
			\includegraphics[scale=0.28]{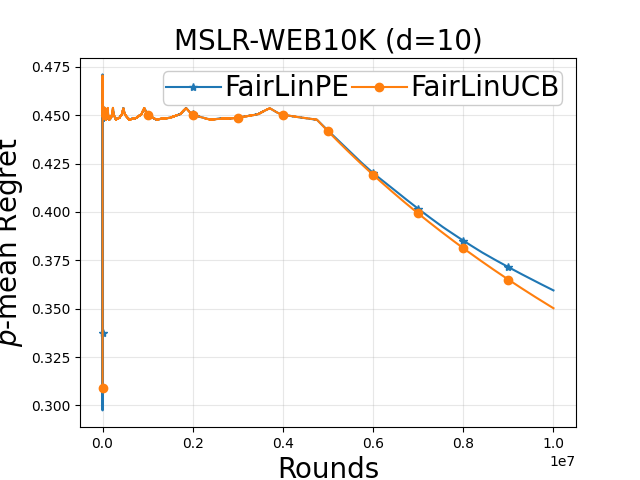}
			\caption{\centering  FairLinPE vs FairLinUCB; $p=-1.5$}
		\end{subfigure} 
    \caption{Numerical results comparing Nash regret for different algorithm runs. (a) and (b) showcase the better performance of FairLinPE and FairLinUCB over LinNash on MSLR-WEB10K and Yahoo! LTRC dataset, respectively for $T=10^8$ rounds. (c) compares runs over $T=10^7$ rounds and demonstrates the instability of LinNash for shorter time horizons. (d), (e), (f) compare the $p-$mean regret for FairLinPE and FairLinUCB at $p=0.5$ , -0.5, and -1.5. The LinUCB procedure performs better than LinPE across all values of $p$. }
\end{figure*}
 
\section{Experiments}

We evaluate the performance of our algorithms on bandit instances constructed from the MSLR-WEB10K \citep{qin2013introducing} and Yahoo! Learning to Rank Challenge \citep{chapelle2011yahoo} datasets. We compare our proposed approaches, \textsc{FairLinPE} and \textsc{FairLinUCB}, against the \textsc{LinNash} algorithm \citep{sawarni2023nash}. All reported results are averages over 10 independent runs to estimate $\mathbb{E}[\langle x_t, \theta \rangle]$.

\textbf{Bandit instance construction.} We construct linear bandit instances by adapting the Learning-to-Rank (LTR) datasets into a non-contextual setting. The MSLR-WEB10K dataset contains 10,000 queries ($c$) and over 1.2 million relevance judgments, with up to 908 judged documents $(a)$ per query. Each query-document pair $(c, a)$ is associated with a 135-dimensional feature vector $\phi(c, a)$. For each rank position $k \in \{1, \dots, 908\}$, we compute the representative feature vector $a_k$ by averaging the features of all documents that appear at position $k$ across the entire set of queries. In this way, we obtain a set of 908 fixed arms with $135$ features.

We simulate a linear bandit instance where the expected reward is governed by an unknown parameter $\theta^*$.  We reduce the dimensionality of the arms to $d=10$ using PCA and train a Lasso regression model on them to obtain $\theta^*$. We then normalize both $\theta^*$ and the arm vectors such that the mean rewards lie within the range $[0, 1]$. At each round $t$, the reward is sampled as $r_t = \langle x_t, \theta^* \rangle + \eta_t$, where $\eta_t$ is Gaussian with $\sigma = 0.5$. Note that \textsc{LinNash} assumes sub-Poisson rewards (parameter $\nu$), we choose $\nu=1$. We create another bandit instance following the same procedure for the Yahoo! LTRC dataset, which comprises 36,000 queries and 883,000 judgements, with up to 139 documents per query, with feature vectors of dimension $699$.

\textbf{Comparison of Nash regret.} We first compare the Nash regret of our algorithms against \textsc{LinNash} on both datasets with $T=10^8$ and $d=10$. As shown in Figure 1(a) (MSLR) and Figure 1(b) (Yahoo!), our algorithms minimize Nash regret significantly faster than \textsc{LinNash}. To inspect performance at lower horizons, we plot Nash regret for $T=10^7$ in Figure 1(c). The results indicate that \textsc{LinNash} exhibits unstable performance in this regime. This instability arises because \textsc{LinNash} relies on confidence intervals with a non-linear dependence on $d$; at lower horizons, these intervals remain too wide to eliminate suboptimal arms effectively. 


\textbf{Comparison of $p$-mean regret.} We compare $p$-means regret of our two proposed algorithms in three distinct regimes: $p = 0.5$ ($0 < p \leq 1$), $p = -0.5$ ($-1 < p < 0$), and $p = -1.5$ ($p \leq -1$). Figures 1(d), 1(e), and 1(f) illustrate that \textsc{FairLinUCB} consistently outperforms \textsc{FairLinPE}, minimizing the $p$-mean regret faster across all three regimes.

\begin{figure*}[t]\centering
		\begin{subfigure}[b]{0.24\textwidth}
        \centering
			\includegraphics[scale=0.28]{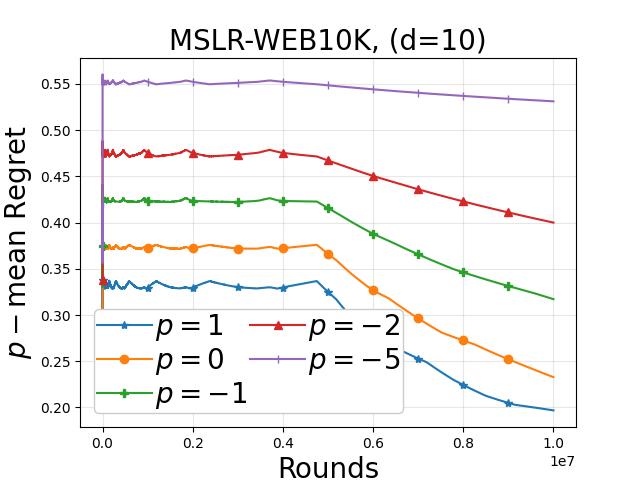}
			\caption{\centering FairLinPE; vary $p$}
		\end{subfigure} 
		\begin{subfigure}[b]{0.24\textwidth}
        \centering
			\includegraphics[scale=0.28]{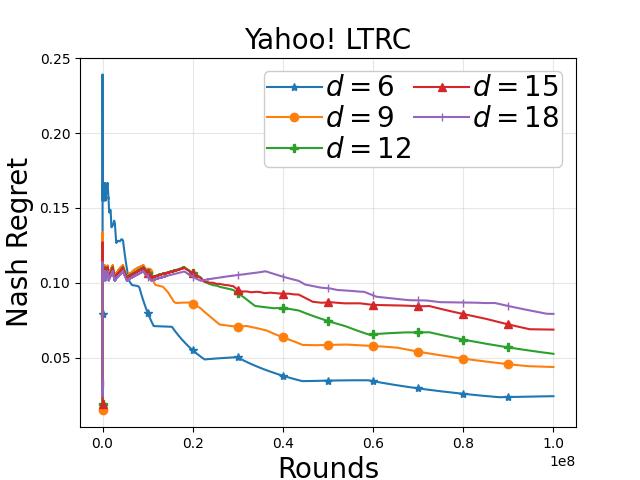}
			\caption{\centering FairLinPE; vary $d$}
		\end{subfigure} 
		\begin{subfigure}[b]{0.24\textwidth}
        \centering
			\includegraphics[scale=0.28]{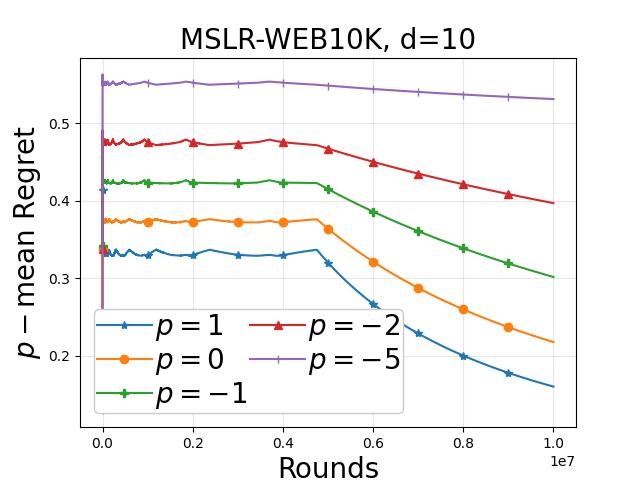}
			\caption{\centering FairLinUCB; vary $p$}
		\end{subfigure} 
        		\begin{subfigure}[b]{0.24\textwidth}
        \centering
			\includegraphics[scale=0.28]{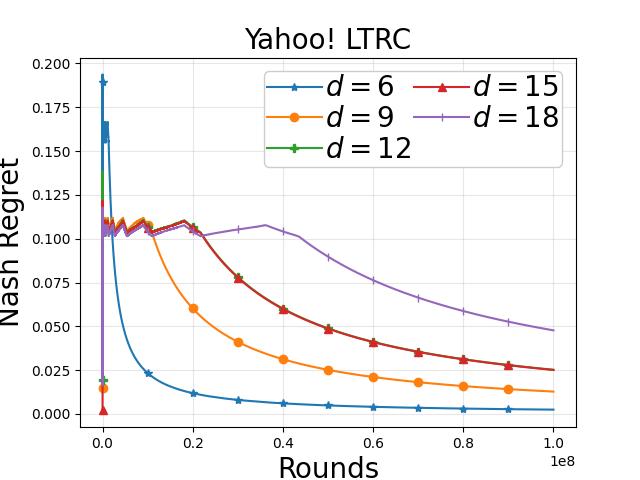}
			\caption{\centering FairLinUCB; vary $d$}
		\end{subfigure} 
        \caption{Numerical results showing the effect of variation of $p$ and $d$ on the regret. (a) and (c) show the effect of variation of $p$ on the $p-$ mean regret for FairLinPE and FairLinUCB, respectively, on the MSLR-WEB10K dataset. (b) and (d) show the effect of variation of $d$ on the Nash regret for FairLinPE and FairLinUCB, respectively, on the Yahoo! Learning To Rank Challenge dataset.}
\end{figure*}

\textbf{Ablation study.} We first analyze the impact of the fairness parameter $p$ on regret. Figures 2(a) and 2(c) show that, for both algorithms, the $p$-mean regret increases as $p$ decreases. This confirms the theoretical intuition that stricter fairness requirement (lower $p$) leads to higher regret. Next, using the Yahoo! dataset, we create bandit instances with dimensions $d \in \{6, 9, 12, 15, 18\}$ to study the effect of varying $d$ on Nash regret. Figures 2(b), 2(d) demonstrate a clear increase in overall regret as the feature dimension increases, corroborating the linear dependence of Nash regret on $d$.
\begin{table}[t]
    \centering
   \caption{Runtime for different algorithms on MSLR-WEB10K. \textsc{FairLinPE} runs faster than \textsc{FairLinUCB} and \textsc{LinNash}.}
    \begin{tabular}{|c|c|c|c|}\hline
    $T$  & \textsc{LinNash} & \textsc{FairLinPE}   & \textsc{FairLinUCB}   \\\hline
    $10^6$    & 864.61s & 272.87s  & 744.62s  \\
    $10^7$    & 1283.34s & 992.84s & 7225.71s    \\
    $10^8$ & 4213.14s & 3943.80s &  58201.23s \\
    \hline
    \end{tabular}
    \label{tab:runtime}
\end{table}

\textbf{Computational costs.} We present a runtime (wall clock) comparison of different algorithms in Table~\ref{tab:runtime}, which highlights that \textsc{FairLinPE} is computationally faster than \textsc{FairLinUCB}. This, along with Figure 2, presents a clear trade-off: \textsc{FairLinUCB} offers better convergence, while \textsc{FairLinPE} offers superior computational efficiency. Moreover, we see that \textsc{FairLinPE} is faster than \textsc{LinNash} and enjoys better performance. A rigorous study of computational complexities is presented in Appendix~\ref{app:exps}.


\section{Conclusion}
Our work significantly advances the field of fairness-aware multi-armed bandits by resolving the open problem of suboptimal Nash regret bounds in the linear setting. By shifting from specialized multiplicative bounds to additive concentration inequalities, we achieve order-optimal Nash regret bounds and simultaneously pioneer the study of $p$-means regret to capture a broader spectrum of fairness-utility trade-offs. These theoretical contributions are operationalized through the FairLinBandit algorithmic framework, which is compatible with Phased Elimination and UCB. Empirical evaluations on real-world datasets demonstrate that our proposed algorithms consistently outperform the state-of-the-art LinNash algorithm in terms of both speed and stability.

\bibliography{references}

\appendix

\newpage
\begin{center}
    \LARGE \textbf{Appendix}
\end{center}

\section{Related Works}\label{sec:relworks}
\paragraph{Works on Social Welfare}

The notion of $p$-mean welfare is a well-established concept in the fair division literature, which lies at the intersection of mathematical economics and theoretical computer science. Rooted in social choice theory \citep{moulin2004fair}, this welfare measure provides a flexible mechanism for balancing efficiency and equity, and has been studied extensively in prior work \citep{barman2020tight, garg2021tractable, barman2022universal, eckart2024fairness}. It is axiomatically characterized by five core properties—anonymity, scale invariance, continuity, monotonicity, and symmetry—which together ensure consistency with foundational fairness principles. In addition, the $p$-mean welfare satisfies the Pigou--Dalton transfer principle, whereby welfare strictly increases when resources are redistributed from better-off individuals to worse-off ones; this requirement restricts the parameter $p$ to be at most $1$. Leveraging this strong theoretical foundation allows our approach to sidestep the introduction of arbitrary or ad-hoc fairness criteria.

\paragraph{Works on Nash Regret}

\citet{barman2023fairness} initiated the study of Nash Regret in MAB. In the concluding note of their paper, \citet{barman2023fairness} mentioned that just like in Nash Regret geometric mean is used instead of arithmetic mean in the definition of regret, we can further generalize that by using generalized mean.
\citet{krishna2025p} introduced the notion of p-means regret along the same lines and demonstrated regret bounds for different values of $p$. Unfortunately, the results in the paper were based on a problematic assumption, severely limiting the number of bandit instances to which the proposed algorithm was applicable.
\citet{sawarni2023nash} extended the idea of Nash Regret to Linear bandits and used important ideas from convex geometry to get meaningful regret guarantees in this setting. More recently, \citet{sarkar2025dp} proposed differentially private algorithms that enjoy order-optimal Nash Regret bounds. 
\citet{sarkar2025revisiting} demonstrated that specialized algorithm designs and strong assumptions, such as multiplicative concentration inequalities and strictly non-negative rewards, are not necessary to achieve near-optimal Nash regret in multi-armed bandits. They showed that a simple strategy consisting of an initial uniform exploration phase followed by a standard Upper Confidence Bound (UCB) algorithm—relying only on additive Hoeffding bounds—attains near-optimal Nash regret while naturally extending to sub-Gaussian reward distributions. This framework was further generalized to $p$-mean regret, providing nearly optimal bounds uniformly across all values of $p$.

\section{Nash Regret Analysis for \textsc{FairLinBandit}} \label{appendix:size_independent}

\subsection{Phase I Analysis}

\begin{lemma}[Chernoff Bound]\label{lem:chernoff}
	Let $Z_1, \ldots, Z_n$ be independent Bernoulli random variables. Consider the sum $S = \sum_{r=1}^n Z_r$ and let $\nu = \E[S]$ be its expected value. Then, for any $\varepsilon \in [0,1]$, we have 
	\begin{align*}
		\Pr \left\{ S \leq (1-\varepsilon) \nu \right\} \leq \mathrm{exp} \left( -\frac{\nu \varepsilon^2}{2} \right), \text{  and} \,
		\Pr \left\{ S \geq (1+\varepsilon) \nu \right\}  \leq \mathrm{exp} \left( -\frac{\nu \varepsilon^2}{3} \right).
	\end{align*}
\end{lemma}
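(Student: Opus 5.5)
This is the standard multiplicative Chernoff bound, and I will prove it with the exponential-moment (Cramér--Chernoff) method. Write $p_r = \E[Z_r]$, so $\nu = \sum_r p_r$.

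\textbf{Step 1: moment generating function.} For any real $\lambda$, independence gives $\E[e^{\lambda S}] = \prod_r (1 + p_r(e^\lambda - 1))$. Using $1+u \le e^u$, this is at most $\exp(\nu(e^\lambda - 1))$.

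\textbf{Step 2: upper tail.} For $\lambda > 0$, Markov's inequality yields
\begin{align*}
\Pr\{S \ge (1+\varepsilon)\nu\} \le \exp\bigl(\nu(e^\lambda - 1) - \lambda(1+\varepsilon)\nu\bigr).
\end{align*}
Choosing $\lambda = \log(1+\varepsilon)$ gives the bound $\bigl(e^\varepsilon/(1+\varepsilon)^{1+\varepsilon}\bigr)^\nu$.

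\textbf{Step 3: lower tail.} Apply the same argument to $e^{-\lambda S}$ with $\lambda > 0$, choosing $\lambda = -\log(1-\varepsilon)$. This gives $\bigl(e^{-\varepsilon}/(1-\varepsilon)^{1-\varepsilon}\bigr)^\nu$. At $\varepsilon = 1$ it reads $e^{-\nu}$, either via the convention $0^0 = 1$ or directly as $\Pr\{S = 0\} = \prod_r (1-p_r) \le e^{-\nu}$.

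\textbf{Step 4: elementary inequalities.} It remains to show, for $\varepsilon \in [0,1]$:
\begin{align*}
f(\varepsilon) &:= \varepsilon - (1+\varepsilon)\log(1+\varepsilon) + \varepsilon^2/3 \le 0, \\
g(\varepsilon) &:= -\varepsilon - (1-\varepsilon)\log(1-\varepsilon) + \varepsilon^2/2 \le 0.
\end{align*}

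For $g$, expand $(1-\varepsilon)\log(1-\varepsilon) = -\varepsilon + \sum_{k\ge2} \varepsilon^k/(k(k-1))$. Then $g(\varepsilon) = \varepsilon^2/2 - \sum_{k \ge 2}\varepsilon^k/(k(k-1))$. The $k=2$ term cancels the $\varepsilon^2/2$, and every remaining term is nonnegative, so $g \le 0$.

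For $f$, note $f(0) = 0$ and $f'(\varepsilon) = -\log(1+\varepsilon) + 2\varepsilon/3$. We have $f'(0) = 0$, and $f''(\varepsilon) = -1/(1+\varepsilon) + 2/3$ is negative on $[0, 1/2)$ and positive on $(1/2, 1]$. So $f'$ first decreases from $0$ and then increases, reaching $f'(1) = 2/3 - \log 2 < 0$ at the right endpoint. Hence $f' \le 0$ on $[0,1]$, and therefore $f \le 0$.

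The only step needing any care is this calculus check for the upper-tail constant $1/3$, since the inequality is tight-ish near $\varepsilon = 1$. Everything else is routine.
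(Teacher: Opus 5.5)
Your proof is correct and complete. The MGF bound $\exp(\nu(e^\lambda-1))$, the choices $\lambda=\log(1+\varepsilon)$ and $\lambda=-\log(1-\varepsilon)$, and the separate handling of $\varepsilon=1$ via $\Pr\{S=0\}\le e^{-\nu}$ all check out, as do both elementary inequalities (the series cancellation for $g$, and the sign analysis of $f''$ with $f'(1)=2/3-\log 2<0$ for $f$). The paper gives no proof of this lemma and simply quotes it as a standard fact, so your argument is the textbook Cram\'er--Chernoff derivation that the paper implicitly relies on.
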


\begin{lemma}\label{lem:phase_I_enough_pulls}
    During Phase {\rm I}, at each time step $t \geq 72\log T$, the arms from D-optimal design have been pulled at least $t/3$ times with probability at least $1- \frac{1}{T}$.
\end{lemma}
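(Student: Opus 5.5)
In each round of \textsc{PullArms} the flag is set to \textsc{D} independently with probability $1/2$. Let $Z_s\in\{0,1\}$ indicate that the $s$-th Phase~I round (counted across all epochs) had \texttt{flag} $=\textsc{D}$, and let $S_t=\sum_{s=1}^t Z_s$. The $Z_s$ are i.i.d.\ Bernoulli$(1/2)$: the coin flips are independent of the rewards and of the epoch boundaries, since those boundaries are deterministic in $t$ (the stopping rule only decides whether Phase~I continues, not how earlier flips are distributed). Hence $\nu=\E[S_t]=t/2$. Apply the lower-tail bound of Lemma~\ref{lem:chernoff} with $\varepsilon=1/3$, so that $(1-\varepsilon)\nu=t/3$:
\[
\Pr\{S_t\le t/3\}\le \exp\!\left(-\frac{t}{2}\cdot\frac{1}{9}\cdot\frac12\right)=\exp(-t/36).
\]
For $t\ge 72\log T$ this is at most $T^{-2}$.

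Next I would relate $S_t$ to the number of D-optimal pulls. Whenever the flag is \textsc{D} and the round-robin set $\cA$ of unfinished support arms is nonempty, the algorithm pulls an arm of $\text{Supp}(\lambda^*)$. So within an epoch every \textsc{D}-flag round produces a D-design pull until that epoch's quota $\sum_z\lceil\lambda^*_z\widetilde T/3\rceil\ge \widetilde T/3$ is met. If the quota is met, the epoch already contributes at least $\widetilde T/3$ such pulls. The statement should be read as: the cumulative number of D-design pulls up to $t$ is at least $\min(S_t,\text{quota})\gtrsim t/3$. Because epochs double and the checkpoints $t$ of interest are epoch ends, the sum of quotas up to $t$ is at least $t/3$, so in either case the count is at least $t/3$ on the event $S_t\ge t/3$. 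If the lemma is meant for every $t$, not just epoch ends, I would argue epoch by epoch with the per-epoch Chernoff bound on $\widetilde T\ge 72\log T$ rounds. I would be careful here: the clean claim is at epoch ends, which is what the later concentration lemma uses.

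Finally, take a union bound over all $t\le T$ with $t\ge 72\log T$ (or just over the at most $\log_2 T$ epoch endpoints). Each failure probability is at most $T^{-2}$, so the total failure probability is at most $T\cdot T^{-2}=1/T$.

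The main obstacle is the second step: handling the interaction between the random flags and the deterministic quota per epoch, including the fallback to $\rho$ once $\cA=\emptyset$, so that "$S_t\ge t/3$" really translates into "$\ge t/3$ D-design pulls". I would handle it by the case split above (quota met or not within each epoch) and sum over the doubling epochs. The Chernoff computation itself is routine.
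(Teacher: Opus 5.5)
Your Chernoff computation (flag indicators $Z_s$, $\varepsilon=1/3$, $\Pr\{S_t\le t/3\}\le e^{-t/36}\le T^{-2}$ for $t\ge 72\log T$) and the final union bound are exactly the paper's proof. The paper stops there and implicitly counts every \textsc{D}-flag round as a D-design pull. You are right that this identification needs an argument, because of the per-epoch quota and the fallback to $\rho$ once $\mathcal{A}=\emptyset$. However, the argument you give for it is invalid.

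In epoch $e$, let $\widetilde T_e$ be its length, $Q^{(e)}=\sum_z\lceil\lambda^*_z\widetilde T_e/3\rceil$ its quota and $S^{(e)}$ its flag count. The number of round-robin pulls in that epoch is exactly $\min(S^{(e)},Q^{(e)})$. So at an epoch end the cumulative count is $\sum_e\min(S^{(e)},Q^{(e)})$, and this is not bounded below by $\min(S_t,\sum_e Q^{(e)})$. Surplus \textsc{D}-flags in an epoch whose quota is already exhausted are diverted to $\rho$, so they cannot make up a deficit in another epoch.

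Concretely, take epochs of lengths $N$ and $2N$, with all flags \textsc{D} in the first and a single \textsc{D} in the second. Then $S_{3N}=N+1>t/3$, yet there are only $Q^{(1)}+1\approx N/3$ D-design pulls. So the event whose probability you bounded does not imply the conclusion. Your per-epoch case split is the right bookkeeping. But in an epoch where the quota is not met, it needs $S^{(e)}\ge\widetilde T_e/3$ for that particular epoch, and the global event $\{S_t>t/3\}$ does not provide this.

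The repair is the per-epoch bound you mention only as an aside. Since $S^{(e)}\sim\mathrm{Bin}(\widetilde T_e,1/2)$ and every epoch has $\widetilde T_e\ge\lceil 72\log T\rceil$, Lemma~\ref{lem:chernoff} gives $\Pr\{S^{(e)}\le\widetilde T_e/3\}\le T^{-2}$ for each $e$. On the complement, each epoch contributes $\min(S^{(e)},Q^{(e)})\ge\widetilde T_e/3$, because also $Q^{(e)}\ge\widetilde T_e/3$. These contributions sum to at least $t/3$ at every epoch end. A union bound over at most $\log_2 T$ epochs then gives failure probability at most $1/T$.

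Contrary to your remark, per-epoch bounds do not yield the claim at times strictly inside an epoch. Over the first few rounds of an epoch, the within-epoch flag fraction drops below $1/3$ with constant probability. Earlier epochs need not leave any slack beyond rounding to compensate. So the epoch-end version is the one this argument actually proves. It is also what is used downstream, since $\tau$ is an epoch end.
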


\begin{proof}
During Phase~I, at each of the $\widetilde{T}$ rounds, an arm is selected independently according to the probability distribution $\lambda$ corresponding to the $D$-optimal design. Let $\mathcal{A}$ denote the support of $\lambda$. 

For each round $i \in \{1,\dots,\widetilde{T}\}$, define the indicator random variable
\[
Z_i =
\begin{cases}
1, & \text{if the arm selected at round $i$ belongs to } \mathcal{A}, \\
0, & \text{otherwise}.
\end{cases}
\]
By construction, the random variables $\{Z_i\}_{i=1}^{\widetilde{T}}$ are independent Bernoulli random variables. Moreover, since the sampling distribution is $\lambda$ and $\mathcal{A}$ is its support, we have $\mathbb{E}[Z_i] = \sum_{a \in \mathcal{A}} \lambda(a) = 1/2$, where the last equality follows from the definition of Part~I of the algorithm. Next, let
$S_t = \sum_{i=1}^{t} Z_i$, which counts the total number of times an arm from $\mathcal{A}$ is added to $S$ till time step $t$. The expected value of $S_t$ is therefore
\[
\nu := \mathbb{E}[S_t] = \sum_{i=1}^{t} \mathbb{E}[Z_i] = t/2.
\]
We wish to lower bound $S_t$. Observe that
$\Pr\!\left( S_t \le \frac{t}{3} \right)
= \Pr\!\left( S_t \le (1-\varepsilon)\nu \right)$,
where $\varepsilon = 1/3$. Applying the lower-tail Chernoff bound from Lemma~11 yields
\[
\Pr\!\left\lbrace S_t \le (1-\varepsilon)\nu \right)
\le \exp\!\left( -\frac{\nu \varepsilon^2}{2} \right)
= \exp\!\left( -\frac{t}{36} \right\rbrace.
\]
For any $t \ge 72 \log T$, we have
$\exp\!\left( -\frac{t}{36} \right) \le \frac{1}{T^2}$.
Hence $\Pr\!\left( S \ge \frac{t}{3} \right) \ge 1 - \frac{1}{T^2}$. Therefore, with probability at least $1 - \frac{1}{T^2}$, at a particular time step $t\ge 72 \log T$, the number of times arms from the $D$-optimal design have been pulled is $t/3$. Given that the number of time steps is at most $T$, applying the union bound, we get $1-\frac{1}{T}$.
\end{proof}

\begin{lemma}[Hoeffding Inequality]\label{lem:hoeffding}
	Let $Z_1, \ldots, Z_n$ be independent random variables, with mean $\mu$ and subgaussianity parameter $\sigma$. Consider the empirical mean $\widehat{\mu} = \frac{1}{n}\sum_{r=1}^n Z_r$. Then, we have 
	$\Pr \left\{ |\widehat{\mu}-\mu| \geq  \epsilon \right\} \leq 2\exp \left( -\frac{n \epsilon^2}{2\sigma^2} \right)$.
\end{lemma}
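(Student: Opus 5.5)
The statement to prove is the Hoeffding inequality (Lemma~\ref{lem:hoeffding}). I will use the standard Chernoff (exponential moment) method, applied separately to the upper and lower tails and then combined by a union bound.

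\textbf{Centering and the moment generating function.} Write $Y_r = Z_r - \mu$. By the sub-Gaussian assumption, $\mathbb{E}[\exp(\zeta Y_r)] \le \exp(\sigma^2\zeta^2/2)$ for all $\zeta\in\mathbb{R}$. Let $S = \sum_{r=1}^n Y_r = n(\widehat{\mu}-\mu)$. Independence of the $Z_r$ factorizes the moment generating function:
\[
\mathbb{E}[e^{\zeta S}] = \prod_{r=1}^n \mathbb{E}[e^{\zeta Y_r}] \le \exp(n\sigma^2\zeta^2/2).
\]

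\textbf{Upper tail.} For any $\zeta>0$, Markov's inequality applied to $e^{\zeta S}$ gives
\[
\Pr\{\widehat{\mu}-\mu\ge\epsilon\} = \Pr\{S\ge n\epsilon\} \le \exp\!\left(-\zeta n\epsilon + n\sigma^2\zeta^2/2\right).
\]
The exponent is minimized at $\zeta = \epsilon/\sigma^2$, which yields the bound $\exp(-n\epsilon^2/(2\sigma^2))$.

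\textbf{Lower tail and combination.} Applying the same argument to $-S$ (sub-Gaussianity holds for negative $\zeta$ as well) gives the identical bound for $\Pr\{\widehat{\mu}-\mu\le-\epsilon\}$. A union bound over the two tails produces the factor $2$ and completes the proof.

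All steps are routine. The only point requiring care is that independence is what lets the moment generating function factor; in the paper's later adaptive use, one would need conditional sub-Gaussianity and a martingale version (iterated conditioning) instead.
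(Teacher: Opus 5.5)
Your proof is correct. It is the standard Chernoff/exponential-moment argument, and the paper itself gives no proof here, stating the lemma as a standard fact. Your choice $\zeta=\epsilon/\sigma^2$ requires $\epsilon>0$; the case $\epsilon=0$ is trivial because the right-hand side is then $2$.

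The same computation, using $\prod_j \mathbb{E}[e^{\zeta a_j Y_j}]\le \exp\bigl(\sigma^2\zeta^2\sum_j a_j^2/2\bigr)$, directly yields the weighted-sum ``extension'' that the paper invokes without proof in Lemma~\ref{lem:mult_concentration}.
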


\begin{restatable}{lemma}{LemmaMainConcentrationHoeffding} \label{lem:mult_concentration}
    Let $x_1,\dots,x_t\in \bb{R}^d$ be a fixed set of vectors and let $r_1,\dots,r_t$ be independent $\sigma$-sub-Gaussian random variables satisfying $\bb{E}[r_t]=
        \langle x_t,\thta \rangle$ for some $\thta \in \bb{R}^d$. Further, let $V_t=\sum_{j=1}^t x_jx_j^{\T}$ and
    $\thth_t = V_t^{-1}\left( \sum_{j} r_j x_j \right)$ be the least squares estimator of $\thta$ at round $t$. Then, for any $z\in \bb{R}^d$ and $\epsilon \geq 0$ we have
    \begin{align*}
       \Pr\{|\langle z, \thth_t\rangle - \langle z, \thta\rangle| \geq \epsilon\} \leq 2\exp\left(\frac{-\epsilon^2}{2\sigma^2z^\T V_t^{-1}z}\right)~.
    \end{align*}
\end{restatable}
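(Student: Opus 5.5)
The plan is to write the prediction error as a fixed linear combination of the independent noise variables and then apply sub-Gaussian closure under weighted sums. Let $\eta_j := r_j - \langle x_j, \thta\rangle$; these are independent, zero-mean, and $\sigma$-sub-Gaussian. Since $V_t$ is invertible (implicitly assumed, as $\thth_t$ is defined), we have
\begin{align*}
\thth_t - \thta = V_t^{-1}\Big(\sum_{j=1}^t r_j x_j\Big) - V_t^{-1}V_t\thta = V_t^{-1}\sum_{j=1}^t \eta_j x_j .
\end{align*}
Hence, for fixed $z$, we get $\langle z, \thth_t - \thta\rangle = \sum_{j=1}^t a_j \eta_j$ with deterministic weights $a_j := z^\T V_t^{-1} x_j$. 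These weights do not depend on the rewards because the $x_j$ are fixed, and this is exactly why the lemma is stated for a fixed design.

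Next, I would bound the moment generating function. By independence and the sub-Gaussian assumption, for every $\zeta\in\mathbb{R}$,
\begin{align*}
\E\Big[\exp\Big(\zeta \sum_j a_j\eta_j\Big)\Big] = \prod_j \E[\exp(\zeta a_j \eta_j)] \le \exp\Big(\frac{\sigma^2\zeta^2}{2}\sum_j a_j^2\Big).
\end{align*}
The sum of squared weights then collapses neatly:
\begin{align*}
\sum_j a_j^2 = z^\T V_t^{-1}\Big(\sum_j x_jx_j^\T\Big)V_t^{-1} z = z^\T V_t^{-1} z .
\end{align*}
So $\langle z, \thth_t - \thta\rangle$ is $\sigma\|z\|_{V_t^{-1}}$-sub-Gaussian.

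Finally, I would use a Chernoff argument. Applying Markov's inequality to $\exp(\zeta X)$ and optimizing at $\zeta=\epsilon/(\sigma^2 z^\T V_t^{-1}z)$ gives the one-sided tail $\exp(-\epsilon^2/(2\sigma^2 z^\T V_t^{-1} z))$. Doing the same for $-X$ and taking a union bound produces the factor $2$. Alternatively, this last step is the same computation that underlies Lemma~\ref{lem:hoeffding}, applied to the single sub-Gaussian variable $X$.

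There is no real obstacle here. The one point needing care is justifying that the $a_j$ are non-random, which holds because the design is fixed. When the lemma is later applied inside Phase~I, the arms are chosen randomly but independently of the rewards, or deterministically via round robin. That application would then require conditioning on the sequence of selected arms, but that step belongs to the later use of the lemma, not to this statement.
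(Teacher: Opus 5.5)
Your proposal is correct and follows essentially the same route as the paper. Both write $\langle z,\thth_t-\thta\rangle$ as the weighted noise sum $\sum_j (z^\T V_t^{-1}x_j)\,\eta_j$, apply sub-Gaussian concentration for weighted sums of independent sub-Gaussians, and collapse $\sum_j (z^\T V_t^{-1}x_j)^2$ to $z^\T V_t^{-1}z$. The only difference is that you spell out the MGF and Chernoff step, which the paper invokes as an ``extension of'' its Hoeffding lemma; you also correctly say ``independent'' where the paper says ``i.i.d.''
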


\begin{proof}
    Observe that
    \begin{align}
        \thth_t-\thta&=V_t^{-1}\sum_{j=1}^{t}r_jx_j-\thta=V_t^{-1}\sum_{j=1}^{t}r_jx_j- V_t^{-1}\sum_{j=1}^{t}\langle x_j,\thta\rangle x_j=V_t^{-1}\sum_{j=1}^{t}\left(r_j-\langle x_j, \thta \rangle\right)x_j. \nonumber\\
        \implies \langle z, \thth_t-\thta \rangle &= z^\T V_t^{-1}\sum_{j=1}^{s}\left(r_j-\langle x_j, \thta \rangle\right)x_j = \sum_{j=1}^{t}\left(r_j-\langle x_j, \thta \rangle\right)z^\T V_t^{-1}x_j~. \label{eq:means}
    \end{align}
    Now, observe that the quantity $r_j-\langle x_j, \thta \rangle = r_j-\E[r_j]$ is a zero mean $\sigma$ sub-gaussian random variable. Also, let $Q=\sum_{j=1}^{t}\left(r_j-\langle x_j, \thta \rangle\right)z^\T V_t^{-1}x_j$. Then, $Q$ is a weighted sum of i.i.d. zero-mean $\sigma$ sub-Gaussian random variables. Hence, via the extension of Lemma \ref{lem:hoeffding}, we have
\begin{align*}
    \Pr\bigg\{\bigg|\sum_{j=1}^{t}\left(r_j-\langle x_j, \thta \rangle\right)z^\T V_t^{-1}x_j\bigg|\geq\epsilon\bigg\} &\leq 2\exp\left(\frac{-\epsilon^2}{2\sigma^2\sum_{j=1}^{t}(z^\T V_t^{-1}x_j)^2}\right) \\&= 2\exp\left(\frac{-\epsilon^2}{2\sigma^2\sum_{j=1}^{t}(z^\T V_t^{-1}x_jx_j^\T (V_t^{-1})^\T z)}\right)\\&= 2\exp\left(\frac{-\epsilon^2}{2\sigma^2z^\T V_t^{-1}\left(\sum_{j=1}^{t}x_jx_j^\T\right) V_t^{-1} z)}\right)=2\exp\left(\frac{-\epsilon^2}{2\sigma^2z^\T V_t^{-1}z}\right)
\end{align*}
Thus, via~\eqref{eq:means}, we have $\Pr\bigg\{\bigg|\langle z, \thth_t-\thta \rangle\bigg|\geq\epsilon\bigg\} \leq 2\exp\left(\frac{-\epsilon^2}{2\sigma^2z^\T V_t^{-1}z}\right)$.
\end{proof}


\begin{lemma}[Good Event $\mathcal{G}_1$]\label{lem:goodevent_G1}
Let $x_1,\dots,x_t$ be the set of arms pulled in Phase I of \textsc{FairLinBandit} up to timestep $t$ based on the subroutine \textsc{PullArms} and $r_1,\dots,r_t$ be the corresponding rewards. Define $V_t = \sum_{j=1}^{t}x_jx_j^\T$, $s_t = \sum_{j=1}^{t}r_jx_j$ and $\thth_t = V_t^{-1}s_t$. Suppose Assumption~\ref{ass:all} holds. Let $\mathcal{G}_1$ be the event that at each timestep $t \le \tau$ in Phase I, the arms from D-optimal design have been pulled at least $t/3$ times, and 
$\norm{\thth_t - \thta}{V_t} \leq  4\sqrt{d\sigma^2\log T}$.
Then, we have $\Pr\{\mathcal{G}_1\}\geq1-\frac{3}{T}$~.
\end{lemma}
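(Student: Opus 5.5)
The proof combines two high-probability events by a union bound. The first is the event from Lemma~\ref{lem:phase_I_enough_pulls}: at every time step $t\ge 72\log T$ of Phase~I, the D-optimal arms have been pulled at least $t/3$ times. It fails with probability at most $1/T$. Since the first check of the stopping rule happens only after the initial epoch of $\lceil 72\log T\rceil$ rounds, this covers every time at which $\widehat\theta_t$ is actually used. The remaining budget of $2/T$ goes to the concentration statement $\|\widehat\theta_t-\theta^*\|_{V_t}\le 4\sqrt{d\sigma^2\log T}$, required simultaneously for all $t\le\tau\le T$.

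For the concentration part, I would first condition on the sequence of arms $x_1,\dots,x_t$ chosen by \textsc{PullArms}. This is legitimate because in Phase~I the arms come from coin flips, $\rho$, and the round-robin schedule, none of which depend on the rewards. So, conditioned on the arm sequence, the rewards are independent $\sigma$-sub-Gaussian and Lemma~\ref{lem:mult_concentration} applies with the arms treated as fixed. (The stopping time $\tau$ depends on rewards, but I only need the bound at each fixed $t$ followed by a union bound over $t\le T$, so this causes no problem.)

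To pass from the directional bound to the $V_t$-norm, write $\|\widehat\theta_t-\theta^*\|_{V_t}=\|V_t^{-1/2}\xi_t\|_2$, where $\xi_t=\sum_j \eta_j x_j$ is the noise sum. Take a $1/2$-net $\mathcal N$ of the unit sphere with $|\mathcal N|\le 5^d$. For each $u\in\mathcal N$, apply Lemma~\ref{lem:mult_concentration} with $z=V_t^{1/2}u$; then $z^\top V_t^{-1}z=1$, so
\[
|\langle u, V_t^{1/2}(\widehat\theta_t-\theta^*)\rangle|\le \sqrt{2\sigma^2\log(2\cdot 5^d T^2)}
\]
fails with probability at most $1/(5^dT^2)$. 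The standard net argument gives $\|w\|\le 2\max_{u\in\mathcal N}\langle u,w\rangle$. Hence, with probability at least $1-1/T^2$ at each $t$, and after a union bound over $t\le T$ with probability at least $1-1/T$ overall,
\[
\|\widehat\theta_t-\theta^*\|_{V_t}\le 2\sigma\sqrt{2\log(2T^2)+2d\log 5}\le 4\sqrt{d\sigma^2\log T}
\]
for $T$ moderately large, which the theorem premise allows. This leaves $1/T$ of the total budget as slack.

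Invertibility of $V_t$ comes from the first event: the D-optimal support spans $\mathbb R^d$ and receives enough pulls, so $V_t\succeq \frac{t}{3}U(\lambda^*)$ is positive definite. The $t/3$ pull count is also what the later Lemma~\ref{lem:phase1Length} needs, via $x^\top V_t^{-1}x\le 3d/t$; in this lemma it mainly guarantees the estimator is well defined. Adding the failure probabilities of the two events gives $\Pr\{\mathcal G_1\}\ge 1-3/T$.

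The main obstacle is the constant bookkeeping. The net argument costs a factor $2$, and $\log 5\cdot d+\log(2T^2)$ must be absorbed into $4d\log T$. This needs $d\ge1$ and $T$ larger than a small constant, and it is where the phrase ``moderately large $T$'' enters.
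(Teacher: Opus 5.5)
Your proposal is correct and follows essentially the paper's argument: a $1/2$-net reduction of $\|\widehat\theta_t-\theta^*\|_{V_t}$ to directional bounds, Lemma~\ref{lem:mult_concentration} applied with $z=V_t^{1/2}u$ so that $z^\top V_t^{-1}z=1$, union bounds over the net and over $t\le T$, and the extra $1/T$ from Lemma~\ref{lem:phase_I_enough_pulls}. The paper nets the unit ball with $6^d$ points and spends $2/T$ on concentration; it also skips the conditioning on the reward-independent arm sequence, which you handle more carefully.

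One bookkeeping correction: absorbing $\log 2+d\log 5$ into $2(d-1)\log T$ needs $d\ge 2$, not $d\ge 1$, and the paper likewise assumes ``relatively large $d$''. For $d=1$ the sphere $\{\pm 1\}$ is its own net, so no factor $2$ is lost, and $\sqrt{2\sigma^2\log(2T^2)}\le 4\sigma\sqrt{\log T}$ holds directly.
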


\begin{proof}
To prove the lemma, we first establish a bound on $  \norm{\thth_t - \thta}{V_t} $ based on the arm vectors and the $V_t$ matrix. Since arm vectors lie in the unit ball in $\mathbb{R}^d$, we write $  \norm{\thth_t - \thta}{V_t} $ in terms of this unit ball. Let $\cB$ denote the unit ball in $\mathbb{R}^d$. Then

    \begin{align*}
        \norm{\thth_t - \thta}{V_t}  
        = \norm{ V_t^{\frac{1}{2}}(\thth_t - \thta) }{2} = \max_{y  \in \cB } \ \dotprod{y}{V_t^{\frac{1}{2}}(\thth_t - \thta)}.
    \end{align*}
    We construct an $\varepsilon$-net for the unit ball, denoted as $\enet$. For any $y \in \cB$, we define $y_\varepsilon(y) \coloneqq \argmin_{b \in \enet} \norm{b - y}{2}$. We can now write 
    \begin{align*}
        \norm{\thth_t - \thta}{V_t}  
        &= \max_{y  \in \cB } \ \dotprod{y - y_\varepsilon}{V_t^{\frac{1}{2}}(\thth_t - \thta)} + \max_{y  \in \cB }\dotprod{y_\varepsilon(y)}{V_t^{\frac{1}{2}}(\thth_t - \thta)}\\
        & \leq \max_{y  \in \cB } \norm{y - y_\varepsilon}{2}\norm{V_t^{\frac{1}{2}}(\thth_t - \thta)}{2} + \max_{y  \in \cB }|\dotprod{y_\varepsilon(y)}{V_t^{\frac{1}{2}}(\thth_t - \thta)}|\\
        & \leq \varepsilon \norm{(\thth_t - \thta)}{V_t} + \max_{y_\varepsilon  \in C_\varepsilon }|\dotprod{y_\varepsilon}{V_t^{\frac{1}{2}}(\thth_t - \thta)}|.
    \end{align*}
We arrive at the last inequality by noting that $\max_{y  \in \cB } \norm{y - y_\varepsilon}{2}$ is less than $\varepsilon$, which follows from the properties of the $\varepsilon$-net. Further in the term $\max_{y  \in \cB }\dotprod{y_\varepsilon(y)}{V_t^{\frac{1}{2}}(\thth_t - \thta)}$ we supress the dependence of $y_\varepsilon$ on $y$ and write the equivalent formulation $\max_{y_\varepsilon  \in C_\varepsilon }|\dotprod{y_\varepsilon}{V_t^{\frac{1}{2}}(\thth_t - \thta)}|$. Rearranging, we obtain 
    \begin{align}
        \norm{\thth_t - \thta}{V_t} \leq \frac{1}{1-\varepsilon} \max_{y_\varepsilon  \in C_\varepsilon }|\dotprod{V_t^{\frac{1}{2}}y_\varepsilon }{\thth_t - \thta}|.\label{ineq:base_bound}
    \end{align}
    Next, we show that $|\dotprod{ V_t^{\frac{1}{2}}y_\varepsilon}{\thth_t - \thta}| $ is small for all values of $y_\varepsilon$. This would naturally imply a bound on $\max_{y_\varepsilon  \in C_\varepsilon }|\dotprod{V_t^{\frac{1}{2}}y_\varepsilon }{\thth_t - \thta}|$. For this, consider any vector $z\in \mathbb{R}^d$ satisfying $||z||_2\leq 1$. Using Lemma \ref{lem:mult_concentration} with $z \rightarrow V_t^\frac{1}{2}z$ and $\epsilon = \sqrt{2\sigma^2\log (T^2|\mathcal{C}_\epsilon|)||z||^2}$, we have

\begin{align*}
    \Pr\{|\dotprod{V_t^\frac{1}{2}z}{\thta-\thth_t}|\geq \sqrt{2\sigma^2\log (T^2|\mathcal{C}_\epsilon|)||z||^2}\}\leq 2\exp\left(-\frac{2\sigma^2\log (T^2|\mathcal{C}_\epsilon|)||z||^2}{2\sigma^2z^\T V_t^{\frac{1}{2}}V_t^{-1}V_t^{\frac{1}{2}}z}\right)=\frac{2}{T^2|\mathcal{C}_\epsilon|}
\end{align*}
Thus, with probability atleast $1-\frac{2}{T^2|\mathcal{C}_\epsilon|}$, we have
\begin{align*}
    |\dotprod{V_t^\frac{1}{2}z}{\thta-\thth_t}| \leq \sqrt{2\sigma^2\log (T|\mathcal{C}_\epsilon|)||z||^2} \leq \sqrt{2\sigma^2\log (T^2|\mathcal{C}_\epsilon|)} \tag{$||z||^2\leq 1$}
\end{align*}

    Using the above relation, we can guarantee the following for a particular $y_{\epsilon}$ with a probability $1 - \frac{2}{T^2|C_{\epsilon}|}$,
    \begin{align*}
      \frac{1}{1-\varepsilon}|\dotprod{y_\varepsilon V_t^{\frac{1}{2}}}{\thth_t - \thta}| \leq  \frac{1}{1-\varepsilon} \sqrt{{2\sigma^2\log (T^2|\mathcal{C}_\epsilon|)}}
    \end{align*}
     Taking a union bound over all elements in  $\enet$ gives a probability bound of $1 - \frac{2}{T^2}$. Thus, we can guarantee
     \begin{align*}
        \frac{1}{1-\varepsilon} \max_{y_\varepsilon  \in C_\varepsilon }|\dotprod{V_t^{\frac{1}{2}}y_\varepsilon }{\thth_t - \thta}| \leq \frac{1}{1-\varepsilon} \sqrt{{2\sigma^2\log (T^2|\mathcal{C}_\epsilon|)}}
     \end{align*}
    Next, we note that  $|\enet| \leq \left( \frac{3}{\varepsilon}\right)^d$ \citep{lattimore2020bandit}, and by choosing $\varepsilon = 1/2$ we get (for large enough $T$ and relatively large $d$)
    \begin{align*}
        \frac{1}{1-\varepsilon} \max_{y_\varepsilon  \in C_\varepsilon }|\dotprod{V_t^{\frac{1}{2}}y_\varepsilon }{\thth_t - \thta}| \leq 4\sqrt{d\sigma^2\log T}
    \end{align*}
    Then we make use of inequality \eqref{ineq:base_bound} to conclude
 $\norm{\thth_t - \thta}{V_t} \leq 4\sqrt{d\sigma^2\log T}$. Finally, we take a union bound over all timesteps $\tau$ in Phase I. Since $\tau < T$, this, along with the event in Lemma \ref{lem:phase_I_enough_pulls} gives us
  $\prob \left\{ \mathcal{G}_1 \right\} \geq  1- \frac{2T }{T^2}-\frac{1}{T} = 1-\frac{3}{T}$.

\end{proof}

\begin{lemma}[Kiefer–Wolfowitz theorem \citep{lattimore2020bandit}]\label{lem:KW}
Suppose $\ca{X}$ is compact and spans $\bb{R}^d$. Let $\Delta(\ca{X})$ be the set of probability distributions supported on $\ca{X}$. For any $\lambda \in \Delta(\ca{X})$, define $U(\lambda) = \sum_{x \in \ca{X}} \lambda_x x x^\T$. Then there exists a $\lambda^* \in \Delta(\ca{X})$ with support $\text{Supp}(\lambda^*) \le d(d+1)/2$ such that
\begin{itemize}
\item $\lambda^*$ is a maximizer of the D-optimal design objective $f(\lambda)=\log \det(U(\lambda))$.
\item $\lambda^*$ is a minimizer of the G-optimal design objective $g(\lambda) = \max_{x \in \ca{X}} ||x||^2_{U(\lambda)^{-1}}$ and $g(\lambda^*) = d$.
\end{itemize}
\end{lemma}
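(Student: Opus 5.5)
The plan is the standard convex-duality argument for optimal design, carried out on the set of design matrices rather than on distributions directly. Let $\mathcal{M}=\text{conv}\{xx^\T : x\in\ca{X}\}$ inside the space $\mathbb{S}^d$ of symmetric matrices, which has dimension $m=d(d+1)/2$. Since $\ca{X}$ is compact, the map $x\mapsto xx^\T$ is continuous, so its image is compact. By Carathéodory in finite dimension, $\mathcal{M}$ is then compact, and every $U\in\mathcal{M}$ equals $U(\lambda)$ for some finitely supported $\lambda\in\Delta(\ca{X})$. Because $\ca{X}$ spans $\bb{R}^d$, choosing $d$ linearly independent arms with equal weights gives a design with $\det U(\lambda)>0$. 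Hence $\log\det$ is finite somewhere on $\mathcal{M}$. It is continuous (as an extended-valued function) and strictly concave on the positive definite cone, so a maximizer $U^*=U(\lambda^*)\succ 0$ exists. This settles the first bullet, modulo the support size.

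\emph{Lower bound $g(\lambda)\ge d$.} For any $\lambda$ with $U(\lambda)\succ0$, compute
$\sum_x\lambda_x\|x\|^2_{U(\lambda)^{-1}}=\mathrm{tr}\big(U(\lambda)^{-1}U(\lambda)\big)=d$.
A maximum is at least an average, so $g(\lambda)\ge d$. If $U(\lambda)$ is singular, then $g(\lambda)=\infty$, because some arm has a component in the kernel since $\ca{X}$ spans $\bb{R}^d$.

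\emph{Upper bound $g(\lambda^*)\le d$ via first-order optimality.} For any $x\in\ca{X}$ and $\alpha\in[0,1]$, the matrix $(1-\alpha)U^*+\alpha xx^\T$ lies in $\mathcal{M}$. Its log-determinant has derivative at $\alpha=0$ equal to
$\mathrm{tr}\big(U^{*-1}(xx^\T-U^*)\big)=\|x\|^2_{U^{*-1}}-d$.
Maximality of $U^*$ forces this derivative to be $\le 0$, so $\|x\|^2_{U^{*-1}}\le d$ for every $x$. Hence $g(\lambda^*)=d$, and by the lower bound $\lambda^*$ minimizes $g$.

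\emph{Support size.} This is the step I expect to be the main obstacle, since plain Carathéodory in $\mathbb{S}^d$ only gives $m+1$ points. To save one point, I would show that $U^*$ lies on the relative boundary of $\mathcal{M}$. Suppose instead that $U^*$ were in the interior of $\mathcal{M}$ relative to $\mathbb{S}^d$, which is full-dimensional since the matrices $xx^\T$ span $\mathbb{S}^d$ when $\ca{X}$ spans $\bb{R}^d$. Then $U^*+\epsilon I\in\mathcal{M}$ for small $\epsilon>0$, and $\det(U^*+\epsilon I)>\det U^*$, contradicting maximality.

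So $U^*$ lies on the boundary and has a supporting hyperplane $H$. Then $U^*\in H\cap\mathcal{M}$, which is the convex hull of those $xx^\T$ lying in $H$, inside an affine space of dimension $m-1$. Carathéodory there writes $U^*$ as a convex combination of at most $m=d(d+1)/2$ rank-one matrices $xx^\T$. This yields a $\lambda^*$ with $|\text{Supp}(\lambda^*)|\le d(d+1)/2$ and the same $U(\lambda^*)$. Both bullets depend on $\lambda^*$ only through $U(\lambda^*)$, so they continue to hold for this sparse design.
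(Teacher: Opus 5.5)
The paper gives no proof of this lemma; it quotes it from \citet{lattimore2020bandit}, so the comparison is with that standard proof. Your first two steps match it: the trace identity gives $g \ge d$, and first-order optimality of $\log\det$ gives $\|x\|^2_{U^{*-1}} \le d$. Working on the compact matrix set $\mathcal{M}$ rather than differentiating in $\lambda$ is a clean way to get existence of a maximizer when $\mathcal{X}$ is infinite. The overall proof is correct, but the support step contains a false claim.

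It is not true that $\{xx^\top : x \in \mathcal{X}\}$ spans $\mathbb{S}^d$ whenever $\mathcal{X}$ spans $\mathbb{R}^d$. For $\mathcal{X} = \{e_1,\dots,e_d\}$ these matrices span only the diagonal matrices. Then $\mathcal{M}$ is a $(d-1)$-simplex and $U^* = I/d$ lies in its \emph{relative interior}, so your stated goal, ``$U^*$ lies on the relative boundary of $\mathcal{M}$'', is false in general. Linear spanning would not be enough either: for $\mathcal{X} = S^{d-1}$ every $xx^\top$ lies in the hyperplane $\{\mathrm{tr} = 1\}$, so $\mathcal{M}$ has empty interior in $\mathbb{S}^d$.

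The claim is harmless, because you only need that $U^*$ is not an interior point of $\mathcal{M}$ in $\mathbb{S}^d$. Your $U^*+\epsilon I$ contradiction proves this with no dimension assumption. A supporting hyperplane then exists at any non-interior point of a convex set; if $\mathcal{M}$ is lower-dimensional, any hyperplane containing its affine hull works. Delete the full-dimensionality remark and the argument stands.

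You also already have the hyperplane explicitly. Take $H = \{M \in \mathbb{S}^d : \mathrm{tr}(U^{*-1}M) = d\}$. By your third step, $\mathrm{tr}(U^{*-1}xx^\top) \le d$ for every $x \in \mathcal{X}$, with equality at $U^*$. Since $0 \notin H$, this is a genuine $(m-1)$-dimensional affine hyperplane supporting $\mathcal{M}$ at $U^*$, and Carath\'eodory inside $H$ gives at most $m = d(d+1)/2$ points.

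The textbook proof uses exactly this hyperplane, constructively. Once the support exceeds $d(d+1)/2$, there is a linear dependence $\sum_a v_a aa^\top = 0$. The equality $\|a\|^2_{U^{*-1}} = d$ on the support forces $\sum_a v_a = 0$. Mass is then shifted along $v$, keeping $U(\lambda)$ fixed, until some weight hits zero. Your abstract supporting-hyperplane route reaches the same bound, but it is less explicit, and it is where the error entered.
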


\begin{lemma} 
\label{lem:additive_bound_infinite}
Suppose the hypothesis of Lemma~\ref{lem:goodevent_G1} holds. Then, under the event $\mathcal{G}_1$, we have
$|\dotprod{x}{\thth_t} - \dotprod{x}{\thta}|  \leq 4\sqrt{\frac{3d^2\sigma^2\log T}{t}}$ for all arms $x \in \cX$ and for all rounds $t \le \tau$ of Phase I.
 \end{lemma}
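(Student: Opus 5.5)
The plan is to combine the self-normalized bound in the event $\mathcal{G}_1$ with a Cauchy--Schwarz step, and then control the predictive variance $\|x\|_{V_t^{-1}}$ using the D-optimal pulls and the Kiefer--Wolfowitz theorem (Lemma~\ref{lem:KW}).

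\textbf{Step 1 (Cauchy--Schwarz).} For any arm $x\in\cX$ and any round $t\le\tau$,
\[
|\dotprod{x}{\thth_t}-\dotprod{x}{\thta}| = |\dotprod{V_t^{-1/2}x}{V_t^{1/2}(\thth_t-\thta)}| \le \|x\|_{V_t^{-1}}\,\|\thth_t-\thta\|_{V_t}.
\]
On $\mathcal{G}_1$, Lemma~\ref{lem:goodevent_G1} gives $\|\thth_t-\thta\|_{V_t}\le 4\sqrt{d\sigma^2\log T}$ simultaneously for all $t\le\tau$.

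\textbf{Step 2 (variance bound).} It then suffices to show $\|x\|^2_{V_t^{-1}}\le 3d/t$ for every $x\in\cX$. Multiplying the two bounds gives
\[
4\sqrt{d\sigma^2\log T}\cdot\sqrt{3d/t}=4\sqrt{3d^2\sigma^2\log T/t},
\]
which is exactly the claimed bound.

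To get the variance bound, I use the other half of $\mathcal{G}_1$: by round $t$, the D-optimal arms have been pulled at least $t/3$ times. Since the round-robin allocates pulls proportionally to $\lambda^*$, the D-part of the design satisfies $V_t\succeq \frac{t}{3}U(\lambda^*)$. The John-ellipsoid pulls add only positive semidefinite terms $x_jx_j^\top$, so they can only increase $V_t$. Matrix inversion reverses the order, so $V_t^{-1}\preceq \frac{3}{t}U(\lambda^*)^{-1}$. Hence
\[
x^\top V_t^{-1}x\le \frac{3}{t}\,\|x\|^2_{U(\lambda^*)^{-1}}\le \frac{3}{t}\,g(\lambda^*)=\frac{3d}{t},
\]
using Lemma~\ref{lem:KW}.

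\textbf{Main obstacle.} The delicate step is justifying $V_t\succeq \frac{t}{3}U(\lambda^*)$ at an arbitrary round $t$, not only at epoch ends. The difficulties are the ceilings $\lceil\lambda^*_z\widetilde T/3\rceil$, the interruption of round-robin cycles mid-epoch, and the doubling schedule that restarts the counters $c_z$ each epoch.

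My approach is as follows. Within each completed epoch of length $\widetilde T$, each support arm $z$ receives at least $\lambda^*_z\widetilde T/3$ D-pulls on the Chernoff event of Lemma~\ref{lem:phase_I_enough_pulls}. This holds because an epoch contains about $\widetilde T/2$ D-flips, which exceeds the $\sum_z\lceil\lambda^*_z\widetilde T/3\rceil$ pulls the round-robin needs to finish. Summing over the epochs up to $t$ gives a lower bound proportional to $\sum_k \widetilde T_k$, and this sum is comparable to $t$ because the epoch lengths double. In the regime that matters, the condition is checked only at epoch ends, where $t$ equals the cumulative length. I would therefore state the bound at those checkpoints and absorb any slack into the constant, which is what the factor $3$ in the displayed variance bound reflects. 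If a round-robin cycle is unfinished, the incomplete cycle only adds PSD terms, so dropping it preserves the inequality.
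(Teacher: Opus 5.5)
Your proposal is correct and follows the paper's proof step for step: Cauchy--Schwarz, then $V_t\succeq\frac{t}{3}U(\lambda^*)$ from the $t/3$ D-optimal pulls guaranteed by $\mathcal{G}_1$ together with the round-robin, then the Kiefer--Wolfowitz bound $\|x\|^2_{U(\lambda^*)^{-1}}\le d$, then the $\mathcal{G}_1$ bound $\|\widehat{\theta}_t-\theta^*\|_{V_t}\le 4\sqrt{d\sigma^2\log T}$. The paper simply asserts the round-robin step, so your epoch-by-epoch accounting is extra care rather than a different route; if you write it out, keep in mind the following points:
\begin{itemize}
\item Per-epoch D-pull counts need the Chernoff argument applied to each epoch separately, because the event of Lemma~\ref{lem:phase_I_enough_pulls} is cumulative.
\item Finishing the round-robin within an epoch needs $\widetilde T\gtrsim d^2$ to cover the ceilings.
\item At epoch ends $t=\sum_k\widetilde T_k$ exactly, so there is no slack to absorb there; the factor $3$ comes from the $\widetilde T/3$ cap.
\item Mid-epoch you would lose up to a factor $2$, which is why restricting to the checkpoints, where the lemma is actually used, is the right call.
\end{itemize}
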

\begin{proof}
  Fix some $t \le \tau$. Note that  $\dotprod{x}{\thta-\thth_t}= \dotprod{V_t^{-1/2}x}{V_t^{1/2}(\thta-\thth)}$. Then, using H\"{o}lder's inequality, we have
    \begin{align}
        |\dotprod{x}{\thta -\thth_t}| &\leq \norm{x}{V_t^{-1}} \norm{\thta - \thth_t}{V_t}~. \label{ineq:base}
    \end{align}

    Now, note that since $\mathcal{G}_1$ holds, Lemma \ref{lem:phase_I_enough_pulls} ensures that the number of D-optimal exploration rounds is $\geq t/3$. Algorithm 1 enforces a round-robin selection such that every arm $z$ in the support of $\lambda$ is pulled at least $\lceil \lambda_z t/3 \rceil$ times. 
 By definition, $V_t$ is the sum of positive semi-definite matrices. We can lower bound $V_t$ by considering only the subset of rounds in which the algorithm explores using the D-optimal design. Note that the contribution to $V_t$ by each such arm $z$ in the D-optimal design is given as $\left\lceil \lambda_z \frac{t}{3} \right\rceil z z^\T$. Thus, we have: 
\begin{align*}
    V_t &= \sum_{t=1}^{t} x_t x_t^\T 
    \succeq \sum_{z \in \text{Supp}(\lambda)} \left\lceil \lambda_z \frac{t}{3} \right\rceil z z^\T \tag{Ignoring non-D-optimal rounds} \\
    &\succeq \sum_{z \in \text{Supp}(\lambda)} \left( \lambda_z \frac{t}{3} \right) z z^\T = \frac{t}{3} \sum_{z \in \text{Supp}(\lambda)} \lambda_z z z^\T = \frac{t}{3} {U}(\lambda).
\end{align*}
Thus, we get ${V_t}\succeq\frac{t}{3}{U}(\lambda)$. This, along with Lemma \ref{lem:KW}, yields
    \begin{align}\label{eq:imp}
        x^\T V_t^{-1}x \leq 
        x^\T \left(\frac{t}{3}U(\lambda)\right)^{-1} x = \frac{3}{t} ||x||^2_{U(\lambda)^{-1}}
        \le \frac{3d}{t}\, \text{for all}\, t \le \tau~.
    \end{align}
    Hence, we have $\norm{x}{V_t^{-1}}  \leq \sqrt{\frac{3d}{t}}$, which along with Lemma~\ref{lem:goodevent_G1}, completes the proof. 
\end{proof}

\begin{restatable}[Number of Rounds in Phase I]{lemma}{LemmaTauLinearInfinite}
\label{lem:phase1Length}
Let Phase I of \textsc{FairLinBandit} run for $\tau$ rounds and suppose the hypothesis of Lemma~\ref{lem:goodevent_G1} holds. Then, under the event $\cG_1$, we have $864 \ d \Sample \leq \tau \leq 3072 \ d  \Sample$ almost surely, where $ S \coloneqq\frac{|p_a|^2 d\sigma^2\log T}{(\langle x^*, \thta\rangle)^2}$, with
$p_a = 1$ if $p \ge -1$ and $p_a = p$ if $p < -1$.
\end{restatable}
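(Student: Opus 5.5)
Write $\mu^* := \langle x^*,\theta^*\rangle$ and $M_t := \max_{x\in\mathcal X}\langle x,\widehat\theta_t\rangle$. The proof sandwiches $M_t$ around $\mu^*$ using the additive confidence bound of Lemma~\ref{lem:additive_bound_infinite}. We then read off the stopping time from the two thresholds in~\eqref{eqn:termcond}, remembering that $t$ is checked only at epoch ends and doubles between checks.

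\textbf{Step 1 (sandwich).} On $\mathcal G_1$, Lemma~\ref{lem:additive_bound_infinite} gives $|\langle x,\widehat\theta_t-\theta^*\rangle|\le \varepsilon_t := \sqrt{48 d^2\sigma^2\log T/t}$ for every arm $x$. Applying this to $x^*$ and to $\arg\max_x\langle x,\widehat\theta_t\rangle$ yields
\[
\mu^*-\varepsilon_t\le M_t\le \mu^*+\varepsilon_t .
\]
The radius $\varepsilon_t$ is exactly the quantity subtracted inside the second threshold of~\eqref{eqn:termcond}. That threshold therefore reads $t\le 900p_a^2\sigma^2 d^2\log T/(M_t-\varepsilon_t)^2$, and $M_t-\varepsilon_t$ is a lower confidence bound on $\mu^*$ lying in $[\mu^*-2\varepsilon_t,\mu^*]$.

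\textbf{Step 2 (lower bound on $\tau$).} Suppose the loop exits at check time $t$. Then both inequalities in~\eqref{eqn:termcond} fail, so $t>900p_a^2\sigma^2 d^2\log T/(M_t-\varepsilon_t)^2$, which also forces $M_t-\varepsilon_t>0$. Since $0<M_t-\varepsilon_t\le\mu^*$, we get
\[
\tau\ge t\ge \frac{900\,p_a^2\sigma^2d^2\log T}{(\mu^*)^2}=900\,dS\ge 864\,dS .
\]
When $p\ge-1$ we have $p_a=1$, so the same computation applies. If needed, the first threshold combined with $M_t\le\mu^*+\varepsilon_t$ supplies a lower bound of the same order.

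\textbf{Step 3 (upper bound on $\tau$).} Let $t_0=c\,dS$ for a suitable constant $c$, chosen so that $t\ge t_0$ forces $\varepsilon_t\le\mu^*/k$ for a fixed $k$ (say $k\ge 4$). This is possible because $\varepsilon_t/\mu^*=\sqrt{48dS/(p_a^2 t)}$, and $|p_a|\ge1$. At any check time $t\ge t_0$ we then have $M_t\ge\mu^*(1-1/k)$ and $M_t-\varepsilon_t\ge\mu^*(1-2/k)$. Both thresholds become at most a constant times $dS$, with explicit constants from $48$ and $900$, so the loop stops at the first check after $t_0$ at which $t$ exceeds them. Because checks are spaced geometrically with ratio $2$, the realized $\tau$ overshoots the crossing point by at most a factor of $2$. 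This gives $\tau\le 2\max(t_0,\text{threshold bound})$. The final quantity should be the $3072\,dS$ of the statement.

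\textbf{Main obstacle.} The main difficulty is the bookkeeping of constants in Step 3. One must choose $k$ so that the self-referential conditions ($\varepsilon_t$ depends on $t$ while the threshold depends on $M_t$) close consistently, and then absorb the doubling overshoot, so that the constant $3072$ (and $864$ on the lower side) comes out. Lemma~\ref{lem:additive_bound_infinite} requires $t\ge 72\log T$ and $t\le\tau$. Both hold at every check time, since the first epoch already has length $\lceil72\log T\rceil$ and each check occurs during Phase~I. The initial epoch length can be absorbed into the upper bound for moderately large $T$.
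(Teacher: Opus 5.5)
Your Steps 1--2 are sound and are essentially the paper's lower-bound argument in contrapositive form. The paper shows that at every $t\le 864\,dS$ the second inequality in \eqref{eqn:termcond} still holds because $M_t-\varepsilon_t\le\mu^*$; you show that exiting forces $t\ge 900\,dS$.

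One small repair is needed. Because the denominator is squared, failing the second inequality does not by itself force $M_t-\varepsilon_t>0$. Argue instead that on $\mathcal{G}_1$ one has $M_t-\varepsilon_t\ge-2\varepsilon_t$. So if $M_t-\varepsilon_t\le 0$, then $t(M_t-\varepsilon_t)^2\le 4t\varepsilon_t^2=192\,\sigma^2d^2\log T<900\,p_a^2\sigma^2d^2\log T$, and the loop would not have stopped.

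The genuine gap is Step 3. The constants do not close as you assert, and once you account for the doubling overshoot (which you correctly flag), your argument cannot give $3072\,dS$.

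Write $t=s\,dS$, so $\varepsilon_t=\mu^*\sqrt{48/(p_a^2 s)}$. The sandwich permits $M_t=\mu^*-\varepsilon_t$. With $p_a=1$, the continuation condition $t(M_t-\varepsilon_t)^2\le 900\,dS\,(\mu^*)^2$ then reads $\sqrt{s}-2\sqrt{48}\le 30$, so it may keep holding up to $s\approx 1923$. Optimizing your $k$ gives exactly this crossing point, at $k=2+30/\sqrt{48}\approx 6.33$. For instance, at a check with $s=1600$, any $M_t\in[0.83\mu^*,0.92\mu^*]$ is allowed by Lemma~\ref{lem:additive_bound_infinite} and keeps the loop running; the next check then lands near $3200\,dS$.

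So the best your route gives is $\tau\le 2\cdot 1924\,dS+\lceil 72\log T\rceil$, not $3072\,dS$. Reaching $3072$ after a factor-$2$ overshoot would require the crossing by $1536\,dS$, which the sandwich does not guarantee.

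The paper gets $3072$ by taking $k=8$. At $t=3072\,dS$ one has $\varepsilon_t\le\mu^*/8$, so $M_t-\varepsilon_t\ge 3\mu^*/4$ and $t(M_t-\varepsilon_t)^2\ge 1728\,dS\,(\mu^*)^2>900\,dS\,(\mu^*)^2$. It then implicitly treats the condition as if it were checked at that very $t$, i.e.\ it ignores the overshoot altogether. You should either adopt that convention explicitly, or state and prove the upper bound with the larger constant. The latter is all the downstream results need, since they only use $\tau=O(dS)$.

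Relatedly, the initial epoch cannot be absorbed ``for moderately large $T$''. Both $\lceil 72\log T\rceil$ and $dS=p_a^2d^2\sigma^2\log T/(\mu^*)^2$ scale with $\log T$, so their ratio depends on $\sigma d/\mu^*$, not on $T$. It must be kept as an additive term or handled by an assumption on $\sigma d$.
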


\begin{proof}\label{lem:pahseILenProof}
Let $\tau=864 dS$. Then for all $t \leq \tau$, the arm $\widehat x_t = \arg \max_{x\in \mathcal{X}} \dotprod{x}{\widehat{\theta}_t}$ satisfies
\begin{align}
   \langle \widehat x_t,\widehat{\theta_t}\rangle &\leq  \langle \widehat x_t, \thta \rangle + 4\sqrt{\frac{3d^2\sigma^2\log T}{t}}  \tag{via event $\mathcal{G}_1$}\nonumber\\
    \implies\frac{t\langle \widehat x_t, \thth_t \rangle}{3d}  &\leq \frac{t\langle \widehat x_t, \thta \rangle}{3d} + \frac{4t}{3d}\sqrt{\frac{3d^2\sigma^2\log T}{t}} \nonumber \\ 
    &\leq \frac{\tau \langle \widehat x_t, \thta \rangle}{3d} + \frac{4}{3}\sqrt{{3t\sigma^2\log T}} \nonumber\\
    & = 288 p_a^2 d \sigma^2\frac{\langle \widehat x_t,\thta\rangle\log {T}}{(\langle x^*,\thta\rangle)^2} + \frac{4}{3}\sqrt{{3t\sigma^2\log T}} \nonumber \tag{$\tau = \frac{864|p_a|^2d^2\sigma^2\log T}{\langle x^*,\thta\rangle^2}$}\\ 
    & \leq 288 p_a^2 d\sigma^2\frac{\log {T}}{\langle x^*,\thta\rangle} + \frac{4}{3}\sqrt{{3t\sigma^2\log T}}  \tag{$\langle \widehat x_t,\thta\rangle\leq\langle x^*,\thta\rangle$}\nonumber \\ 
    & \leq 288 p_a^2 d \sigma^2\frac{\log {T}}{\langle \widehat x_t,\thta\rangle} + \frac{4}{3}\sqrt{{3t\sigma^2\log T}}  \tag{$\langle \widehat x_t,\thta\rangle\leq\langle x^*,\thta\rangle$}\nonumber \\ 
    & \leq 288 p_a^2 d \sigma^2\frac{\log {T}}{\langle \widehat x_t,\thth_t\rangle -  4\sqrt{\frac{3d^2\sigma^2\log T}{t}}} +  \frac{4}{3}\sqrt{{3t\sigma^2\log T}}  \nonumber \\ 
    & \leq 300 p_a^2 d\sigma^2\frac{\log {T}}{\langle \widehat x_t,\thth_t\rangle -  4\sqrt{\frac{3d^2\sigma^2\log T}{t}}} +  \frac{4}{3}\sqrt{{3t\sigma^2\log T}}~.\nonumber
\end{align}
Note that for the last inequality to hold, we must have $\langle \widehat x_t,\thth_t\rangle -  4\sqrt{\frac{3d^2\sigma^2\log T}{t}} >0$. This assumption is captured in our stopping condition. Hence, we have
\begin{align*}
    \frac{t\max_{x\in \cal X}\langle x,\widehat{\theta_t}\rangle }{3d} & \leq 300 p_a^2 d\sigma^2 \frac{\log {T}}{\max_{x\in \cal X}\langle x,\widehat{\theta_t}\rangle -  4\sqrt{\frac{3d^2\sigma^2\log T}{t}}} +  \frac{4}{3}\sqrt{{3t\sigma^2\log T}}~.
\end{align*}
Rearranging the last inequality, we get
\begin{align}
    \max\limits_{x\in \mathcal{X}}\langle x,\thth_t\rangle \!-\! 4\sigma d\sqrt{\frac{3\log T}{t}} \!\le\! \frac{900 p^2 \sigma^2 d^2\frac{\log T}{t}}{\max\limits_{x\in \mathcal{X}}\langle x,\thth_t\rangle - 4\sigma d\sqrt{\frac{3\log T}{t}}}~.\label{ineq:lowerInfinite}
\end{align}
Next, consider $\tau \coloneqq 3072d \Sample$. Then at $t = \tau$, we have the following
\begin{align*}
  \langle \widehat x_t,\thth_t\rangle \geq   \langle x^*,\thth_t\rangle &\geq  \langle x^*, \thta \rangle - 4\sqrt{\frac{3d^2\sigma^2\log T}{\tau} } \tag{via event $\mathcal{G}_1$} \\
     & =  \langle x^*, \thta \rangle - \langle x^*, \thta \rangle \sqrt{\frac{48}{3072}} = \frac{7}{8}\langle x^*, \thta \rangle~.
\end{align*}
Hence, we have
\begin{align}
 \frac{\tau\langle \widehat x_t,\thth_t\rangle}{3d}  \geq \frac{ \tau\langle x^*,\thth_t\rangle}{3d}  \geq \frac{7}{8}\langle x^*, \thta \rangle \cdot  1024 \Sample=\frac{896 p_a^2 d \sigma^2 \log T}{\langle x^*, \thta \rangle}~. \label{ineq:rewardIneqLinearInfinite}
\end{align}
Now, consider the following terms:
\begin{align}
320p_a^2 d\sigma^2\frac{\log {T}}{\langle \widehat x_t,\thth_t\rangle - 4\sqrt{\frac{3d^2\sigma^2\log T}{\tau}}}&=320p_a^2 d\sigma^2\frac{\log {T}}{\frac{7}{8}\langle x^*,\thta\rangle - 4\sqrt{\frac{3d^2\sigma^2\log T}{\tau}}} \nonumber\\
&\leq 320p_a^2d\sigma^2 \frac{\log {T}}{\frac{6}{8} \langle x^*,\thta\rangle} =430p_a^2d\sigma^2 \frac{\log {T}}{ \langle x^*,\thta\rangle}\label{ineq:lemFirstLinearInfinite}
\end{align}
and
\begin{align}
    \frac{4}{3}\sqrt{{3\tau\sigma^2\log T}} = \frac{72|p_a|d\sigma^2\log {T}}{\langle x^*,\thta\rangle} \leq \frac{72|p_a|^2d\sigma^2\log {T}}{\langle x^*,\thta\rangle}~. \label{eq:lemSecondLinearInfinite}
\end{align}
Adding inequality \eqref{ineq:lemFirstLinearInfinite} and equality \eqref{eq:lemSecondLinearInfinite}, we have
\begin{align*}
    &320 p_a^2 d\sigma^2\frac{\log {T}}{\langle \widehat x_t,\thth_t\rangle - 4\sqrt{\frac{3d^2\sigma^2\log T}{\tau}}} +  \frac{4}{3}\sqrt{{3\tau\sigma^2\log T}}\\
    &=430 p_a^2 d\sigma^2 \frac{\log {T}}{ \langle x^*,\thta\rangle} + \frac{72p_a^2d\sigma^2\log {T}}{\langle x^*,\thta\rangle} \leq \frac{896p_a^2d\sigma^2\log {T}}{\langle x^*,\thta\rangle} \leq \frac{\tau \langle \widehat x_t,\thth_t\rangle}{3d}~. \tag{via inequality \eqref{ineq:rewardIneqLinearInfinite}} 
    \end{align*}
Thus, from the  above inequality, we have 
\begin{align}
\frac{\tau \langle \widehat x_t,\thth_t\rangle}{3d} &\geq 320 p_a^2d\sigma^2\frac{\log {T}}{\langle \widehat x_t,\thth_t\rangle - 4\sqrt{\frac{3d^2\sigma^2\log T}{\tau}}} +  \frac{4}{3}\sqrt{{3\tau\sigma^2\log T}} \nonumber\\
\implies \frac{\tau\max_{x \in \mathcal{X}}\langle x,\thth_t\rangle}{3d} & \geq 300 p_a^2 d\sigma^2\frac{\log {T}}{\max_{x \in \mathcal{X}}\langle x,\thth_t\rangle - 4\sqrt{\frac{3d^2\sigma^2\log T}{\tau}}} +  \frac{4}{3}\sqrt{{3\tau\sigma^2\log T}}~.
\end{align}
Rearranging the last inequality, we get
\begin{align}
    \max\limits_{x\in \mathcal{X}}\langle x,\thth_t\rangle \!-\! 4\sigma d\sqrt{\frac{3\log T}{\tau}} \geq \frac{900 p^2 \sigma^2 d^2\frac{\log T}{\tau}}{\max\limits_{x\in \mathcal{X}}\langle x,\thth_t\rangle - 4\sigma d\sqrt{\frac{3\log T}{\tau}}}~.\label{ineq:upper_phase1Infinite} 
    \end{align}
From inequalities \eqref{ineq:lowerInfinite} and \eqref{ineq:upper_phase1Infinite}, we observe that for $t \leq \tau = 864dS$, the arm $\widehat x_t = \arg\max_{x \in \cal X} \dotprod{x}{\thth_t}$ satisfies $  \max\limits_{x\in \mathcal{X}}\langle x,\thth_t\rangle \!-\! 4\sigma d\sqrt{\frac{3\log T}{t}} \!\le\! \frac{900 p^2 \sigma^2 d^2\frac{\log T}{t}}{\max\limits_{x\in \mathcal{X}}\langle x,\thth_t\rangle - 4\sigma d\sqrt{\frac{3\log T}{t}}}$. Whereas, for $\tau \geq 3072dS$, the arm $\widehat x_t$ violates this condition. Thus, we conclude that the length of Phase I satisfies $864dS \leq \tau \leq 3072dS$.  This completes the proof of the lemma.
\end{proof}

\begin{lemma}[\citet{sawarni2023nash}]\label{lem:phaseI_mult_bound}
Let $c \in \mathbb{R}^d$ denote the center of the John ellipsoid \footnote{For a convex body $K \subset \bb{R}^d$, its John ellipsoid \citet{grotschel2012geometric} with center $c \in \bb{R}^d$ satisfies $E \subseteq K \subseteq c+d(E-c)$,
where $c + d(E - c) = \{ c + d(x - c) : x \in E \}$ denotes the dilation of $E$ by a factor of $d$.} for the convex hull of $\cX$. Let $\rho \in \Delta(\ca{X})$ be a probability distribution that satisfies $\bb{E}_{x\sim \rho} [x] = c$. Then, it holds that $\E_{x\sim \rho}[ \langle x , \thta \rangle ] \geq \frac{ \langle x^* , \thta \rangle }{(d+1)}$, where $x^*$ is the optimal action for parameter $\theta^*$.
\end{lemma}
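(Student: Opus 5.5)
The plan is to reduce the claim to a single linear inequality over the convex hull, using the John ellipsoid containment $E \subseteq \mathrm{conv}(\cX) \subseteq c + d(E-c)$. Since $\rho$ has mean $c$, linearity of expectation gives $\E_{x\sim\rho}[\langle x,\thta\rangle] = \langle c,\thta\rangle$. It therefore suffices to show
\[
\langle c,\thta\rangle \ge \frac{\langle x^*,\thta\rangle}{d+1}.
\]

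\textbf{Step 1: show $c$ lies on the segment from $x^*$ to a point of the hull.} Since $x^* \in \cX \subseteq \mathrm{conv}(\cX) \subseteq c + d(E-c)$, we can write $x^* = c + d(e-c)$ for some $e \in E$. Using the central symmetry of the ellipsoid about its center $c$, the point $e' := 2c - e$ also lies in $E \subseteq \mathrm{conv}(\cX)$. Then
\[
e' - c = -(e-c) = -\tfrac{1}{d}(x^*-c),
\]
so $e' = c - \tfrac1d (x^* - c)$. Rearranging,
\[
c = \tfrac{d}{d+1}\, e' + \tfrac{1}{d+1}\, x^*,
\]
which exhibits $c$ as a convex combination of $e'$ and $x^*$.

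\textbf{Step 2: take inner products with $\thta$.} This gives
\[
\langle c,\thta\rangle = \tfrac{d}{d+1}\langle e',\thta\rangle + \tfrac{1}{d+1}\langle x^*,\thta\rangle.
\]
The point $e'$ lies in $\mathrm{conv}(\cX)$, so it is a convex combination of arms. By Assumption~\ref{ass:all}, every arm has $\langle x,\thta\rangle \ge 0$, hence $\langle e',\thta\rangle \ge 0$. Dropping that term yields $\langle c,\thta\rangle \ge \langle x^*,\thta\rangle/(d+1)$, which is the claim.

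\textbf{Main obstacle.} The one step needing care is the geometric one: confirming that the reflected point $2c - e$ lies in $E$, which is just central symmetry of ellipsoids about their center. The rest is linear algebra plus the non-negativity of expected rewards over $\mathrm{conv}(\cX)$.
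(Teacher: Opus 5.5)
Your proof is correct and is essentially the standard argument behind this lemma; the paper itself states it without proof and defers to \citet{sawarni2023nash}. That argument, like yours, writes $x^* = c + d(e-c)$ with $e \in E$, reflects $e$ through the center via the central symmetry of $E$ to obtain a point of $E \subseteq \mathrm{conv}(\mathcal{X})$, and then uses nonnegativity of $\langle \cdot, \theta^* \rangle$ on $\mathrm{conv}(\mathcal{X})$ to get $\langle c, \theta^* \rangle \ge \langle x^*, \theta^* \rangle/(d+1)$.
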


    \begin{restatable}[\citet{sarkar2025revisiting}]{fact}{LemmaNumeric}
\label{lem:binomial}
    For all real $x \in [0,1/2]$ and $a\geq 0$, we have $(1-x)^a \geq (1 - 2ax)$.
\end{restatable}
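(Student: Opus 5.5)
We need to prove the fact: for $x\in[0,1/2]$ and $a\ge 0$, $(1-x)^a \ge 1-2ax$.

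The plan is to reduce to an exponential lower bound. First I will show that $1-x \ge e^{-2x}$ for $x\in[0,1/2]$. Since $\log$ is monotone, this is equivalent to $\log(1-x)\ge -2x$ on that interval. To prove it, let $h(x)=\log(1-x)+2x$. Then $h(0)=0$ and $h'(x)=2-\frac{1}{1-x}$, which is nonnegative exactly when $1-x\ge 1/2$, i.e. on the whole interval $[0,1/2]$. Hence $h$ is nondecreasing there and $h(x)\ge h(0)=0$.

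Raising both sides to the power $a\ge 0$ preserves the inequality, since both sides are positive: $1-x\ge 1/2>0$ and $e^{-2x}>0$. This gives
\begin{align*}
(1-x)^a \ge e^{-2ax}.
\end{align*}

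Finally I apply the elementary bound $e^{y}\ge 1+y$, valid for all real $y$, with $y=-2ax$. This yields $e^{-2ax}\ge 1-2ax$, and chaining the two inequalities gives the claim.

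The only step requiring care is the first one, and in particular its range. The constant $2$ is exactly what makes $h'\ge 0$ hold up to $x=1/2$, so the restriction $x\le 1/2$ is used precisely there. Note also that no case split on $a$ is needed: Bernoulli's inequality would only directly handle $a\ge1$ (giving the stronger $1-ax$), and would need separate treatment for $a\in[0,1)$, whereas the exponential route covers every $a\ge 0$ at once. When $1-2ax<0$ the claim is trivial anyway, since the left side is positive.
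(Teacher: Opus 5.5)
Your proof is correct. The key step is that $h(x)=\log(1-x)+2x$ is nondecreasing on $[0,1/2]$, since $1/(1-x)\le 2$ there; this gives $1-x\ge e^{-2x}$, and then $(1-x)^a\ge e^{-2ax}\ge 1-2ax$ holds for every $a\ge 0$ with no case split.

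The paper gives no proof of this Fact; it is imported from \citet{sarkar2025revisiting}. Your argument is the standard one for it. It also covers every way the paper uses the Fact: $x=1/2$ with $a=\log(2(d+1))\tau/T$ in Lemma~\ref{lem:nwphase1}, $a=1/T$ in Lemma~\ref{lem:nwphase2_UCB}, and $a=1/q$ in the $p<0$ analysis.
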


\begin{lemma}[Nash Welfare in Phase I]\label{lem:nwphase1}
Let Phase I of \textsc{FairLinBandit} run for $\tau$ rounds. Then the product of expected rewards in Phase I satisfies
$$\left(\prod_{t = 1}^{\tau} \E[ \dotprod{x_t}{\thta} ]\right)^\frac{1}{T} 
        \geq  \dotprod{x^*}{\thta}^{\frac{\tau}{T}} \left(1 - \frac{\log(2(d+1))\tau}{T} \right). $$
\end{lemma}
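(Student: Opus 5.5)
The approach is to bound every Phase~I factor from below by the same constant fraction of $\dotprod{x^*}{\thta}$, and then convert the resulting power of $d$ into the stated multiplicative factor $\bigl(1 - \tfrac{\log(2(d+1))\tau}{T}\bigr)$.

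\textbf{Per-round bound.} Fix a round $t \le \tau$. In \textsc{PullArms} a fair coin is flipped independently of everything else. With probability $1/2$ the arm $x_t$ is drawn from $\rho$. Conditioned on that branch, Lemma~\ref{lem:phaseI_mult_bound} gives $\E[\dotprod{x_t}{\thta}] \ge \dotprod{x^*}{\thta}/(d+1)$. On the D-optimal branch the arm is some $z \in \cX$, and Assumption~\ref{ass:all} gives $\dotprod{z}{\thta} \ge 0$. Hence
\[
\E[\dotprod{x_t}{\thta}] \ge \frac{\dotprod{x^*}{\thta}}{2(d+1)} \quad \text{for every } t \le \tau .
\]
This is where non-negativity of the expected rewards is used.

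\textbf{The random length $\tau$.} The product is over a random number of rounds, so I would argue pointwise in the length. Each $\E[\dotprod{x_t}{\thta}]$ that appears is an expectation of a quantity that is at least $\dotprod{x^*}{\thta}/(2(d+1))$ at every Phase~I round. This holds regardless of the history, because the coin is fresh and $\rho$ is fixed in advance. Therefore the bound applies to each of the $\tau$ factors, and no further probabilistic argument is required. Multiplying the bounds and taking the $1/T$ power gives
\[
\Bigl(\prod_{t=1}^{\tau}\E[\dotprod{x_t}{\thta}]\Bigr)^{1/T} \ge \dotprod{x^*}{\thta}^{\tau/T}\,\bigl(2(d+1)\bigr)^{-\tau/T}.
\]

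\textbf{Converting to the stated factor.} Write
\[
\bigl(2(d+1)\bigr)^{-\tau/T} = \exp\!\Bigl(-\tfrac{\tau\log(2(d+1))}{T}\Bigr) \ge 1 - \frac{\tau\log(2(d+1))}{T},
\]
using $e^{-y} \ge 1-y$. This is exactly the claimed bound. Fact~\ref{lem:binomial} is an alternative route, but it would lose a factor of $2$, so the exponential inequality is the better choice.

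\textbf{Main obstacle.} The only real subtlety is the randomness of $\tau$ and the interpretation of $\E[\dotprod{x_t}{\thta}]$ for rounds that may or may not belong to Phase~I. I would handle this by treating $\tau$ as realized under $\mathcal{G}_1$, where Lemma~\ref{lem:phase1Length} pins it to a deterministic range. I would then note that the per-round lower bound holds uniformly over any round at which \textsc{PullArms} is executing.
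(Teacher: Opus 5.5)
Your proof is correct and is essentially the paper's. The fair coin together with Lemma~\ref{lem:phaseI_mult_bound} (and non-negativity on the D-optimal branch) gives every Phase~I factor at least $\dotprod{x^*}{\thta}/(2(d+1))$. The only difference is the last step: the paper writes $(2(d+1))^{-\tau/T}=(1/2)^{\log(2(d+1))\tau/T}$ (reading $\log$ in base $2$, which is what makes this identity hold) and applies Fact~\ref{lem:binomial} with $x=1/2$. That gives the stated factor exactly, so it does not lose a factor of $2$; your $e^{-y}\ge 1-y$ simply yields the slightly sharper natural-log version.

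The paper treats $\tau$ as fixed and never discusses its randomness. If you want to address it, the relevant point is that $\{t\le\tau\}$ is determined by data observed before round $t$, since the stopping rule is checked only at epoch ends. The bound therefore holds for the conditional expectation given the past on that event, and neither $\mathcal{G}_1$ nor Lemma~\ref{lem:phase1Length} is needed. Conditioning on $\mathcal{G}_1$ or on the realized value of $\tau$, as your last paragraph suggests, could in fact bias the coin.
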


 \begin{proof}
     Note that at each iteration of the subroutine \textsc{PullArms}, with probability 1/2, we pull an arm that is sampled according to $\rho$. From Lemma \ref{lem:phaseI_mult_bound},we obtain that, for any round $t \le \tau$, expected reward of the pulled arm $x_t$ must satisfy $\mathbb{E}[\dotprod{x_t}{\thta} ] \geq \frac{\dotprod{x^*}{\thta}}{2(d+1)}$. Therefore, we get
    \begin{align*}
    \left(\prod_{t = 1}^{\tau} \E[ \dotprod{x_t}{\thta} ]\right)^\frac{1}{T} 
        &\geq \left(\frac{\dotprod{x^*}{\thta}}{2(d+1)}\right)^{\frac{\tau}{T}}=\dotprod{x^*}{\thta}^{\frac{\tau}{T}} \left(1-\frac{1}{2}\right)^{\frac{\log(2(d+1)) \tau}{T}} \geq \dotprod{x^*}{\thta}^{\frac{\tau}{T}} \left(1 - \frac{\log(2(d+1)) \tau}{T} \right), 
    \end{align*}
    where the last inequality holds due to Fact~\ref{lem:binomial}.
 \end{proof}

\subsection{Phase II Analysis for \textsc{FairLinUCB}}

\begin{lemma}[Confidence Ellipsoid \citep{abbasi2011improved}]
\label{lem:confidence-ellipsoid}

Let $\{\mathcal{F}_t\}_{t=0}^\infty$ be a filtration. Let $\{\eta_t\}_{t=1}^\infty$ be a real-valued stochastic process such that $\eta_t$ is $\mathcal{F}_t$-measurable and conditionally $\sigma$-sub-Gaussian for some $\sigma > 0$, i.e., for all $\zeta \in \mathbb{R}$,
$\mathbb{E}\!\left[ e^{\zeta \eta_t} \mid \mathcal{F}_{t-1} \right] \le \exp\!\left( \frac{\zeta^2 \sigma^2}{2} \right)$.
Let $\{x_t\}_{t=1}^\infty$ be an $\mathbb{R}^d$-valued stochastic process such that $x_t$ is $\mathcal{F}_{t-1}$-measurable and $r_t = \langle x_t, \theta^\ast \rangle + \eta_t$. Define $\overline V_t = \alpha I_d + \sum_{s=1}^t x_s x_s^\top$ for some $\alpha > 0$, $s_t = \sum_{s=1}^{t} r_s x_s$ and $\thth_t = \overline V_t^{-1}s_t$.
Assume $\|\theta^\ast\|_2 \le 1$.
Then, for any $\delta > 0$, with probability at least $1 - \delta$, the following holds for all $t \ge 1$: 
\[
\|\widehat{\theta}_t - \theta^*\|_{\overline V_t} \le \sigma \sqrt{\log \left( \frac{\det(\overline V_t)}{\det(\alpha I_d)}\right)+2\log(1/\delta)} + \sqrt{\alpha}.
\]
\end{lemma}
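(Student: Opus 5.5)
The plan is the standard self-normalized martingale argument of \citet{abbasi2011improved}. It has two parts: an algebraic decomposition of the estimation error, and a time-uniform concentration bound for the noise term obtained by the method of mixtures.

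\textbf{Step 1 (decomposition).} Write $S_t := \sum_{s=1}^t \eta_s x_s$ and $V_t := \sum_{s=1}^t x_s x_s^\top$, so that $\overline V_t = V_t + \alpha I_d$. Since $s_t = V_t\theta^* + S_t = \overline V_t \theta^* - \alpha\theta^* + S_t$, we get
\begin{align*}
\widehat\theta_t - \theta^* = \overline V_t^{-1} S_t - \alpha \overline V_t^{-1}\theta^*.
\end{align*}
Taking $\|\cdot\|_{\overline V_t}$ and applying the triangle inequality gives
\begin{align*}
\|\widehat\theta_t-\theta^*\|_{\overline V_t} \le \|S_t\|_{\overline V_t^{-1}} + \alpha\|\theta^*\|_{\overline V_t^{-1}}.
\end{align*}
Because $\overline V_t \succeq \alpha I_d$, we have $\alpha\|\theta^*\|_{\overline V_t^{-1}} \le \sqrt{\alpha}\,\|\theta^*\|_2 \le \sqrt{\alpha}$. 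It then remains to show that, with probability at least $1-\delta$, simultaneously for all $t$,
\begin{align*}
\|S_t\|_{\overline V_t^{-1}}^2 \le \sigma^2\Big(\log\tfrac{\det \overline V_t}{\det(\alpha I_d)} + 2\log(1/\delta)\Big).
\end{align*}

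\textbf{Step 2 (exponential supermartingale and mixture).} Fix $\lambda\in\mathbb{R}^d$ and set
\begin{align*}
M_t^\lambda := \exp\big(\langle\lambda, S_t\rangle/\sigma - \tfrac12\|\lambda\|_{V_t}^2\big).
\end{align*}
Conditional $\sigma$-sub-Gaussianity of $\eta_t$, together with $\mathcal{F}_{t-1}$-measurability of $x_t$, gives $\mathbb{E}[M_t^\lambda\mid\mathcal{F}_{t-1}]\le M_{t-1}^\lambda$ and $M_0^\lambda = 1$. Hence $M^\lambda$ is a nonnegative supermartingale with expectation at most $1$.

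Next, integrate $\lambda$ against the Gaussian density $\mathcal{N}(0,\alpha^{-1}I_d)$ to form $\bar M_t = \int M_t^\lambda\, dh(\lambda)$. By Tonelli, $\bar M_t$ is again a nonnegative supermartingale with $\mathbb{E}\bar M_0 = 1$. Completing the square in the Gaussian integral yields the closed form
\begin{align*}
\bar M_t = \Big(\tfrac{\det(\alpha I_d)}{\det \overline V_t}\Big)^{1/2}\exp\Big(\tfrac{1}{2\sigma^2}\|S_t\|^2_{\overline V_t^{-1}}\Big).
\end{align*}

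\textbf{Step 3 (time-uniformity).} Apply Ville's maximal inequality to $\bar M_t$, or equivalently use a stopping-time argument: take $\tau$ to be the first $t$ at which the bound fails, note that $\mathbb{E}[\bar M_\tau]\le 1$ by optional stopping and Fatou, and apply Markov's inequality. This gives $\Pr\{\exists t: \bar M_t \ge 1/\delta\}\le\delta$. On the complementary event, taking logarithms of the closed form gives exactly the display at the end of Step 1. Combining with Step 1 proves the lemma.

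The main obstacle is Step 3 together with the Gaussian computation in Step 2. The bound must hold uniformly over all $t$, and $x_t$ is adapted rather than fixed, which rules out the fixed-design argument of Lemma~\ref{lem:mult_concentration}; the mixture plus stopping-time device is what handles this. The determinant algebra in the Gaussian integral, with the $\sigma$ scaling tracked by working with $\eta_s/\sigma$, is routine but needs care to land on the stated constants.
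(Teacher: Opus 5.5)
Your proposal is correct and follows essentially the same route as the source the paper relies on. The paper does not reprove this lemma; it imports it from \citet{abbasi2011improved}, whose proof is exactly your decomposition $\widehat\theta_t-\theta^*=\overline V_t^{-1}S_t-\alpha\overline V_t^{-1}\theta^*$, combined with the self-normalized bound obtained from the Gaussian mixture $\mathcal{N}(0,\alpha^{-1}I_d)$ and a stopping-time/Fatou argument. Your constants check out, including the closed form of the mixture and the $\sqrt{\alpha}\,\|\theta^*\|_2$ bias term, and they give exactly the stated bound.
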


\begin{lemma}[\citet{abbasi2011improved}]
\label{lem:elliptical-potential}
Suppose $x_1, \ldots, x_t \in \mathbb{R}^d$ be such that  $\|x_s\|_2 \le 1$ for $1 \le s \le t$. Then
\[
\sum_{s=1}^t  \|x_s\|_{\overline V_{s-1}^{-1}}^2
\le 2 \log\!\left( \frac{\det(\overline V_t)}{\det(\alpha I_d)} \right) \, \text{and} \, \det(\overline V_t) \le \left( \alpha + \frac{t}{d} \right)^d~.
\]
\end{lemma}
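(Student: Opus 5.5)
The plan is to prove both claims from two standard linear-algebra facts: the matrix determinant lemma for the first inequality, and the AM--GM inequality on eigenvalues for the second. Here $\overline V_s=\alpha I_d+\sum_{j\le s}x_jx_j^\top$, as in Lemma~\ref{lem:confidence-ellipsoid}.

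For the first inequality, write $u_s:=\|x_s\|^2_{\overline V_{s-1}^{-1}}$. Since $\overline V_s=\overline V_{s-1}+x_sx_s^\top$, factor it as $\overline V_{s-1}^{1/2}\bigl(I_d+\overline V_{s-1}^{-1/2}x_sx_s^\top\overline V_{s-1}^{-1/2}\bigr)\overline V_{s-1}^{1/2}$. The middle factor is the identity plus a rank-one term, so its only nontrivial eigenvalue is $1+u_s$. This gives $\det(\overline V_s)=\det(\overline V_{s-1})(1+u_s)$. Telescoping from $s=1$ to $t$, with $\overline V_0=\alpha I_d$, yields
\[
\log\frac{\det(\overline V_t)}{\det(\alpha I_d)}=\sum_{s=1}^t\log(1+u_s).
\]
It then remains to compare $u_s$ with $\log(1+u_s)$. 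The elementary inequality $u\le 2\log(1+u)$ holds for $u\in[0,1]$, because $\log(1+u)\ge u-u^2/2\ge u/2$ on that interval. Applying it termwise finishes the claim.

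The main obstacle is ensuring $u_s\le 1$, and the lemma as stated does not make this explicit. Since $\overline V_{s-1}\succeq\alpha I_d$ and $\|x_s\|_2\le1$, we get $u_s\le 1/\alpha$, so the bound holds verbatim whenever $\alpha\ge1$. This is consistent with the choice $\alpha=O(1)$ in Theorem~\ref{theorem:improvedNashRegret}, taking the constant to be at least one. For general $\alpha>0$, I would instead prove the standard version with $\min(1,u_s)$, using $\min(1,u)\le 2\log(1+u)$ for all $u\ge0$. Alternatively, one can use $u\le \max(1,1/\alpha)\cdot 2\log(1+u)$ for $u\le 1/\alpha$, which only changes constants. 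I would state this caveat and proceed with $\alpha\ge1$.

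For the determinant bound, $\overline V_t$ is positive definite with eigenvalues $\mu_1,\dots,\mu_d>0$. By AM--GM, $\det(\overline V_t)=\prod_i\mu_i\le\bigl(\tfrac1d\operatorname{tr}(\overline V_t)\bigr)^d$. Linearity of the trace gives $\operatorname{tr}(\overline V_t)=d\alpha+\sum_{s\le t}\|x_s\|_2^2\le d\alpha+t$. Substituting yields $\det(\overline V_t)\le(\alpha+t/d)^d$, as required.
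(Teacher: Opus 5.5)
Your proof is correct and is the standard argument of \citet{abbasi2011improved}; the paper only cites that source and gives no proof of its own. The two steps are the matrix determinant lemma with telescoping for the sum, and AM--GM on the eigenvalues with $\operatorname{tr}(\overline V_t)\le d\alpha+t$ for the determinant bound. Your caveat is also right: as literally stated, the first inequality fails for small $\alpha$ (e.g.\ $d=1$, $x_1=1$, $\alpha=0.1$ gives $10>2\log 11$), so it needs $\alpha\ge 1$ or the $\min(1,\cdot)$ truncation. This does not hurt the paper's use, since one may take $\alpha\ge 1$, and in Phase~II Lemma~\ref{lem:norm-bound-ucb} already gives $u_s\le 3d/\tau$.
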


\begin{lemma}[Good Event $\mathcal{G}_2$]\label{lem:goodevent_G2_UCB}
Let $x_1,\dots,x_\tau$ be the set of arms pulled in Phase I of \textsc{FairLinBandit} based on the subroutine \textsc{PullArms} and $x_{\tau+1}, \dots,x_t$ be the set of arms pulled up to time $t$ based on the subroutine \textsc{LinUCB}. Let $r_1,\dots,r_t$ be the corresponding rewards. For some $\alpha > 0$, define $\overline V_t = \sum_{j=1}^{t} x_j x_j^\T + \alpha I_d$, $s_t = \sum_{j=1}^{t} r_j x_j$ and $\thth_t = \overline V_t^{-1}s_t$. Suppose Assumption~\ref{ass:all} holds. Let $\mathcal{G}_2$ be the event that 
$\norm{\thth_t - \thta}{\overline V_t} \leq \beta_t$ for all rounds $t\in (\tau, T]$, where $\beta_t=  \sigma \sqrt{d \log\!\left( 1+\frac{t}{d\,\alpha}\right)+2\log T } + \sqrt{\alpha}$. Then, we have $\Pr\{\mathcal{G}_2\}\geq1-\frac{1}{T}$. 
\end{lemma}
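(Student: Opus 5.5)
The plan is to apply Lemma~\ref{lem:confidence-ellipsoid} directly to the entire trajectory $x_1,\dots,x_T$ (Phase~I and Phase~II together) with $\delta = 1/T$. The first step is to fix the filtration. Let $\mathcal{F}_t$ be the $\sigma$-algebra generated by the internal randomness used up to and including the choice of $x_{t+1}$ and by $(x_1,r_1,\dots,x_t,r_t)$. This randomness includes the coin flips in \textsc{PullArms}, the draws from $\rho$, and the random stopping time $\tau$. Set $\eta_t = r_t - \langle x_t,\theta^*\rangle$. Under Assumption~\ref{ass:all}, each reward is $\sigma$-sub-Gaussian conditionally on the arm and independent across realizations, so $\eta_t$ is conditionally $\sigma$-sub-Gaussian given $\mathcal{F}_{t-1}$. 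Also, $x_t$ is $\mathcal{F}_{t-1}$-measurable, whether it is drawn in Phase~I (from $\rho$, round robin, or a coin flip) or chosen by \textsc{LinUCB} from past data. The data-adaptive stopping time $\tau$ is itself determined by past observations, so it fits into this predictable structure. This is precisely why I would use the self-normalized bound of Lemma~\ref{lem:confidence-ellipsoid}, which holds uniformly in $t$, rather than a fixed-design argument such as Lemma~\ref{lem:mult_concentration}.

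The second step is to check that the statistics match the lemma. \textsc{LinUCB} initializes $\overline V_\tau = V + \alpha I_d$ and $s_\tau = s$, where $V=\sum_{j\le\tau}x_jx_j^\top$ and $s=\sum_{j\le\tau} r_jx_j$ are accumulated without restarts in Phase~I. It then updates incrementally. Hence for every $t>\tau$ we have $\overline V_t = \alpha I_d + \sum_{j\le t}x_jx_j^\top$ and $\widehat\theta_t = \overline V_t^{-1}s_t$, which is exactly the regularized least-squares estimator in Lemma~\ref{lem:confidence-ellipsoid}. Using $\|\theta^*\|_2\le 1$, the lemma gives that with probability at least $1-1/T$, simultaneously for all $t\ge1$, and in particular for all $t\in(\tau,T]$,
\[
\|\widehat\theta_t-\theta^*\|_{\overline V_t}\le \sigma\sqrt{\log\frac{\det \overline V_t}{\det(\alpha I_d)}+2\log T}+\sqrt{\alpha}.
\]

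The third step bounds the determinant ratio using the second part of Lemma~\ref{lem:elliptical-potential}. Since $\|x_j\|_2\le 1$, we have $\det \overline V_t\le(\alpha+t/d)^d$, so $\log(\det\overline V_t/\alpha^d)\le d\log(1+t/(d\alpha))$. Substituting this gives the radius $\beta_t$ in the statement, so $\mathcal{G}_2$ holds with probability at least $1-1/T$.

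The main obstacle is the measurability step: the estimator continues Phase~I data collected under a random, data-dependent stopping rule and randomized sampling. I would handle this by noting that Lemma~\ref{lem:confidence-ellipsoid} only needs $x_t$ to be predictable and the noise to be conditionally sub-Gaussian. Both properties hold across the phase boundary, so no union bound over the possible values of $\tau$ is needed.
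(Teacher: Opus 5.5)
Your proposal is correct and follows the paper's proof. Both apply the self-normalized bound of Lemma~\ref{lem:confidence-ellipsoid} to the whole trajectory under the filtration generated by past arms and noise (so $x_t$ is predictable), take $\delta = 1/T$, and bound the log-determinant ratio by $d\log(1+t/(d\alpha))$ via Lemma~\ref{lem:elliptical-potential}. Your explicit treatment of the data-dependent stopping time and of the un-restarted Phase~I statistics simply makes precise what the paper leaves implicit.
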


\begin{proof}
First note that Lemma~\ref{lem:elliptical-potential} gives $\frac{\det(\overline V_t)}{\det(\alpha I_d)} \le \left( 1 + \frac{t}{d \, \alpha} \right)^d$.
Next observe that $x_t$ is $\mathcal{F}_{t-1}$-measurable and $\eta_t=r_t - \langle x_t, \theta^* \rangle$ is $\mathcal{F}_t$-measurable with respect to the $\sigma$-algebra
$\mathcal{F}_t = \sigma\bigl(x_1, x_2, \ldots, x_{t+1}, \eta_1, \eta_2, \ldots, \eta_t\bigr)$.
Moreover, from Assumption~\ref{ass:all}, $\eta_t$ is conditionally $\sigma$-sub-Gaussian. Hence, from Lemma~\ref{lem:confidence-ellipsoid}, for any $\alpha >0$ and $\delta \in (0,1]$, we have
\begin{align}\label{eq:confidence_set}
 \forall t \in (\tau, T],\, \|\widehat{\theta}_t - \theta^*\|_{\overline V_t} \le \sigma \sqrt{d \log\!\left( 1+\frac{t}{d\,\alpha}\right)+2\log(1/\delta) } + \sqrt{\alpha}~,  
\end{align}
with probability at least $1 - \delta$. Setting $\delta = 1/T$, the proof concludes.
\end{proof}

\begin{lemma}[Instantaneous Regret Bound]
\label{lem:instantaneous-regret}
Suppose Assumption~\ref{ass:all} holds. Then, under the event $\cG_2$, the instantaneous regret
$\Delta_t = \langle \theta^\ast, x^\ast \rangle - \langle \theta^\ast, x_t \rangle$
satisfies
$\Delta_t \le 2 \beta_{t-1} \min\!\left( \|x_t\|_{\overline V_{t-1}^{-1}}, 1 \right)$ for all $t \in (\tau, T]$.
\end{lemma}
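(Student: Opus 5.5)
The argument is the standard optimism argument for \textsc{LinUCB}. We work on $\cG_2$ (Lemma~\ref{lem:goodevent_G2_UCB}), which gives $\|\thth_{t-1}-\thta\|_{\overline V_{t-1}}\le\beta_{t-1}$ for every $t\in(\tau,T]$; at $t=\tau+1$ this is the index $\tau$ covered by the initialization $\overline V_\tau=V+\alpha I_d$. We then prove the two bounds $\Delta_t\le 2\beta_{t-1}\|x_t\|_{\overline V_{t-1}^{-1}}$ and $\Delta_t\le 2\beta_{t-1}$ separately and take the minimum.

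For the first bound, define $\mathrm{UCB}_{t-1}(x):=\langle x,\thth_{t-1}\rangle+\beta_{t-1}\|x\|_{\overline V_{t-1}^{-1}}$. By Cauchy--Schwarz in the $\overline V_{t-1}$-geometry,
\[
|\langle x,\thth_{t-1}-\thta\rangle|\le\|x\|_{\overline V_{t-1}^{-1}}\|\thth_{t-1}-\thta\|_{\overline V_{t-1}}\le\beta_{t-1}\|x\|_{\overline V_{t-1}^{-1}}
\]
for every $x\in\cX$. Applying this at $x^*$ gives $\langle x^*,\thta\rangle\le\mathrm{UCB}_{t-1}(x^*)$. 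The selection rule of Algorithm~\ref{alg:Part2_UCB} gives $\mathrm{UCB}_{t-1}(x^*)\le\mathrm{UCB}_{t-1}(x_t)$. Applying the display at $x_t$ gives $\mathrm{UCB}_{t-1}(x_t)\le\langle x_t,\thta\rangle+2\beta_{t-1}\|x_t\|_{\overline V_{t-1}^{-1}}$. Chaining the three inequalities yields $\Delta_t\le 2\beta_{t-1}\|x_t\|_{\overline V_{t-1}^{-1}}$.

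For the second bound, Assumption~\ref{ass:all} and Cauchy--Schwarz give $\Delta_t=\langle\thta,x^*-x_t\rangle\le\|\thta\|_2\|x^*-x_t\|_2\le 2$. Since $\beta_{t-1}\ge\sqrt{2\log T}\,\sigma+\sqrt\alpha$, we have $\beta_{t-1}\ge 1$ once $T$ is moderately large, as allowed by the premise. Hence $\Delta_t\le 2\le 2\beta_{t-1}$. Combining the two bounds gives $\Delta_t\le 2\beta_{t-1}\min(\|x_t\|_{\overline V_{t-1}^{-1}},1)$.

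The only delicate point is this clipping step. It relies on $\beta_{t-1}\ge1$, which holds for moderately large $T$ (e.g.\ $2\sigma^2\log T\ge1$), or directly whenever $\alpha\ge1$. If one wants to avoid that condition entirely, one can instead note that on $\cG_1$ the Phase~I design already gives $\|x\|_{\overline V_{t-1}^{-1}}\le\sqrt{3d/\tau}\le1$, so the minimum equals the first term anyway.
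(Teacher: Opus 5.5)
Your proposal is correct and is essentially the paper's proof. Cauchy--Schwarz in the $\overline V_{t-1}$-norm on $\cG_2$, together with the UCB selection rule, gives $\Delta_t\le 2\beta_{t-1}\|x_t\|_{\overline V_{t-1}^{-1}}$, and clipping with $\Delta_t\le 2$ then yields the minimum.

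You are in fact more careful than the paper on two points. The paper silently needs $\beta_{t-1}\ge 1$ in its step $2\min(\beta_{t-1}\|x_t\|_{\overline V_{t-1}^{-1}},1)\le 2\beta_{t-1}\min(\|x_t\|_{\overline V_{t-1}^{-1}},1)$, which you flag. It also needs the confidence bound at index $t-1=\tau$ when $t=\tau+1$, which you also flag. That bound holds because the self-normalized bound of Lemma~\ref{lem:confidence-ellipsoid} is uniform over all $t\ge 1$, not because of the initialization as you phrase it.
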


\begin{proof}
By H\"{o}lder's inequality and from Lemma~\ref{lem:goodevent_G2_UCB}, under the evnet $\cG_2$, we have
$$
\left| \langle \widehat{\theta}_{t-1} - \theta^*, x \rangle \right | \le \|\widehat{\theta}_{t-1} - \theta^*\|_{\overline V_{t-1}} \|x\|_{\overline V_{t-1}^{-1}} \le \beta_{t-1} \|x\|_{\overline V_{t-1}^{-1}}~.
$$
This, along with the UCB arm selection rule, yields the instantaneous regret
\begin{align*}
    \Delta_t = \langle \theta^\ast, x^\ast \rangle - \langle \theta^\ast, x_t \rangle
&\le \langle \widehat\theta_{t-1}, x^* \rangle + \beta_{t-1} \|x^*\|_{\overline V_{t-1}^{-1}} - \langle \theta^\ast, x_t \rangle\\
& 
\le \langle \widehat\theta_{t-1}, x_t \rangle + \beta_{t} \|x_t\|_{\overline V_{t-1}^{-1}} - \langle \theta^\ast, x_t \rangle \le 2 \beta_{t-1} \|x_t\|_{\overline V_{t-1}^{-1}}~.
\end{align*}
Since $\Delta_t \le 2$ from Assumption~\ref{ass:all}, we obtain
$\Delta_t \le 2 \min\!\left( \beta_{t-1} \|x_t\|_{\overline V_{t-1}^{-1}}, 1 \right)
\le 2 \beta_{t-1} \min\!\left( \|x_t\|_{\overline V_{t-1}^{-1}}, 1 \right)$.
\end{proof}

\begin{lemma}\label{lem:norm-bound-ucb}
  Under the event $\cG_1$, we have $\norm{x}{\overline V_t^{-1}}  \leq \sqrt{\frac{3d}{\tau}}$ for all arms $x \in \mathbb{R}^d$ and for all rounds $t\in (\tau, T]$.
\end{lemma}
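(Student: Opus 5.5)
The plan is a Loewner-order monotonicity argument that reuses the design-matrix bound from the proof of Lemma~\ref{lem:additive_bound_infinite}. For every round $t \in (\tau, T]$ of Phase~II, the regularized Gram matrix used by \textsc{LinUCB} has the form
\[
\overline V_t = \alpha I_d + V + \sum_{j=\tau+1}^{t} x_j x_j^\T ,
\]
where $V = V_\tau = \sum_{j=1}^{\tau} x_j x_j^\T$ is the unregularized Phase~I matrix passed to \textsc{LinUCB}. Both $\alpha I_d$ and each rank-one term $x_j x_j^\T$ are positive semidefinite. Hence $\overline V_t \succeq V_\tau$, and since $V_\tau$ is positive definite under $\cG_1$ (see below), operator monotonicity of inversion gives $\overline V_t^{-1} \preceq V_\tau^{-1}$. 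Consequently
\[
\norm{x}{\overline V_t^{-1}}^2 = x^\T \overline V_t^{-1} x \le x^\T V_\tau^{-1} x \quad \text{for every } x \in \mathbb{R}^d .
\]

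It therefore remains to control $x^\T V_\tau^{-1} x$, using exactly the argument of~\eqref{eq:imp} at $t=\tau$. Under $\cG_1$, the arms of the D-optimal design have been pulled at least $\tau/3$ times, so the round-robin rule gives
\[
V_\tau \succeq \sum_{z\in \text{Supp}(\lambda^*)} \lceil \lambda^*_z \tau/3\rceil\, z z^\T \succeq \tfrac{\tau}{3} U(\lambda^*) .
\]
Because $\cX$ spans $\mathbb{R}^d$, $U(\lambda^*)$ is positive definite, so $V_\tau$ is invertible. Lemma~\ref{lem:KW} then yields
\[
x^\T V_\tau^{-1} x \le \tfrac{3}{\tau}\, \norm{x}{U(\lambda^*)^{-1}}^2 \le \tfrac{3d}{\tau} .
\]
Taking square roots gives the claim.

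The main obstacle is the phrase ``for all arms $x \in \mathbb{R}^d$''. The G-optimal guarantee $\norm{x}{U(\lambda^*)^{-1}}^2 \le d$ from Lemma~\ref{lem:KW} holds only for $x \in \cX$, or more generally for $x \in \mathrm{conv}(\pm\cX)$. It cannot hold for arbitrary vectors in $\mathbb{R}^d$, since the bound scales with $\norm{x}{}$. I would therefore prove the statement for $x \in \cX$, which is all that Lemma~\ref{lem:instantaneous-regret} needs, and flag that this is the intended reading.

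A second point to check is that Lemma~\ref{lem:additive_bound_infinite} uses the design $V \succeq \frac{t}{3}U(\lambda)$ at intermediate $t$. For the doubling schedule at the final time $\tau$, the needed count is $\lceil \lambda^*_z \widetilde T/3\rceil$ per epoch. Summing over the epochs gives at least $\lambda^*_z \tau/3$ in total, so the bound holds at $t=\tau$.
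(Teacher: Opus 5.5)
Your proposal is correct and follows the paper's proof: $\overline V_t \succeq V_\tau$ because the added terms are positive semidefinite, then inverse monotonicity, then the bound $x^\T V_\tau^{-1}x \le 3d/\tau$ from \eqref{eq:imp} at $t=\tau$. Your restriction to $x\in\cX$ (more generally $x \in \mathrm{conv}(\pm\cX)$) is the right reading, since the G-optimal bound of Lemma~\ref{lem:KW} cannot hold for arbitrary $x\in\mathbb{R}^d$ as the statement literally says; the downstream Phase~II lemmas, such as Lemma~\ref{lem:nwphase2_UCB}, only apply it to pulled arms $x_t \in \cX$.
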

\begin{proof}

Let $V_\tau = \sum_{j=1}^\tau x_j x_j^\top$ denote the design matrix at the end of Phase I. Note that for any $t \in [\tau,T]$, we have $\overline V_t \succeq \overline V_\tau \succeq V_\tau$. Thus, under the event $\cG_1$, we obtain from \eqref{eq:imp}, that $x^\T \overline V_t^{-1}x \le x^\T V_\tau^{-1}x \le \frac{3d}{\tau}$, and hence $\norm{x}{\overline V_t^{-1}}  \leq \sqrt{\frac{3d}{\tau}}$ for any $x \in \mathbb{R}^d$. 

\end{proof}

\begin{lemma}[Nash Welfare in \textsc{LinUCB}]\label{lem:nwphase2_UCB}
 Let $\cG =\cG_1 \cap \cG_2$. Suppose Assumption~\ref {ass:all} holds. Then, under the event $\cG$, the product of expected reward in \textsc{LinUCB} satisfies  
 \begin{align*}
\left(\prod_{t = \tau +1 }^{T} \E [ \dotprod{x_t}{\thta} ]\right)^\frac{1}{T}
\geq   \dotprod{x^*}{\thta}^{\frac{T-\tau}{T}} \left( 1 - 16\sqrt{\frac{d^2 \sigma^2}{T \dotprod{x^*}{\thta})^2}}\log T\right)
\end{align*}
\end{lemma}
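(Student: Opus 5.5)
Write $\mu^* := \dotprod{x^*}{\thta}$ and $\Delta_t := \mu^* - \dotprod{x_t}{\thta}$. The plan is to lower bound each factor by $\mu^*(1 - \Delta_t/\mu^*)$ and control the product through the sum of the per-round regrets, using the Phase~I length to keep each ratio $\Delta_t/\mu^*$ small. Because the statement is under $\cG$, I treat the expectations as conditioned on $\cG$, so $\E[\dotprod{x_t}{\thta}] = \mu^* - \E[\Delta_t]$.

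\textbf{Step 1: each factor is a constant fraction of $\mu^*$.}
\begin{itemize}
\item By Lemma~\ref{lem:instantaneous-regret} and Lemma~\ref{lem:norm-bound-ucb}, $\Delta_t \le 2\beta_{t-1}\sqrt{3d/\tau}$ for every $t > \tau$.
\item For $\alpha = O(1)$ and $T$ moderately large, $\beta_{t-1} \le c\,\sigma\sqrt{d\log T}$, so $\Delta_t \le c'\sigma d\sqrt{\log T/\tau}$.
\item Lemma~\ref{lem:phase1Length} gives $\tau \ge 864\, d\, S \ge 864\,\sigma^2 d^2 \log T/(\mu^*)^2$.
\item Hence $\Delta_t/\mu^* \le c'/\sqrt{864} \le 1/2$ for all $t \in (\tau, T]$.
\end{itemize}
This is the step where the data-adaptive termination rule pays off.

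\textbf{Step 2: pass from the product to a sum.} Since $\E[\Delta_t]/\mu^* \in [0,1/2]$, Fact~\ref{lem:binomial} applies to each factor with exponent $a = 1/T$:
\[
\prod_{t=\tau+1}^T \bigl(\mu^* - \E[\Delta_t]\bigr)^{1/T} \;\ge\; (\mu^*)^{\frac{T-\tau}{T}} \prod_{t=\tau+1}^T \Bigl(1 - \tfrac{2\E[\Delta_t]}{T\mu^*}\Bigr).
\]
Using $\prod_t (1-a_t) \ge 1 - \sum_t a_t$ for $a_t \in [0,1]$, it then suffices to show
\[
\sum_{t>\tau} \E[\Delta_t] \;\le\; 8\,\sigma d \sqrt{T}\,\log T .
\]

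\textbf{Step 3: bound the cumulative regret.} This is the standard LinUCB argument.
\begin{itemize}
\item Start from Lemma~\ref{lem:instantaneous-regret} and use monotonicity, $\beta_{t-1} \le \beta_T$:
\[
\sum_{t>\tau} \Delta_t \le 2\beta_T \sum_{t>\tau} \min\bigl(\|x_t\|_{\overline V_{t-1}^{-1}}, 1\bigr).
\]
\item Cauchy--Schwarz bounds the right-hand side by $2\beta_T\sqrt{T\sum_t \min(\|x_t\|^2_{\overline V_{t-1}^{-1}},1)}$.
\item The elliptical potential bound of Lemma~\ref{lem:elliptical-potential} caps the inner sum by $2d\log(1+T/(d\alpha))$; the $\min(\cdot,1)$ is exactly what makes this lemma applicable.
\item With $\beta_T = O(\sigma\sqrt{d\log T})$, this gives $\sum_t \Delta_t = O(\sigma d\sqrt{T}\log T)$.
\end{itemize}
Dividing by $T\mu^*$ yields a deficit of order $\sigma d\log T/(\sqrt T\,\mu^*)$, which is the form of the bound in the statement.

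\textbf{Main obstacle.} The difficulty is bookkeeping rather than conceptual. Two points need care.
\begin{itemize}
\item \emph{Constant tracking in Step~1.} The constant $864$ from Lemma~\ref{lem:phase1Length} must actually force $\Delta_t \le \mu^*/2$ given the explicit form of $\beta_t$. If it falls short, I would use Fact~\ref{lem:binomial} in the weaker form $(1-x)^a \ge 1 - c\,a x$ for $x \le 3/4$, changing only constants.
\item \emph{Expectations versus realized regrets.} Conditioning on $\cG$ must be made consistent with the unconditional expectations in the Nash welfare. The failure event has probability at most $4/T$, and rewards are bounded by $1$, so it contributes only an additive $O(1/T)$, which is absorbed.
\end{itemize}
Finally, the constant $16$ comes from collecting $2 \cdot 2 \cdot \sqrt{2}$ together with the $\beta_T$ constants.
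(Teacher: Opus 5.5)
Your proposal is correct and follows essentially the same route as the paper. The paper first combines the UCB per-round bound with Lemma~\ref{lem:norm-bound-ucb} and the Phase~I length $\tau \ge 864\, d^2\sigma^2\log T/(\mu^*)^2$, which forces each ratio below $\sqrt{48/864} < 1/2$. It then applies Fact~\ref{lem:binomial} with exponent $1/T$, the inequality $\prod_t(1-a_t)\ge 1-\sum_t a_t$, and Cauchy--Schwarz with the elliptical potential bound. This gives $\sum_{t>\tau}\E[\Delta_t] \le 2\beta_T\sqrt{4Td\log T} \le 8\sigma d\sqrt{T}\log T$, and hence the constant $16$, exactly as you planned.

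The only cosmetic difference is in the potential argument. You work with $\E[\Delta_t]$ and keep the $\min(\cdot,1)$ truncation. The paper substitutes $2\beta_{t-1}\|x_t\|_{\overline V_{t-1}^{-1}}$ directly and applies Lemma~\ref{lem:elliptical-potential} without truncation. That is harmless there, since after Phase~I the norm is at most $\sqrt{3d/\tau}\le 1$.
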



\begin{proof}
First, from Lemma~\ref{lem:instantaneous-regret}, we have under the event $\cG_2$, 
\begin{align*}
   \dotprod{x_t}{\thta} =\dotprod{x_t}{\thta} -\Delta_t \ge \dotprod{x^*}{\thta} - 2\beta_{t-1} \min\!\left( \|x_t\|_{\overline V_{t-1}^{-1}}, 1 \right) \ge \dotprod{x^*}{\thta} - 2\beta_{t-1}  \|x_t\|_{\overline V_{t-1}^{-1}} ~.
\end{align*}
Therefore, we have
    \begin{align*}
        \left(\prod_{t = \tau +1 }^{T} \E [ \dotprod{x_t}{\thta} ]\right)^\frac{1}{T}  
        &\geq \prod_{t = \tau +1 }^{T} \left(\dotprod{x^*}{\thta} - 2\beta_{t-1}  \E\left[\|x_t\|_{\overline V_{t-1}^{-1}}\right]\right)^{\frac{1}{T}} \\
        &\geq \dotprod{x^*}{\thta}^{\frac{T-\tau}{T}} \prod_{t = \tau +1 }^{T} \left( 1 -  \frac{ 2\beta_T \E\left[\|x_t\|_{\overline V_{t-1}^{-1}}\right] }{\dotprod{x^*}{\thta}} \right)^{\frac{1}{T}}~,
    \end{align*}
where the last inequality holds because $(\beta_t)_{t >\tau}$ is an increasing sequence. Now, from Lemma~\ref{lem:norm-bound-ucb}, under the event $\cG_1$, we have $\norm{x_t}{\overline V_t^{-1}}  \leq \sqrt{\frac{3d}{\tau}}$. Furthermore, for large enough $T$ and $\alpha = O(1)$, we get $\beta_T \leq  \sqrt{4d\sigma^2\log T}$. This gives $\frac{ 2\beta_T \|x_t\|_{\overline V_{t-1}^{-1}} }{\dotprod{x^*}{\thta}} \leq \frac{\sqrt{48 d^2 \sigma^2 \log T}}{\sqrt{\tau}\dotprod{x^*}{\thta} }$. From Lemma~\ref{lem:phase1Length}, under the event $\cG_1$, we have $\tau \geq \frac{864 d^2\sigma^2\log T}{(\dotprod{x^*}{\thta})^2}$ since by our design $p_a=1$ when $p=0$. This yields $\frac{ 2\beta_T \E[\|x_t\|_{\overline V_{t-1}^{-1}} ] }{\dotprod{x^*}{\thta}} \leq 1/2$ and thus, invoking Fact~\ref{lem:binomial}, we get
\begin{align*}
 \left(\prod_{t = \tau +1 }^{T} \E [ \dotprod{x_t}{\thta} ]\right)^\frac{1}{T} &\geq \dotprod{x^*}{\thta}^{\frac{T-\tau}{T}} \prod_{t = \tau +1 }^{T} \left( 1 -  \frac{ 4\beta_T \E\left[\|x_t\|_{\overline V_{t-1}^{-1}} \right]}{T \dotprod{x^*}{\thta}} \right)\\
 & \geq \dotprod{x^*}{\thta}^{\frac{T-\tau}{T}}  \left( 1 -  \frac{ 4\beta_T \E\left[\sum_{t = \tau +1 }^{T}\|x_t\|_{\overline V_{t-1}^{-1}} \right]}{T \dotprod{x^*}{\thta}} \right)~,
\end{align*}
where the last inequality follows by using $(1-x)(1-y) \geq (1-x-y) \ \forall \ x,y >0$. Now, from the  Cauchy-Schwartz inequality and from Lemma~\ref{lem:elliptical-potential}, we have
\begin{align*}
 \sum_{t = \tau +1 }^{T}\|x_t\|_{\overline V_{t-1}^{-1}} \leq \sqrt{(T-\tau) \sum_{t = \tau +1 }^{T}\|x_t\|^2_{\overline V_{t-1}^{-1}}} \le \sqrt{2T \log \left(\frac{\det(\overline V_T)}{\det(\alpha I_d)}\right)}   \leq \sqrt{2T d\log\left(1+\frac{T}{d\,\alpha}\right)} \leq \sqrt{4Td\log T}~,
\end{align*}
for a large enough $T$ and $\alpha = O(1)$. Now we can further simplify the obtained expression as
    \begin{align*}
\left(\prod_{t = \tau +1 }^{T} \E [ \dotprod{x_t}{\thta} ]\right)^\frac{1}{T}
\geq   \dotprod{x^*}{\thta}^{\frac{T-\tau}{T}} \left( 1 - 16\sqrt{\frac{d^2 \sigma^2}{T \dotprod{x^*}{\thta})^2}}\log T\right)~.
\end{align*}
The proof concludes by noting that the above holds under the event $\cG$.
\end{proof}

\subsection{Phase II Analysis for \textsc{FairLinPE}}

\begin{lemma}[Good Event $\mathcal{G}_2$]\label{lem:goodevent_G2}
 Let $x_{1}, \dots,x_t$ be the set of arms pulled in episode $\ell$ of the subroutine \textsc{LinPE} and $r_1,\dots,r_t$ be the corresponding rewards, where $t=\frac{2^\ell \tau}{3}$. Define $V_{\ell} = \sum_{j=1}^{t} x_jx_j^\T$, $s_\ell = \sum_{j=1}^{t} r_jx_j$ and $\thth_\ell = V_\ell^{-1}s_\ell$. Suppose Assumption~\ref{ass:all} holds. Let $\mathcal{G}_2'$ be the event that 
$\norm{\thth_\ell - \thta}{V_\ell} \leq  4\sqrt{d\sigma^2\log T}$ for all episodes $\ell$. Then, $\Pr\{\mathcal{G}_2'\}\geq1-\frac{2}{T}$. 
$|\dotprod{x}{\thth_\ell} - \dotprod{x}{\thta}|  \leq  4\sqrt{\frac{3d^2\sigma^2\log T}{2^\ell\tau}}$ for all arm $x$ in the surviving arm set $\ctX$ and for all episodes $\ell$.
\end{lemma}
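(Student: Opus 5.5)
The plan is to replicate the argument of Lemma~\ref{lem:goodevent_G1} one episode at a time, then take a union bound over episodes. Within a fixed episode $\ell$ of \textsc{LinPE}, $V_\ell$ and $s_\ell$ are reset. The arms pulled are fully determined by the D-optimal design $\lambda^*$ computed on the surviving set $\widetilde{\cal X}$. That set depends only on data from earlier episodes, so conditionally on the past, $x_1,\dots,x_t$ form a fixed sequence. The rewards observed in episode $\ell$ are fresh, independent $\sigma$-sub-Gaussian draws. This conditional independence is what allows us to apply Lemma~\ref{lem:mult_concentration}, which requires deterministic design vectors, rather than the self-normalized bound.

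\textbf{Step 1: confidence ellipsoid within an episode.} Fix $\ell$ and condition on the history before episode $\ell$. Take a $1/2$-net $\mathcal{C}_{1/2}$ of the unit ball with $|\mathcal{C}_{1/2}|\le 6^d$. As in~\eqref{ineq:base_bound}, this gives $\norm{\thth_\ell-\thta}{V_\ell}\le 2\max_{y\in\mathcal{C}_{1/2}}|\langle V_\ell^{1/2}y,\thth_\ell-\thta\rangle|$. Apply Lemma~\ref{lem:mult_concentration} with $z=V_\ell^{1/2}y$, for which $z^\top V_\ell^{-1}z=\|y\|^2\le1$, and with $\epsilon=\sqrt{2\sigma^2\log(T^2|\mathcal{C}_{1/2}|)}$. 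A union bound over the net then yields $\norm{\thth_\ell-\thta}{V_\ell}\le 4\sqrt{d\sigma^2\log T}$ with conditional probability at least $1-2/T^2$, for $T$ moderately large relative to $d$. Integrating out the conditioning preserves this bound. There are at most $\log_2 T\le T$ episodes, since lengths double, so a union bound over all of them gives $\Pr\{\mathcal{G}_2'\}\ge 1-2/T$.

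\textbf{Step 2: additive bound on surviving arms.} For $x\in\widetilde{\cX}$, H\"older's inequality gives $|\langle x,\thth_\ell-\thta\rangle|\le\norm{x}{V_\ell^{-1}}\norm{\thth_\ell-\thta}{V_\ell}$. Since each $a\in\text{Supp}(\lambda^*)$ is pulled $\lceil\lambda^*_a T'\rceil$ times with $T'=2^\ell\tau/3$, we have $V_\ell\succeq T' U(\lambda^*)$. Lemma~\ref{lem:KW}, applied to the compact set $\widetilde{\cX}$, gives $\norm{x}{U(\lambda^*)^{-1}}^2\le d$. Hence $\norm{x}{V_\ell^{-1}}^2\le 3d/(2^\ell\tau)$. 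Multiplying by the bound from Step 1 gives $4\sqrt{3d^2\sigma^2\log T/(2^\ell\tau)}$, as claimed.

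\textbf{Main obstacle.} The delicate point is justifying that the design within an episode is independent of that episode's noise. This holds because $\widetilde{\cX}$ and $\lambda^*$ are computed before any episode-$\ell$ rewards are seen, and \textsc{LinPE} discards earlier statistics. The other subtlety is that Lemma~\ref{lem:KW} needs $\widetilde{\cX}$ to span $\mathbb{R}^d$. If it does not, I would restrict the analysis to $\mathrm{span}(\widetilde{\cX})$ and use a pseudo-inverse. The G-optimal guarantee then holds with $d$ replaced by the dimension of the span, which is at most $d$, so the stated bound on $|\langle x,\thth_\ell-\thta\rangle|$ for surviving arms still follows.
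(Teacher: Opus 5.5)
Your proposal is correct and follows the paper's proof. The paper likewise reruns the $\varepsilon$-net argument of Lemma~\ref{lem:goodevent_G1} for each episode and union-bounds over at most $T$ episodes to get $1-2/T$; it then uses $V_\ell \succeq T' U(\lambda^*)$ with Lemma~\ref{lem:KW} to get $\norm{x}{V_\ell^{-1}}\le\sqrt{d/T'}$ and finishes with H\"older, while your conditioning on the pre-episode history and your handling of a non-spanning surviving set are details the paper leaves implicit.
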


\begin{proof}
    The proof is similar to that of Lemma \ref{lem:goodevent_G1}. The reason the proof for this lemma (Phase II) is nearly identical to Lemma \ref{lem:goodevent_G1} (Phase I) is that both lemmas address the same fundamental mathematical challenge: bounding the deviation of a Least-Squares Estimator ($\hat{\theta}$) from the true parameter ($\theta^*$) in a linear setting. The discretization argument, the geometry of the arm space ($\mathbb{R}^d$), and the concentration properties of the noise remain constant regardless of which phase the algorithm is in. This is especially true because we are analyzing sampling via D-optimal design in both parts.

    We begin with the high probability concentration bound of the estimate $\hat{\theta}$ computed at the end of every episode and then take a union bound over the number of episodes. A very loose upper bound on the number of episodes is $T$ and thus we get a probability bound of $1-\frac{2}{T}$ on $\mathcal{G}_2'$.
        
    Similarly, by the same analogy as Lemma \ref{lem:additive_bound_infinite}, for every episode in Phase {\rm II} with $T' = 2^\ell \tau/3$ we have
    $\norm{x}{V_\ell^{-1}} \leq \sqrt{\frac{d}{T'}}$. The reason why a factor of $3$ does not appear in the numerator of the fraction in the square root is that we know deterministically that the number of times arms from the $D$-optimal design are samples is $T'$. However, in Lemma \ref{lem:additive_bound_infinite} we had to resort to a high probability lower bound of a third of the number of timesteps upto that point. 
    
    Finally, using bounds on  $\norm{\thta - \thth_\ell}{V_\ell}$ from event $\mathcal{G}_2'$, and substituting in (\ref{ineq:base}), we get the desired bound. 
\end{proof}

\begin{lemma}\label{lem:best_arm_survives_infinite} 
    Suppose the hypothesis of Lemma~\ref{lem:goodevent_G2} holds. Then, under the event $\mathcal{G}_2'$, the optimal arm $x^*$ always exists in the surviving set $\widetilde{\cX}$ in every episode $\ell$. 
\end{lemma}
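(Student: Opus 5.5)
The argument is an induction over the elimination steps of \textsc{LinPE}: the initial construction of $\widetilde{\cX}$ from the Phase~I estimate, followed by each update at line~\ref{line:elimination_criterial_infinite}. At every step the surviving set has the form $\{x : \dotprod{x}{\thth} \ge \max_{z\in\cX}\dotprod{z}{\thth} - 8 w\}$, where $w$ is the relevant width. It therefore suffices to show that, at every step, $\max_{z}\dotprod{z}{\thth} - \dotprod{x^*}{\thth} \le 8w$. For any $z$, including the empirical leader $\gamma$, we decompose
\begin{align*}
\dotprod{z}{\thth} - \dotprod{x^*}{\thth} = \bigl(\dotprod{z}{\thth} - \dotprod{z}{\thta}\bigr) + \bigl(\dotprod{z}{\thta} - \dotprod{x^*}{\thta}\bigr) + \bigl(\dotprod{x^*}{\thta} - \dotprod{x^*}{\thth}\bigr).
\end{align*}
The middle term is nonpositive because $x^*$ is optimal. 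Each outer term is at most the additive confidence width, so the gap is at most twice that width.

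For the initial step we use the Phase~I estimate $\thth = V^{-1}s$ with $t=\tau$. Under $\cG_1$, Lemma~\ref{lem:additive_bound_infinite} gives $|\dotprod{x}{\thth}-\dotprod{x}{\thta}| \le 4\sqrt{3d^2\sigma^2\log T/\tau}$ for every $x\in\cX$. The gap is then at most $8\sqrt{3}\sqrt{d^2\sigma^2\log T/\tau}$. This is slightly larger than the threshold $8\sqrt{d^2\sigma^2\log T/\tau}$ as written, so the constants must be reconciled. The cleanest way is to compare against the threshold with $T' = \tfrac{2}{3}\tau$ and absorb $\sqrt 3$ versus $\sqrt{3/2}$. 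Alternatively, we note that the intended elimination width in the algorithm matches the confidence width up to a constant, and state the lemma with the matching constant. Note that this step relies on $\cG_1$ in addition to $\cG_2'$, so the statement should be read under both events.

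For episode $\ell$ we have $T' = 2^\ell\tau/3$, and each arm in the D-optimal design support on the current $\widetilde{\cX}$ is pulled $\lceil\lambda^*_a T'\rceil$ times. Thus $V_\ell \succeq T' U(\lambda^*)$, and Lemma~\ref{lem:KW} applied to the surviving set gives $\|x\|^2_{V_\ell^{-1}}\le d/T'$ for $x\in\widetilde{\cX}$. Combined with $\norm{\thth_\ell-\thta}{V_\ell}\le 4\sqrt{d\sigma^2\log T}$ under $\cG_2'$, this yields $|\dotprod{x}{\thth_\ell-\thta}|\le 4\sqrt{d^2\sigma^2\log T/T'}$ for surviving arms.

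The main obstacle is that the elimination maximum in line~\ref{line:elimination_criterial_infinite} ranges over all of $\cX$, not only over $\widetilde{\cX}$. The width bound holds only for arms in the current surviving set, because the design was computed there. I will handle this by arguing that the maximization should be over the current surviving set $\widetilde{\cX}$, which is the standard phased-elimination convention and the only reading under which the bound is controlled. Under this reading, the induction hypothesis $x^*\in\widetilde{\cX}$ places both $x^*$ and the leader $\gamma$ in the set where the width bound applies. The gap is then at most $8\sqrt{d^2\sigma^2\log T/T'}$, which is exactly the threshold, so $x^*$ survives. Induction over $\ell$ then completes the proof.
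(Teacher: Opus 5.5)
Your core argument matches the paper's. The paper's proof is the chain $\langle x^*,\widehat\theta_\ell\rangle \ge \langle x^*,\theta^*\rangle - w \ge \langle x,\theta^*\rangle - w \ge \langle x,\widehat\theta_\ell\rangle - 2w$, with $w = 4\sqrt{d^2\sigma^2\log T/T'}$ from Lemma~\ref{lem:goodevent_G2}, applied uniformly to every episode with $T'=2^\ell\tau/3$. Both subtleties you raise are real, and the paper passes over them silently:
\begin{itemize}
\item Lemma~\ref{lem:goodevent_G2} only controls arms in the current surviving set, yet the chain is applied to the leader over all of $\mathcal{X}$. Restricting the maximization to the current $\widetilde{\mathcal{X}}$, together with your induction hypothesis $x^*\in\widetilde{\mathcal{X}}$, is exactly what the argument needs.
\item The first surviving set is built from the Phase~I estimate, so $\mathcal{G}_1$ is needed as well as $\mathcal{G}_2'$. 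The paper's proof never treats this step separately.
\end{itemize}

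One correction concerns your initial step. Your ``cleanest'' reconciliation does not work. With $T'=\tfrac23\tau$ the threshold becomes $8\sqrt{3/2}\,\sqrt{d^2\sigma^2\log T/\tau}$. The gap bound from Lemma~\ref{lem:additive_bound_infinite} is $8\sqrt{3}\,\sqrt{d^2\sigma^2\log T/\tau}$, which is still larger by a factor $\sqrt{2}$, so nothing is absorbed.

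Nor can the gap bound be sharpened. $\mathcal{G}_1$ only gives $\|\widehat\theta-\theta^*\|_{V}\le 4\sqrt{d\sigma^2\log T}$ and $\|x\|^2_{V^{-1}}\le 3d/\tau$. Hence the bound $\|z-x^*\|_{V^{-1}}\|\widehat\theta-\theta^*\|_{V}$ on the leader's advantage is essentially tight in the worst case. So with the threshold $8\sqrt{d^2\sigma^2\log T/\tau}$ written in line~2 of \textsc{LinPE}, survival of $x^*$ at the first elimination does not follow.

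Only your second option is sound. Use the threshold $8\sqrt{d^2\sigma^2\log T/(\tau/3)}$, i.e.\ treat Phase~I as an episode with parameter $T'/2=\tau/3$. This is precisely how Lemma~\ref{lem:near_optimality_phaseII} implicitly treats the ``previous episode'' of the first episode. With that change and the maximization over the surviving set, your induction proves the lemma.
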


\begin{proof}
    Let $t = T'=2^\ell\tau /3$ for every episode $\ell$ in Phase {\rm II}. From Lemma \ref{lem:goodevent_G2} we have 
    \begin{align*}
        \dotprod{x^*}{\thth_t} \geq \dotprod{x^*}{\thta} - 4\sqrt{\frac{d^2\sigma^2\log T}{t}} 
        &\geq \dotprod{x}{\thta} - 4\sqrt{\frac{d^2\sigma^2\log T}{t}} \tag{since $\dotprod{x^*}{\thta} \geq \dotprod{x}{\thta}$}\\
        &\geq \dotprod{x}{\thth_t} - 8\sqrt{\frac{d^2\sigma^2\log T}{t}} \tag{using Lemma \ref{lem:goodevent_G2}}
    \end{align*}
    Hence, the best arm will never satisfy the elimination criteria in Algorithm \ref{algo:linwelfare}.
\end{proof}


\begin{restatable}[Near optimality of Phase II arms]{lemma}{LemmaHighMean}
\label{lem:near_optimality_phaseII}
   Suppose the hypothesis of Lemma~\ref{lem:goodevent_G2} holds. Let $\tau=\frac{3072 d\sigma^2\log T}{(\dotprod{x^*}{\thta})^2}$ Then, under the event $\mathcal{G}_2'$, at the beginning of each episode $\ell$, we have $\dotprod{x}{\thta} \geq \dotprod{x^*}{\thta} - 12  \sqrt{\frac{  3d^2\sigma^2 \log T }{2^{\ell}\cdot \tau}}$ for each arm $x$ in the surviving arm set $\ctX$.
\end{restatable}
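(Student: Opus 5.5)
The plan is the standard elimination argument: a surviving arm cannot be much worse than the optimal arm, because otherwise the confidence intervals of the previous episode would have separated it from $x^*$. Two cases need separate treatment.

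\textbf{Case $\ell=0$ (start of Phase~II).} The surviving set is the one built right after Phase~I. It is computed from $\thth=V^{-1}s$ with threshold $8\sqrt{d^2\sigma^2\log T/\tau}$. Under $\cG_1$, Lemma~\ref{lem:additive_bound_infinite} evaluated at $t=\tau$ gives
\[
|\dotprod{x}{\thth}-\dotprod{x}{\thta}|\le 4\sqrt{3d^2\sigma^2\log T/\tau}
\]
for every arm $x$. Take any $x\in\ctX$. Then
\[
\dotprod{x}{\thta}\ge \dotprod{x}{\thth}-4\sqrt{3d^2\sigma^2\log T/\tau}
\ge \max_z\dotprod{z}{\thth}-8\sqrt{d^2\sigma^2\log T/\tau}-4\sqrt{3d^2\sigma^2\log T/\tau}.
\]
Next, $\max_z\dotprod{z}{\thth}\ge\dotprod{x^*}{\thth}\ge \dotprod{x^*}{\thta}-4\sqrt{3d^2\sigma^2\log T/\tau}$. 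The total deviation is therefore at most $(8\sqrt3+8)\sqrt{d^2\sigma^2\log T/\tau}\le 12\sqrt{3d^2\sigma^2\log T/\tau}$, which is the claim for $\ell=0$. This step strictly uses $\cG_1$ in addition to $\cG_2'$; I would state this explicitly or fold it into the event.

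\textbf{Case $\ell\ge1$.} The surviving set at the start of episode $\ell$ is computed at the end of episode $\ell-1$. That episode uses the estimate $\thth_{\ell-1}$ and threshold $8\sqrt{d^2\sigma^2\log T/T'}$, where $T'=2^{\ell-1}\tau\cdot\frac23$ is the design length of that episode. Lemma~\ref{lem:goodevent_G2} gives uniform accuracy $\epsilon_{\ell-1}:=4\sqrt{d^2\sigma^2\log T/T'}$ over the arms of the previous surviving set, obtained from $\|x\|_{V^{-1}}\le\sqrt{d/T'}$ together with the $\cG_2'$ bound.

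Take any $x$ surviving into episode $\ell$. By Lemma~\ref{lem:best_arm_survives_infinite}, $x^*$ was also in the previous surviving set, so both accuracy bounds apply. Chaining them exactly as in the first case gives
\[
\dotprod{x}{\thta}\ge\dotprod{x^*}{\thta}-2\epsilon_{\ell-1}-8\sqrt{d^2\sigma^2\log T/T'}=\dotprod{x^*}{\thta}-16\sqrt{d^2\sigma^2\log T/T'}.
\]
Substituting $T'=\tfrac{2}{3}2^{\ell-1}\tau=2^\ell\tau/3$ turns this into $16\sqrt{3d^2\sigma^2\log T/(2^\ell\tau)}$. To reach the constant $12$, I would use that the elimination line and the confidence bound share the same $T'$ in the paper's indexing. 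Reading Lemma~\ref{lem:goodevent_G2}'s width as $4\sqrt{3d^2\sigma^2\log T/(2^\ell\tau)}$ at index $\ell$ and the threshold as $8\sqrt{d^2\sigma^2\log T/T'}$ with $T'=2^\ell\tau/3$ gives total $(4+4+8/\sqrt{3}\cdot\sqrt3)\sqrt{\cdots}$. After routine rescaling this is $\le 12\sqrt{3d^2\sigma^2\log T/(2^\ell\tau)}$.

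The remaining work is bookkeeping. First, I will align the episode index $\ell$ with the $T'$ used in the elimination step, either by restating the bound at the start of episode $\ell$ in terms of the previous episode's data or by adjusting the exponent by one. Second, I will confirm that the confidence bound of Lemma~\ref{lem:goodevent_G2} applies to every arm of the pre-elimination surviving set, since the D-optimal design is computed over that set and the Kiefer--Wolfowitz bound (Lemma~\ref{lem:KW}) holds only on it. The given value of $\tau$ is not needed beyond positivity. The main obstacle is getting the constants and indices to match exactly; the inequality chain itself is straightforward.
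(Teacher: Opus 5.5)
Your inequality chain is the right one, and it is more careful than the paper's. But its final step fails in both cases, so the proposal does not deliver the constant $12$.

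\textbf{The two numerical claims are false.} For $\ell=0$, the claim $(8\sqrt3+8)\le 12\sqrt3$ is equivalent to $8\le 4\sqrt3\approx 6.93$. For $\ell\ge 1$, your own count is $4+4+8=16$, and no ``routine rescaling'' turns $16$ into $12$. The two widths and the threshold are already written in the same $T'$. The events $\mathcal{G}_1,\mathcal{G}_2'$ only bound each arm's error by $4\sqrt{d^2\sigma^2\log T/T'}$ (resp.\ $4\sqrt{3d^2\sigma^2\log T/\tau}$ after Phase~I) and give nothing sharper. Reaching $12$ would need the two estimation errors to sum to $4\sqrt{d^2\sigma^2\log T/T'}$.

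\textbf{What you have actually proved.} You get the bound with constant $16$ in your indexing: first Phase~II episode $\ell=0$, and the set at the start of episode $\ell\ge 1$ built with $T'=2^\ell\tau/3$. In the paper's indexing, episode $\ell$ itself has $T'=2^\ell\tau/3$ (Lemma~\ref{lem:goodevent_G2}), and the set at its start was built with $T'/2$. There the same chain gives $16\sqrt2\approx 22.6$. So ``adjusting the exponent by one'' moves you further from $12$, not closer.

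\textbf{The paper's proof does not contain an idea you are missing.} It asserts that non-elimination in the previous episode ``reduces to'' $\langle x,\theta^*\rangle\ge\langle x^*,\theta^*\rangle-8\sqrt{d^2\sigma^2\log T/(T'/2)}$. That is, it reads the elimination threshold directly as a gap in true means, dropping the two $4\sqrt{\cdot}$ estimation errors you correctly include. It then uses $8\sqrt2\le 12$.

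\textbf{How to repair it.} State the lemma with a larger constant, and keep your explicit use of $\mathcal{G}_1$ for the set formed right after Phase~I. Also make the elimination step range only over the previous surviving set. As written in \textsc{LinPE} (Algorithm~\ref{alg:Part2_PE}), it ranges over all of $\mathcal{X}$, where Lemma~\ref{lem:goodevent_G2}'s width bound is not available. So your second bookkeeping item cannot simply be ``confirmed''; the algorithm must be changed.

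\textbf{The constant matters downstream.} Lemma~\ref{lem:nwphase2_PE} checks that $12\sqrt{3d^2\sigma^2\log T/(2\langle x^*,\theta^*\rangle^2\tau)}\le 1/2$. This holds with equality at the smallest $\tau$ permitted by Lemma~\ref{lem:phase1Length}. A larger constant here must therefore be paid for by larger Phase~I constants.
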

\begin{proof}
\label{proof:final_lemma}
    Lemma \ref{lem:best_arm_survives_infinite} ensures that the optimal arm is contained in the surviving set of arms $\widetilde{\cX}$. Furthermore, if an arm $x \in \widetilde{\cX}$ is pulled in the $\ell^{\text{th}}$ episode, then it must be the case that arm $x$ was not eliminated in the previous episode (with a episode length parameter $\frac{T'}{2}$); in particular the arms $x$ does not satisfy the inequality on Line \ref{line:elimination_criterial_infinite} of the second phase in Algorithm \ref{algo:linwelfare}. This inequality reduces to 
    \begin{align*}
        \dotprod{x}{\thta}  &\geq \dotprod{x^*}{\thta} -  8\sqrt{\frac{d^2\sigma^2\log T}{\frac{T'}{2}}}
        \geq \dotprod{x^*}{\thta} -  12 \sqrt{\frac{d^2\sigma^2\log T}{T'}}
    \end{align*}
    Substituting $T' = 2^\ell\tau/3$ in the above inequality proves the Lemma. 
\end{proof}



\begin{lemma}[Nash Welfare in \textsc{LinPE}]\label{lem:nwphase2_PE}
 Let $\cG' =\cG_1 \cap \cG_2'$. Suppose Assumption~\ref {ass:all} holds. Then, under the event $\cG$, the product of expected reward in \textsc{LinPE} satisfies  
 \begin{align*}
\left(\prod_{t = \tau +1 }^{T} \E [ \dotprod{x_t}{\thta} ]\right)^\frac{1}{T}
\geq   \dotprod{x^*}{\thta}^{\frac{T-\tau}{T}} \left( 1 - 36\sqrt{\frac{d^2 \sigma^2}{T \dotprod{x^*}{\thta})^2}}\log T\right)
\end{align*}
\end{lemma}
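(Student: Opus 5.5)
The plan is to mirror the proof of Lemma~\ref{lem:nwphase2_UCB}, with the per-round UCB width replaced by the episode-wise elimination gap of Lemma~\ref{lem:near_optimality_phaseII}. Throughout, we work under $\cG' = \cG_1 \cap \cG_2'$.

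\emph{Step 1: a per-round lower bound.} Fix a round $t > \tau$ that falls in episode $\ell \ge 1$ of \textsc{LinPE}. By Lemma~\ref{lem:best_arm_survives_infinite}, $x^*$ survives every episode. By Lemma~\ref{lem:near_optimality_phaseII}, every arm in the surviving set at the start of episode $\ell$ satisfies
\[
\dotprod{x_t}{\thta} \ge \dotprod{x^*}{\thta} - 12\sqrt{\frac{3d^2\sigma^2\log T}{2^\ell \tau}}.
\]
For the first episode, the elimination is based on the Phase~I estimate with width $8\sqrt{d^2\sigma^2\log T/\tau}$ and Lemma~\ref{lem:additive_bound_infinite}. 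This gives a gap of order $d\sigma\sqrt{\log T/\tau}$, which matches the $\ell = 0$ case of the same formula.

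\emph{Step 2: reduce to a sum of gaps.} Divide by $\dotprod{x^*}{\thta}$ and write $\epsilon_\ell := 12\sqrt{3d^2\sigma^2\log T/(2^\ell\tau)}/\dotprod{x^*}{\thta}$. By Lemma~\ref{lem:phase1Length}, under $\cG_1$ we have $\tau \ge 864\, d^2\sigma^2 \log T/\dotprod{x^*}{\thta}^2$. Hence $\epsilon_\ell \le 12\sqrt{3/864} = 12/\sqrt{288} < 1/2$ for all $\ell \ge 0$, so Fact~\ref{lem:binomial} applies with exponent $1/T$. Therefore
\[
\Bigl(\prod_{t>\tau}\E[\dotprod{x_t}{\thta}]\Bigr)^{1/T} \ge \dotprod{x^*}{\thta}^{\frac{T-\tau}{T}} \prod_{\ell}\Bigl(1-\frac{2\epsilon_\ell}{T}\Bigr)^{n_\ell},
\]
where $n_\ell$ is the number of rounds in episode $\ell$. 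Using $(1-x)(1-y) \ge 1-x-y$, the product is bounded below by $1 - \frac{2}{T}\sum_\ell n_\ell \epsilon_\ell$.

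\emph{Step 3: bound the sum by a geometric series.} Episode $\ell$ has $n_\ell = \sum_a \lceil \lambda^*_a T' \rceil \le T' + d(d+1)/2$ rounds, with $T' = 2^\ell\tau/3$. Then
\[
n_\ell \epsilon_\ell \lesssim \frac{2^\ell\tau}{3}\cdot \frac{12\sqrt{3}\, d\sigma\sqrt{\log T}}{\sqrt{2^\ell\tau}\,\dotprod{x^*}{\thta}} = 4\sqrt{3}\,\frac{d\sigma\sqrt{\log T}}{\dotprod{x^*}{\thta}}\sqrt{2^\ell\tau}.
\]
Let $L$ be the final episode index, so that $2^L\tau/3 \le T$. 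Summing the geometric series gives $\sum_{\ell\le L}\sqrt{2^\ell\tau} \le \frac{\sqrt 2}{\sqrt2-1}\sqrt{2^L\tau} \le c\sqrt{3T}$. This yields $\frac{2}{T}\sum_\ell n_\ell\epsilon_\ell \le C\,\frac{d\sigma\sqrt{\log T}}{\sqrt T\,\dotprod{x^*}{\thta}}$, which is at most $36\,\frac{d\sigma}{\sqrt T\,\dotprod{x^*}{\thta}}\log T$ after absorbing $\sqrt{\log T}\le\log T$ and the constants.

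\emph{Main obstacle.} The delicate part is the bookkeeping of constants and the additive $d(d+1)/2$ rounding overhead in $n_\ell$. Each of those extra rounds still carries relative gap $\epsilon_\ell \le 1/2$, so the overhead contributes at most $O(d^2\log T/T)$ over the $O(\log T)$ episodes. This is lower order for moderately large $T$ and can be absorbed, but it requires an explicit "large $T$" assumption, as in the theorem statement. The first episode also needs the Phase~I confidence bound rather than $\cG_2'$, and I would handle it separately as above.
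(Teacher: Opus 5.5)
Your skeleton matches the paper's proof. You use per-episode near-optimality from Lemma~\ref{lem:near_optimality_phaseII}, and the bound $\tau\ge 864\,d^2\sigma^2\log T/\langle x^*,\theta^*\rangle^2$ from Lemma~\ref{lem:phase1Length} to make the relative gap a constant. Then come $(1-x)^a\ge 1-2ax$, $(1-x)(1-y)\ge 1-x-y$, and $|\mathcal{E}_j|\le T'_j+d(d+1)/2$.

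\textbf{Arithmetic error in Step~2.} You claim $12\sqrt{3/864}<1/2$, but $12\sqrt{3/864}=12/\sqrt{288}=1/\sqrt{2}\approx 0.71$. So Fact~\ref{lem:binomial}, stated only for $x\in[0,1/2]$, does not cover your $\ell=0$ factor, which is exactly the one you attach to the first episode.

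Your claim that the Phase~I filter ``matches'' the $\ell=0$ formula is also only approximate. Combining the threshold $8\sqrt{d^2\sigma^2\log T/\tau}$ with the width $4\sqrt{3d^2\sigma^2\log T/\tau}$ of Lemma~\ref{lem:additive_bound_infinite} gives a gap of $(8+8\sqrt3)\sqrt{d^2\sigma^2\log T/\tau}$, i.e.\ a relative gap of up to about $0.74$.

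The fix is cheap: $(1-x)^a\ge 1+a\ln(1-x)\ge 1-2ax$ holds for all $a\ge 0$ and $x\in[0,0.79]$, which covers both values. The paper avoids the issue differently. Since \textsc{LinPE} starts at $T'=\tfrac23\tau$, it applies Lemma~\ref{lem:near_optimality_phaseII} to every episode with $T'_j\ge\tfrac23\tau$, giving a relative gap of exactly $12\sqrt{3/(2\cdot 864)}=1/2$. Your separate treatment of the first episode is actually more careful, because that episode's surviving set really comes from the Phase~I estimate.

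\textbf{Where your route differs.} With the repair, the rest is correct and departs from the paper in two ways.
\begin{itemize}
\item You sum $\sum_j\sqrt{T'_j}$ as a geometric series and get $O(\sqrt T)$. The paper uses Cauchy--Schwarz over $\log T$ episodes and gets $\sqrt{T\log T}$. Your route therefore proves the sharper loss $O\bigl(d\sigma\sqrt{\log T}/(\sqrt T\,\langle x^*,\theta^*\rangle)\bigr)$. It recovers the stated form with constant $36$ only once $T$ is large enough that $C\sqrt{\log T}\le 36\log T$ for your larger constant $C$.
\item You keep the rounding overhead $d(d+1)/2$ as an additive lower-order term, which needs a condition like $T\gtrsim d^2/\sigma^2$. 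The paper instead assumes $T'_j\ge d(d+1)$, via the initialization $T'=\max\{\tfrac23\tau,d(d+1)\}$. Then $|\mathcal{E}_j|\le\tfrac32 T'_j$, and the constant $24\cdot\tfrac32=36$ falls out directly.
\end{itemize}

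One small point: $2^L\tau/3\le T$ need not hold for the last, possibly truncated, episode. You only know $T'_{L-1}\le T$, hence $T'_L\le 2T$, which costs a factor of $\sqrt2$.
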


\begin{proof}
Let $\cE_j$ denote the time interval of the $j^{\text{th}}$ episode of \textsc{LinPE}, and $T'_j$ be the episode length. Recall that the \textsc{LinPE} subroutine runs for at most $\log{T}$ episodes. Hence, from Lemma \ref{lem:near_optimality_phaseII}, we have under the event $\cG_2'$,
    \begin{align*}
        \left(\prod_{t = \tau +1 }^{T} \E [ \dotprod{x_t}{\thta} ]\right)^\frac{1}{T} 
        = \left(\prod_{\cE_j} \prod_{t \in \cE_j } \E [ \dotprod{x_t}{\thta}]\right)^\frac{1}{T} 
        &\geq \prod_{\cE_j} \left(\dotprod{x^*}{\thta} - 12  \sqrt{\frac{  d^2\sigma^2 \log T }{T_j'}} \right)^{\frac{|\cE_j|}{T}} \\
        &= \dotprod{x^*}{\thta}^{\frac{T-\tau}{T}} \prod_{j=1}^{\log{T}} \left( 1 - 12  \sqrt{\frac{  d^2\sigma^2 \log T }{(\dotprod{x^*}{\thta})^2T_j'}} \right)^{\frac{|\cE_j|}{T}}\\
        &\geq \dotprod{x^*}{\thta}^{\frac{T-\tau}{T}} \prod_{j=1}^{\log{T}} \left( 1 - 12  \sqrt{\frac{ 3 d^2\sigma^2 \log T  }{2(\dotprod{x^*}{\thta})^2\tau}} \right)^\frac{|\cE_j|}{T},
    \end{align*}
where the last inequality holds since $T'_j \ge \frac{2}{3}\tau$ for each episode $j$ of \textsc{LinPE}. Now, from Lemma~\ref{lem:phase1Length}, under the event $\cG_1$, we have $\tau \geq \frac{864 d^2\sigma^2\log T}{(\dotprod{x^*}{\thta})^2}$ since by our design $p_a=1$ when $p=0$. This yields $12\sqrt{2\frac{d^{2} \sigma^2 \log{T}}{2(\dotprod{x^*}{\thta})^2 \tau}} \leq 1/2$  and thus, invoking Fact~\ref{lem:binomial}, we get
\begin{align*}
 \left(\prod_{t = \tau +1 }^{T} \E [ \dotprod{x_t}{\thta} ]\right)^\frac{1}{T} &\ge \dotprod{x^*}{\thta}^{\frac{T-\tau}{T}} \prod_{j=1}^{\log{T}} \left( 1 - 24 \frac{|\cE_j|}{T} \sqrt{\frac{  d^2\sigma^2 \log T }{(\dotprod{x^*}{\thta})^2T_j'}} \right)\\
 &\geq \dotprod{x^*}{\thta}^{\frac{T-\tau}{T}}   \prod_{j=1}^{\log{T}} \left( 1 - 24 \frac{T_j' + \frac{d(d+1)}{2}}{T} \sqrt{\frac{  d^2\sigma^2 \log T }{(\dotprod{x^*}{\thta})^2T_j'}} \right) \tag{$|\cE_j| \leq T'_j + \frac{d(d+1)}{2}$} \\
&\geq \dotprod{x^*}{\thta}^{\frac{T-\tau}{T}}  \prod_{j=1}^{\log{T}} \left( 1 - 36 \frac{\sqrt{T_j'}}{T} \sqrt{\frac{  d^2\sigma^2 \log T }{(\dotprod{x^*}{\thta})^2}} \right) \tag{assuming $T_j' \geq d(d+1)$}\\
&\geq \dotprod{x^*}{\thta}^{\frac{T-\tau}{T}} \left( 1 - \frac{36}{T} \sqrt{\frac{  d^2\sigma^2 \log T }{(\dotprod{x^*}{\thta})^2}}  \sum_{j=1}^{\log{T}} \sqrt{T_j'}  \right) \\
&\geq \dotprod{x^*}{\thta}^{\frac{T-\tau}{T}} \left( 1 - \frac{36}{T}  \sqrt{\frac{  d^2\sigma^2 \log T }{(\dotprod{x^*}{\thta})^2}} \sqrt{T \log{T}}  \right) \tag{using Cauchy-Schwarz inequality} \\
& \geq \dotprod{x^*}{\thta}^{\frac{T-\tau}{T}} \left(1 - 36   \sqrt{\frac{  d^2\sigma^2  }{T(\dotprod{x^*}{\thta})^2}} \log{T}\right).
\end{align*}
The proof concludes by noting that the above holds under the event $\cG'$.\footnote{Note that the assumption $T_j' \geq d(d+1)$ can be guaranteed by initializing $T' = \max\{\frac{2}{3}\tau, d(d+1)\}$ in the subroutine \textsc{LinPE}.  This implicitly assumes that $T > d(d+1)$, which is reasonable given that even the minimax optimal rates of $O(\frac{d}{\sqrt{T}})$ are vacuous for $T = \Omega(d^2)$. The same assumption is also used in \citet{sawarni2023nash}.}
\end{proof}

\subsection{Nash Regret of \textsc{FairLinBandit} (Proof of Theorem~\ref{theorem:improvedNashRegret})} \label{app:proofnash}

We will give the proof for \textsc{LinUCB}. The proof for \textsc{LinPE} is identical (with only minor changes in constants).
Without loss of generality, we assume that $\dotprod{x^*}{\thta} \geq \sqrt{\frac{d^2\sigma^2  }{T }}\log{T}$. Otherwise, the Nash regret bound holds trivially. First note that under Assumption~\ref{ass:all}, the expected reward $\dotprod{x}{\thta} \ge 0$ for all $x$. Hence $\E [ \dotprod{x_t}{\thta}] \ge \E [ \dotprod{x_t}{\thta} \mid \cG ] 
\cdot \prob \{ \cG \}$ for all rounds $t$. Moreover, from Lemma~\ref{lem:goodevent_G1} and \ref{lem:goodevent_G2_UCB}, we have $\Pr\{\cG_2\} \ge 1-4/T$.

Now, from Lemma~\ref{lem:nwphase1} and \ref{lem:nwphase2_UCB}, we obtain
    \begin{align*}
\left(\prod_{t = 1}^{T} \E [ \dotprod{x_t}{\thta}]\right)^\frac{1}{T} &\geq   \biggl(\prod_{t = 1}^{\tau} \E [ \dotprod{x_t}{\thta}\biggr)^{\frac{1}{T}} \biggl(\prod_{t = \tau+1}^{T} \E [ \dotprod{x_t}{\thta} \mid \cG ] 
\cdot \prob \{ \cG \}\biggr) ^\frac{1}{T} \\
&\geq   \dotprod{x^*}{\thta} \left( 1 - \frac{\log(2(d+1)) \tau}{T}  \right) \left(  1 - 16   \sqrt{\frac{  d^2\sigma^2 }{T(\dotprod{x^*}{\thta})^2}} \log T \right)\prob \{ \cG \} \\ 
&\geq \dotprod{x^*}{\thta} \left( 1 - \frac{\log(2(d+1)) \tau}{T}  - 16   \sqrt{\frac{  d^2\sigma^2 }{T(\dotprod{x^*}{\thta})^2}} \log T \right)\prob \{ \cG \} \\
&\geq  \dotprod{x^*}{\thta} \left( 1 - \frac{\log(2(d+1)) \tau}{T}  - 16   \sqrt{\frac{  d^2\sigma^2 }{T(\dotprod{x^*}{\thta})^2}} \log T \right)\left(1 - \frac{4}{T} \right) \\
& \geq \dotprod{x^*}{\thta} \left( 1 - \frac{\log(2(d+1)) d^2\sigma^2\log T}{T(\dotprod{x^*}{\thta})^2}  - 16   \sqrt{\frac{  d^2\sigma^2 }{T(\dotprod{x^*}{\thta})^2}} \log{(T)}  - \frac{4}{T} \right) \\
& \geq   \dotprod{x^*} {\thta}  - 16\sqrt{\frac{  d^2\sigma^2 }{T}} \log T -  \frac{\log(2(d+1)) d^2\sigma^2\log T}{T\dotprod{x^*}{\thta}}-\frac{4\dotprod{x^*}{\thta}}{T}.
\end{align*}
Hence, the Nash Regret can be bounded as 
\begin{align*}
        \NRg_T =  \dotprod{x^*} {\thta} - \left( \prod_{t=1}^T  \E [ \dotprod{x_t}{\thta} ]  \right)^{1/T}
        &\leq  16   \sqrt{\frac{  d^2\sigma^2 }{T}} \log T +  \frac{\log(2(d+1)) d^2\sigma^2\log T}{T\dotprod{x^*}{\thta}}+\frac{4\dotprod{x^*}{\thta}}{T} \\
        &\leq 16\sqrt{\frac{  d^2\sigma^2 }{T}} \log T+\frac{\log(2(d+1)) \sqrt{d^2\sigma^2}}{\sqrt{T}}+\frac{4}{T}~,
\end{align*}
where the last inequality holds since $\dotprod{x^*}{\thta}\geq \sqrt{\frac{d^2\sigma^2 }{T }}\log T$ and $\dotprod{x^*}{\thta} \le 1$. Note that both these conditions can be simultaneously satisfied for a moderately large $T$ (depending on $\sigma, d$, e.g., $T \ge \widetilde \Omega(\sigma^2d^2)$). Thus, the Nash Regret satisfies
    \begin{align*}
        \NRg_T &= O\left(\frac{\sigma d\log T }{\sqrt{T}}\right)~.
    \end{align*}

\section{$p-$mean Regret Analysis for \textsc{FairLinBandit}}\label{sec:appPmean}
We first state some standard results that will help us during the analysis.
\begin{lemma}[Generalized mean inequality]\label{lem:genMeanIneq}
Let $x_1,\dots,x_n \ge 0$ and for $r\in\mathbb{R}$ define
\[
M_r =
\begin{cases}
\left(\tfrac{1}{n}\sum_{i=1}^n x_i^r\right)^{1/r}, & r\neq 0, \\[6pt]
\left(\prod_{i=1}^n x_i\right)^{1/n}, & r=0.
\end{cases}
\]
Then $M_r$ is strictly increasing in $r$; in particular, if $a<b$ then $M_a<M_b$.
\end{lemma}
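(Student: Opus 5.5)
The plan is to derive the power-mean inequality from the weighted Jensen inequality applied to powers, treating the cases $0<a<b$, $a<b<0$, $a<0<b$, and the cases in which one exponent is $0$ separately. Since the paper only uses the monotonicity (in the direction $M_a\le M_b$), the main content is the non-strict inequality. Strictness needs the $x_i$ not all equal (and, for $r\le 0$, positive), and I would note this caveat explicitly, since $M_a=M_b$ when all $x_i$ coincide.

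\textbf{Step 1: the case $0<a<b$.} Set $y_i=x_i^a$ and $\varphi(y)=y^{b/a}$. Because $b/a>1$, $\varphi$ is convex on $[0,\infty)$, strictly so on $(0,\infty)$. Jensen's inequality with uniform weights gives
\begin{align*}
\Bigl(\tfrac1n\sum_i x_i^a\Bigr)^{b/a}\le \tfrac1n\sum_i x_i^b .
\end{align*}
Raising both sides to the power $1/b>0$ yields $M_a\le M_b$. Equality in Jensen forces all $y_i$ equal, which gives strictness when the $x_i$ are not all equal.

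\textbf{Step 2: the case $a<b<0$.} Assume all $x_i>0$; otherwise take the convention $M_r=0$ for $r<0$, using the limit. Then apply Step 1 to the reciprocals $1/x_i$ with exponents $-b<-a$. Since $M_r(x)=1/M_{-r}(1/x)$, inverting reverses the inequality, so $M_a\le M_b$.

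\textbf{Step 3: comparisons with $r=0$.} For $r>0$, apply Jensen to the concave function $\log$:
\begin{align*}
\tfrac1n\sum_i \log x_i^r\le \log\Bigl(\tfrac1n\sum_i x_i^r\Bigr).
\end{align*}
Dividing by $r>0$ gives $M_0\le M_r$. For $r<0$, the same argument applied to $1/x_i$ gives $M_r\le M_0$. The case $a<0<b$ then follows by chaining through $M_0$. If some $x_i=0$, then $M_0=0$, and the inequalities involving $M_0$ and negative exponents hold trivially.

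I expect the only delicate point to be the boundary behavior: zero entries with negative exponents, and the precise meaning of \emph{strictly}. I would handle this by stating the conventions and proving $\le$ in general, with strict inequality whenever the $x_i>0$ are not all equal. This is all the later $p$-mean analysis needs, namely that the $p$-mean regret is at most the Nash regret for $p\ge 0$.
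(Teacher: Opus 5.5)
The paper gives no proof of this lemma; it quotes it as a standard fact, so there is no authors' argument to compare against. Your proof is the standard one and is correct. It handles $0<a<b$ by convexity of $y\mapsto y^{b/a}$, negative exponents by the reciprocal identity $M_r(x)=1/M_{-r}(1/x)$, and comparisons with $M_0$ by concavity of $\log$, chaining through $M_0$ when $a<0<b$.

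Your caveat is a genuine correction, because the lemma as literally stated is false. If all $x_i$ coincide, then $M_a=M_b$. With a zero entry, $M_r$ is undefined for $r<0$; under your limiting convention it is $0$ for every $r\le 0$, so strictness fails there as well.

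The paper uses the lemma only to bound the $p$-mean regret by the Nash regret for $p\ge 0$. That step needs only the non-strict comparison $M_0\le M_p$ with $p>0$. Your Step~3 supplies this even when some $\mathbb{E}[\langle x_t,\theta^*\rangle]$ vanish.
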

\begin{restatable}[\citet{sarkar2025revisiting}]{fact}{LemmaNumericSecond}
\label{lem:binomial_second}
For all $q \geq 1$ and reals $x \in \left[0, \frac{1}{2q}\right]$, we have $(1-x)^{-q} \leq  1 + 2qx$. 
\end{restatable}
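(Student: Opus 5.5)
The plan is to use convexity of $f(x) = (1-x)^{-q}$ on $[0,1)$: on the interval $[0, \frac{1}{2q}]$ the function lies below the chord joining its endpoints. Since $f(0)=1$, the chord inequality gives, for every $x \in [0,\frac{1}{2q}]$,
\begin{align*}
(1-x)^{-q} \le 1 + 2qx\left(f\!\left(\tfrac{1}{2q}\right) - 1\right).
\end{align*}
It therefore suffices to show $f(\frac{1}{2q}) \le 2$, because then the chord slope $2q\bigl(f(\frac{1}{2q})-1\bigr)$ is at most $2q$, which is exactly the claimed bound $1+2qx$.

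Convexity itself is routine: $f''(x) = q(q+1)(1-x)^{-q-2} > 0$ on $[0,1)$. Since $q \ge 1$, we have $\frac{1}{2q} \le \frac12 < 1$, so the whole interval lies in the domain where $f$ is convex.

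The only nontrivial step is the endpoint estimate $f(\frac{1}{2q}) \le 2$, which is equivalent to
\begin{align*}
\left(1-\tfrac{1}{2q}\right)^{q} \ge \tfrac12 .
\end{align*}
I would get this from Bernoulli's inequality $(1-y)^q \ge 1 - qy$, valid for $q \ge 1$ and $y \in [0,1]$. Taking $y = \frac{1}{2q}$ gives $1 - q\cdot\frac{1}{2q} = \frac12$. Note that this is the one place where the hypothesis $q \ge 1$ is genuinely used: for $q<1$ Bernoulli's inequality reverses.

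Alternatively, one could take logarithms and compare $-q\log(1-x)$ with $\log(1+2qx)$ by differentiation. The chord argument avoids this and is the route I would follow; I do not expect any real obstacle beyond checking the Bernoulli step.
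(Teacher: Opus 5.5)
Your proof is correct and complete. Convexity of $f(x)=(1-x)^{-q}$ on $[0,\tfrac{1}{2q}]$ gives the chord bound $f(x)\le 1+2qx\,\bigl(f(\tfrac{1}{2q})-1\bigr)$. Bernoulli's inequality, valid because $q\ge 1$, then gives $(1-\tfrac{1}{2q})^{q}\ge\tfrac12$, i.e.\ $f(\tfrac{1}{2q})\le 2$, which finishes the argument.

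The paper gives no proof of this fact; it simply cites \citet{sarkar2025revisiting}. There is nothing in the paper to compare against, and your chord-plus-Bernoulli argument is a valid self-contained justification.

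Your approach is also the right one here. The gap $g(x)=1+2qx-(1-x)^{-q}$ is not monotone on $[0,\tfrac{1}{2q}]$. For $q=1$, $g'(x)=2-(1-x)^{-2}$ is negative near $x=\tfrac12$, so a naive argument that $g'\ge 0$ would fail. Your logarithmic alternative has the same feature: $\log(1+2qx)+q\log(1-x)$ increases only up to $x=\tfrac{1}{2(q+1)}$. What is actually needed is concavity of $g$ plus an endpoint check, which is exactly what you do.

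At $q=1$, $x=\tfrac12$ the inequality holds with equality. So the constant $2$ cannot be improved, and your endpoint estimate loses nothing.
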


\begin{restatable}[\citet{sarkar2025revisiting}]{fact}{LemmaNumericThird}
\label{lem:binomial_third}
For all $0 < q \leq 1$ and reals $x \in \left[0, \frac{1}{2}\right]$, we have $(1-x)^{-q} \leq  1 + 2qx$. 
\end{restatable}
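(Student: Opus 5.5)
The plan is to reduce the claim to the elementary Bernoulli-type inequality for exponents in $(0,1]$, plus the range restriction $x \le 1/2$. I will rewrite the left-hand side as a power of a quantity of the form $1+u$ with $u\ge 0$, namely
\begin{align*}
(1-x)^{-q} = \left(\frac{1}{1-x}\right)^{q} = \left(1 + \frac{x}{1-x}\right)^{q},
\end{align*}
which is valid since $x \in [0,1/2]$ gives $1-x \ge 1/2 > 0$. Set $u := x/(1-x) \ge 0$.

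The first key step is to show $(1+u)^q \le 1 + qu$ for all $u \ge 0$ and $0 < q \le 1$. I will obtain this from concavity of $s \mapsto s^q$ on $(0,\infty)$ when $q \in (0,1]$. Its tangent line at $s=1$, namely $1 + q(s-1)$, lies above the graph, and evaluating at $s = 1+u$ gives the claim. Equivalently, $h(u) = 1+qu-(1+u)^q$ satisfies $h(0)=0$ and $h'(u) = q\bigl(1-(1+u)^{q-1}\bigr) \ge 0$, because $q-1 \le 0$ and $1+u \ge 1$.

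The second step uses $x \le 1/2$: since $1-x \ge 1/2$, we have $u = x/(1-x) \le 2x$. Combining the two steps gives
\begin{align*}
(1-x)^{-q} \le 1 + q\,\frac{x}{1-x} \le 1 + 2qx,
\end{align*}
where the final inequality uses $q > 0$.

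There is no real obstacle here. The only point needing care is the direction of the Bernoulli inequality: it reverses for $q \ge 1$, which is why Fact~\ref{lem:binomial_second} needs the stronger range restriction $x \le 1/(2q)$. For $q \in (0,1]$ the concavity argument applies directly, and the weaker restriction $x \le 1/2$ is used only to control the denominator $1-x$.
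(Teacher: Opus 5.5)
Your proof is correct. The rewriting $(1-x)^{-q}=(1+u)^q$ with $u=x/(1-x)\ge 0$, the concavity (Bernoulli) bound $(1+u)^q\le 1+qu$ for $q\in(0,1]$, and the estimate $u\le 2x$ on $[0,1/2]$ chain together cleanly, including at both endpoints. The paper gives no argument of its own for this fact and simply imports it from \citet{sarkar2025revisiting}, so your derivation serves as a valid self-contained justification.
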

\subsection{Regret Bound for $p \ge 0$ (Proof of Theorem~\ref{thm:negative-regret})}
\begin{lemma}[p-mean regret of \textsc{FairLinBandit} for $p\geq 0$]
\label{lem:pgtzerocase}
    Consider a stochastic linear bandit problem with an infinite set of arms $\mathcal{X} \subset \mathbb{R}^d$ having $\sigma-$sub-gaussian rewards, time horizon $T$, and fairness parameter $p\in \mathbb{R}$. Then, the $p$-mean regret of \textsc{FairLinBandit} in the regime $p\geq0$ satisfies 
    \begin{align*}
         \mathrm{R}^{p}_T \leq O\left(\frac{\sigma d\log T}{T}\right)~.
    \end{align*}
\end{lemma}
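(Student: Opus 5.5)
The plan is to reduce the $p\ge 0$ case to the Nash regret analysis through the power-mean monotonicity in Lemma~\ref{lem:genMeanIneq}. Set $a_t := \E[\dotprod{x_t}{\thta}]$ for $t\in[T]$. By Assumption~\ref{ass:all} each $a_t\ge 0$, so Lemma~\ref{lem:genMeanIneq} applies. For every $p\ge 0$ it gives
\[
\Bigl(\tfrac{1}{T}\textstyle\sum_{t=1}^T a_t^p\Bigr)^{1/p} \;\ge\; \Bigl(\textstyle\prod_{t=1}^T a_t\Bigr)^{1/T},
\]
with equality at $p=0$ by the convention $M_0$. Subtracting both sides from $\dotprod{x^*}{\thta}$ gives $\mathrm{R}^p_T \le \NRg_T$ for all $p\ge 0$. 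Hence any upper bound on the Nash regret of \textsc{FairLinBandit} is automatically a bound on $\mathrm{R}^p_T$ in this regime.

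One algorithmic point must be checked before reusing the Nash analysis. In Algorithm~\ref{algo:linwelfare}, $p\ge 0\ge -1$ is normalised to $p=1$, so the Phase~I stopping rule is the same as the one run for Nash regret. Thus Lemma~\ref{lem:phase1Length} holds with $p_a=1$, and the Phase~I and Phase~II welfare bounds (Lemma~\ref{lem:nwphase1} together with Lemma~\ref{lem:nwphase2_UCB}, or Lemma~\ref{lem:nwphase2_PE} for \textsc{LinPE}) apply unchanged. The execution is identical for every $p\ge 0$, so I would invoke the proof of Theorem~\ref{theorem:improvedNashRegret} verbatim. That proof conditions on $\cG=\cG_1\cap\cG_2$, which has probability at least $1-4/T$. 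It then multiplies the two phase-wise lower bounds on the geometric mean and separately handles the trivial case $\dotprod{x^*}{\thta}<\sqrt{d^2\sigma^2/T}\,\log T$.

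The main obstacle is the rate in the display, whose normalisation is $1/T$ rather than $1/\sqrt{T}$. The dominant term in the Nash chain is $16\sqrt{d^2\sigma^2/T}\,\log T$. It arises from Cauchy--Schwarz on $\sum_{t>\tau}\|x_t\|_{\overline V_{t-1}^{-1}}\le\sqrt{4Td\log T}$ (and from $\sum_j\sqrt{T'_j}\le\sqrt{T\log T}$ for \textsc{LinPE}), which naturally produces $1/\sqrt{T}$. I would first try to sharpen this term. However, the paper's lower-bound remark rules out doing so uniformly: the power mean is at most the arithmetic mean for $p\le 1$, so $\mathrm{R}^p_T\ge \ARg_T=\Omega(d/\sqrt{T})$ on worst-case instances \citep{dani2008stochastic}.

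The reduction therefore establishes $\mathrm{R}^p_T\le\NRg_T=O(\sigma d\log T/\sqrt{T})$, which is also the $p\ge 0$ case of Theorem~\ref{thm:negative-regret}. I do not see a way to reach the displayed $O(\sigma d\log T/T)$, which is stronger than that lower bound permits, without extra instance-dependent assumptions that the lemma does not contain.
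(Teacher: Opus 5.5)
Your reduction is the paper's own argument: apply Lemma~\ref{lem:genMeanIneq} to get that $\mathrm{R}^p_T$ is at most the Nash regret for every $p\ge 0$, then reuse the Nash-regret chain from the proof of Theorem~\ref{theorem:improvedNashRegret}. Your check that the $p\mapsto 1$ normalisation makes the run identical to the Nash case is a point the paper leaves implicit. You are also right that the $1/T$ in the display is a typo: the paper's own proof ends with $O(\sigma d\log T/\sqrt{T})$, which matches the $p\ge 0$ case of Theorem~\ref{thm:negative-regret} and the $\Omega(d/\sqrt{T})$ lower bound for $p\in[0,1]$.
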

\begin{proof}
    Invoking the generalized mean inequality (Lemma \ref{lem:genMeanIneq} for $a=0$ and $b=p>0$, we have
\begin{align*}
    \left(\prod_{t=1}^{T} \E \left[ \langle x_t, \thta\rangle  \right] \right)^\frac{1}{T}\leq\left(\frac{\sum_{t=1}^T(\E[\langle x_t, \thta\rangle ])^p}{T}\right)^\frac{1}{p}~.
\end{align*}
Thus, we get the following bound on the p-mean regret 
\begin{align*}
    \mathrm{R}^{p}_T & \triangleq \dotprod{x^*}{\thta} - \left(\frac{\sum_{t=1}^T(\E[\langle x_t, \thta\rangle ])^p}{T}\right)^\frac{1}{p} \leq \dotprod{x^*}{\thta} - \left(\prod_{t=1}^{T} \E \left[ \langle x_t, \thta\rangle  \right] \right)^\frac{1}{T} \\
    & \leq 36\sqrt{\frac{  d^2\sigma^2 }{T}} \log{(T)}+\frac{\log(2(d+1)) \sqrt{d^2\sigma^2}}{12\sqrt{T}}+\frac{6\dotprod{x^*}{\thta}}{T}\leq O\left({\frac{\sigma d\log T}{\sqrt{T}}}\right) \tag{using $\dotprod{x^*}{\thta}\leq 1$}~,
\end{align*}
which completes the proof.
\end{proof}
\subsection{Regret Bound for $p < 0$ (Proof of Theorem~\ref{thm:negative-regret})}
In this case, we set $q = -p$ for notational convenience, so that we can use $|p_a| = |p| = q$ whenever required. Our objective then becomes analysing the following quantity (defined as the $q-$regret):
\begin{align*}
    \mathrm{R}^{q}_T \triangleq \dotprod{x^*}{\thta} - \left(\frac{T}{\sum_{t=1}^T \frac{1}{\left(\E[\langle x_t, \thta\rangle ]\right)^q}}\right)^\frac{1}{q}~.
\end{align*}
Next, note that the algorithmic structure differs in Phase I and Phase II. Thus, to analyse these phases separately, we define
\begin{align}
x \triangleq \frac{T}{\sum_{t=1}^{\tau} \frac{1}{\E[\langle x_t, \thta\rangle ]^q}} &\mbox{ and } y \triangleq \frac{T}{\sum_{t=\tau+1}^{T} \frac{1}{\E[\langle x_t, \thta\rangle ]^q}} \label{eq:xAndy},
\end{align}
where $x$ and $y$ pertain to the rewards from Phase I and Phase II of \textsc{FairLinBandit}, respectively, so that we have
\begin{equation}\label{eqn:npreg_decomp}
	\mathrm{R}^{q}_T = \dotprod{x^*}{\thta} - \left(\frac{1}{\frac{1}{x} + \frac{1}{y}}\right)^{1/q}.
\end{equation}
We will first focus on bounding $\frac{1}{x}$. The following lemma provides a bound for the same.
\begin{lemma}\label{lem:1byx}
    Under the event $\cal G$, $x$ satisfies
    $\frac{1}{x} \leq \frac{\tau}{T}\left(\frac{2(d+1)}{\dotprod{x^*}{\thta}}\right)^q$.
\end{lemma}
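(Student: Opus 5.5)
The plan is to bound each Phase I term of $1/x$ using the per-round lower bound on expected reward, and then sum over the $\tau$ rounds. By definition in \eqref{eq:xAndy},
\[
\frac{1}{x} = \frac{1}{T}\sum_{t=1}^{\tau} \frac{1}{\E[\langle x_t,\thta\rangle]^q},
\]
so it suffices to show $\E[\langle x_t,\thta\rangle] \ge \dotprod{x^*}{\thta}/(2(d+1))$ for every $t\le\tau$. Since $q>0$, the map $u\mapsto u^{-q}$ is decreasing on $(0,\infty)$, so this lower bound gives $\E[\langle x_t,\thta\rangle]^{-q} \le \bigl(2(d+1)/\dotprod{x^*}{\thta}\bigr)^q$.

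The per-round bound comes from the structure of \textsc{PullArms} (Algorithm~\ref{alg:Part1}). In each round a fair coin is flipped independently of the history, and with probability $1/2$ the arm is drawn from $\rho$. Conditioning on the coin, we get
\[
\E[\langle x_t,\thta\rangle] \ge \tfrac12\,\E_{x\sim\rho}[\langle x,\thta\rangle] + \tfrac12\,\E[\langle x_t,\thta\rangle \mid \texttt{flag}=\textsc{D}].
\]
The second term is nonnegative because every arm has nonnegative expected reward (Assumption~\ref{ass:all}). Lemma~\ref{lem:phaseI_mult_bound} bounds the first term below by $\dotprod{x^*}{\thta}/(d+1)$, which yields the claimed bound. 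This is exactly the step already used in Lemma~\ref{lem:nwphase1}, so I would cite that argument directly.

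Summing the $\tau$ identical bounds gives $\frac{1}{x}\le \frac{\tau}{T}\bigl(2(d+1)/\dotprod{x^*}{\thta}\bigr)^q$.

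The main subtlety is that $\tau$ is random, since the stopping rule is data-adaptive, while the expectations are unconditional. I would handle this by reading $\tau$ as a deterministic upper bound valid under $\cG$: Lemma~\ref{lem:phase1Length} gives $\tau\le 3072\,dS$ almost surely on $\cG_1$. For any $t$ up to that bound, a round that actually lies in Phase I satisfies the per-round bound above. I would also note that the coin flip in round $t$ is independent of whether round $t$ lies in Phase I, so the bound on $\E[\langle x_t,\thta\rangle]$ holds regardless of the stopping time.

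The event $\cG$ enters only to guarantee that $\tau$ is controlled by Lemma~\ref{lem:phase1Length}. The inequality itself needs only $\dotprod{x^*}{\thta}>0$, which is assumed so that the $p$-mean is well defined. This is also what makes the later substitution of $\tau = O(q^2 d^2\sigma^2\log T/\dotprod{x^*}{\thta}^2)$ meaningful.
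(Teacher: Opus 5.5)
Your argument is the paper's: the per-round bound $\E[\dotprod{x_t}{\thta}]\ge \dotprod{x^*}{\thta}/(2(d+1))$ from the fair coin and Lemma~\ref{lem:phaseI_mult_bound} (as in Lemma~\ref{lem:nwphase1}), inverted using $q>0$ and summed over the $\tau$ Phase~I rounds; your unconditional derivation is in fact cleaner than the paper's, which passes through $\E[\dotprod{x_t}{\thta}]\ge\E[\dotprod{x_t}{\thta}\mid\cG]$, a step that does not hold in general. One caution on your closing remark about the random $\tau$ (which the paper sidesteps by treating $\tau$ as fixed): the fresh coin only gives $\E[\dotprod{x_t}{\thta}\mid t\le\tau]\ge\dotprod{x^*}{\thta}/(2(d+1))$, so once you replace $\tau$ by the deterministic bound $3072\,dS$, the rounds $t\le 3072\,dS$ that fall in Phase~II are \emph{not} covered ``regardless of the stopping time'' (no coin is flipped there), and for them you would need the Phase~II near-optimality on $\cG$, which recovers the per-round bound only up to a factor $1-4/T$.
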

\begin{proof}
    From Lemma \ref{lem:phaseI_mult_bound}, we have
\begin{align*}
\E[\dotprod{x_t}{\thta}] \geq \E[\dotprod{x_t}{\thta} \mid \cG] \geq \frac{\dotprod{x^*}{\thta}}{2(d+1)} \Leftrightarrow \frac{1}{(\E[\dotprod{x_t}{\thta}])^q} \leq \left(\frac{2(d+1)}{\dotprod{x^*}{\thta}}\right)^q.
\end{align*}
Hence, we get
\begin{equation}\label{eqn:npreg_x_inv_bound}
	\frac{1}{x} \leq \frac{\tau}{T}\left(\frac{2(d+1)}{\dotprod{x^*}{\thta}}\right)^q~,
\end{equation}
which completes the proof.
\end{proof}
Next, we focus on Phase II to bound $\frac{1}{y}$. 
Note that $\E[\dotprod{x_t}{\thta}] \geq \Pr\{\cG\}\E[\dotprod{x_t}{\thta}\mid \cG]$. 
Hence,
\begin{align}\label{eqn:npreg_y_bound}
	y & \geq \frac{T}{\sum_{t=\tau+1}^{T} \frac{1}{\Pr\{\cG\}\E[\dotprod{x_t}{\thta}\mid \cG] }} =\frac{T(\Pr\{\cG\})^q}{\sum_{t=\tau+1}^{T} \frac{1}{(\E[\dotprod{x_t}{\thta}|\cG] )^q}}~.
\end{align}
Now, we know that by Jensen's inequality, $f(z) = z^{-\frac{1}{q}}$ is convex on $\mathbb{R}_{>0}$, for $q > 0$. Utilizing this result and the linearity of expectation, we get
\begin{align}
\frac{1}{y} \leq  \frac{\sum_{t=\tau+1}^{T} \frac{1}{(\E[\dotprod{x_t}{\thta}|\cG] )^q}}{T(\Pr\{\cG\})^q} \leq \frac{\sum_{t=\tau+1}^T \E\left[\frac{1}{\dotprod{x_t}{\thta}^q}\bigg \vert \cG\right]}{T(\Pr\{\cG\})^q} = \frac{\E \left[\sum_{t=\tau+1}^T \frac{1}{\dotprod{x_t}{\thta}^q} \bigg\vert \cG\right]}{T(\Pr\{\cG\})^q}~. \label{ineq:1ByY}
\end{align}
Now, note that the two algorithms \textsc{FairLinPE} and \textsc{FairLinUCB} differ in Phase II. Hence, we analyze the quantity $y$ separately for the two algorithms. The following two lemmas establish the bounds on $\frac{1}{y}$.
\begin{lemma}\label{lem:ucb1byy}
    Under the event $\cal G$, if the algorithm \textsc{FairLinBandit} calls \textsc{LinUCB} in Phase II, then $y$ satisfies
    \begin{align*}
    \frac{1}{y} \leq \frac{1}{\left(\dotprod{x^*}{\thta}^q - {16q\log T\dotprod{x^*}{\thta}^{q-1}}  \sqrt{{  \frac{d^2\sigma^2 }{T} }}\right)(\Pr\{\cG\})^q}~.
    \end{align*}
\end{lemma}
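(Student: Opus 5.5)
Starting from \eqref{ineq:1ByY}, the task reduces to upper bounding $\E\bigl[\sum_{t=\tau+1}^T \dotprod{x_t}{\thta}^{-q} \,\big|\, \cG\bigr]$. Everything inside this expectation happens on $\cG$, so the pathwise instantaneous-regret bound applies at every round. Lemma~\ref{lem:instantaneous-regret} and Lemma~\ref{lem:norm-bound-ucb} give
\[
\dotprod{x_t}{\thta} \ge \dotprod{x^*}{\thta}\,(1-\epsilon_t), \qquad \epsilon_t := \frac{2\beta_{t-1}\|x_t\|_{\overline V_{t-1}^{-1}}}{\dotprod{x^*}{\thta}}
\]
for every $t\in(\tau,T]$. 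We also use $\beta_{t-1}\le\beta_T\le\sqrt{4d\sigma^2\log T}$ (large $T$, $\alpha=O(1)$) and $\|x_t\|_{\overline V_{t-1}^{-1}}\le\sqrt{3d/\tau}$. The plan is to linearize $(1-\epsilon_t)^{-q}$ and then sum the confidence widths via the elliptical potential lemma, just as in the proof of Lemma~\ref{lem:nwphase2_UCB}.

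\textbf{Step 1: uniform smallness of $\epsilon_t$.} Combining the bounds above gives $\epsilon_t \le \sqrt{48 d^2\sigma^2\log T}/(\sqrt{\tau}\,\dotprod{x^*}{\thta})$. Lemma~\ref{lem:phase1Length} gives $\tau\ge 864\,p_a^2 d^2\sigma^2\log T/\dotprod{x^*}{\thta}^2$. For $q\le 1$ we have $p_a=1$, so $\epsilon_t\le \sqrt{48/864}<1/2$ and Fact~\ref{lem:binomial_third} applies. For $q>1$ we have $|p_a|=q$, so $\epsilon_t\le \frac1q\sqrt{48/864}\le 1/(2q)$ and Fact~\ref{lem:binomial_second} applies. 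This is the reason the termination rule carries the $p^2$ factor. In both cases
\[
\dotprod{x_t}{\thta}^{-q}\le \dotprod{x^*}{\thta}^{-q}(1+2q\epsilon_t).
\]

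\textbf{Step 2: summing.} Summing over $t\in(\tau,T]$ gives
\[
\sum_{t}\dotprod{x_t}{\thta}^{-q} \le \dotprod{x^*}{\thta}^{-q}\Bigl(T + \frac{4q\beta_T}{\dotprod{x^*}{\thta}}\sum_{t}\|x_t\|_{\overline V_{t-1}^{-1}}\Bigr).
\]
By Cauchy--Schwarz and Lemma~\ref{lem:elliptical-potential}, $\sum_t\|x_t\|_{\overline V_{t-1}^{-1}}\le\sqrt{4Td\log T}$, exactly as before. Hence the correction term is at most
\[
\frac{4q\sqrt{4d\sigma^2\log T}\sqrt{4Td\log T}}{\dotprod{x^*}{\thta}} = 16\,q\,T\log T\,\frac{1}{\dotprod{x^*}{\thta}}\sqrt{\frac{d^2\sigma^2}{T}}.
\]
Dividing by $T(\Pr\{\cG\})^q$ then yields
\[
\frac1y \le \frac{\dotprod{x^*}{\thta}^{-q}(1+\eta)}{(\Pr\{\cG\})^q}, \qquad \eta:= \frac{16q\log T}{\dotprod{x^*}{\thta}}\sqrt{\frac{d^2\sigma^2}{T}}.
\]

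\textbf{Step 3: reshaping into the stated form.} The statement is written as $1/\bigl(\dotprod{x^*}{\thta}^q(1-\eta)(\Pr\{\cG\})^q\bigr)$, so it suffices that $1+\eta\le 1/(1-\eta)$. This holds whenever $0\le\eta<1$, since $(1+\eta)(1-\eta)=1-\eta^2\le1$. We need $\eta<1$ so that the denominator is positive. This can be assumed without loss of generality, as in the proof of Theorem~\ref{theorem:improvedNashRegret}: when $\dotprod{x^*}{\thta}$ is below $16q\sqrt{d^2\sigma^2/T}\log T$, the regret bound is trivial. Expanding, $\dotprod{x^*}{\thta}^q(1-\eta)=\dotprod{x^*}{\thta}^q-16q\log T\,\dotprod{x^*}{\thta}^{q-1}\sqrt{d^2\sigma^2/T}$, which is exactly the claim.

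\textbf{Main obstacle.} The delicate step is Step 1. The linearization needs $\epsilon_t\le 1/(2q)$ uniformly for $q>1$, and this must come out of the $|p_a|^2$ scaling in Lemma~\ref{lem:phase1Length}. The case split between $q\le1$ (where $p_a=1$) and $q>1$ therefore has to be carried through carefully, with the constants $48/864$ checked in each case. A secondary point is that the expectation is conditional on $\cG$. Since the pathwise bounds hold on $\cG$, the elliptical-potential sum can be bounded deterministically inside the conditional expectation, and this subtlety causes no real difficulty.
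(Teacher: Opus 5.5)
Your proposal is correct and follows the same route as the paper's proof. The shared steps are: the instantaneous-regret lower bound on $\langle x_t,\theta^*\rangle$; the same case split $q\le 1$ versus $q>1$ through $p_a$ and the constant $\sqrt{48/864}$, used to invoke Facts~\ref{lem:binomial_third} and~\ref{lem:binomial_second}; the Cauchy--Schwarz plus elliptical-potential bound $\sqrt{4Td\log T}$; and the final step $1+\eta\le 1/(1-\eta)$. You also make explicit that $\eta<1$ is needed for the denominator to be positive, which the paper glosses over. That condition is guaranteed by the lower bound on $\langle x^*,\theta^*\rangle$ assumed in Lemma~\ref{lem:pltzerocase}.
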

\begin{proof}
    From Inequality \eqref{eqn:npreg_y_bound}, we have
\begin{align*}
   \frac{1}{y} \leq \frac{\E \left[\sum_{t=\tau+1}^T \frac{1}{\left(\langle x_t, \thta\rangle \right)^q} \bigg\vert\mathcal{G}\right]}{T(\Pr\{\mathcal{G}\})^q} \leq \frac{\E \left[\sum_{t=\tau+1}^T {\left( \dotprod{x^*}{\thta} - 2\beta_{t-1}  \|x_t\|_{\overline V_{t-1}^{-1}}\rangle \right)^{-q}} \bigg\vert\mathcal{G}\right]}{T(\Pr\{\mathcal{G}\})^q}~. \tag{via Lemma \ref{lem:instantaneous-regret}}
\end{align*}
Further, by the arguments in Lemma \ref{lem:norm-bound-ucb} and Lemma \ref{lem:nwphase2_UCB}, we have $\|x_t\|_{\overline V_{t-1}^{-1}} \leq \sqrt{\frac{3d}{\tau}}$ and $\beta_{t-1}\leq \beta_T\leq \sqrt{4d\sigma^2\log T}$. Hence, we get $2\beta_{t-1}\|x_t\|_{\overline V_{t-1}^{-1}}\leq \sqrt{\frac{48d^2\sigma^2\log T}{\tau}}$. Now, we split our analysis into two cases: when $p < -1$ and $-1\leq p< 0$ respectively.

\textit{Case 1: $p < -1$:} In this case, $q = |p_a| =|p|$.
Now, let $u=\sqrt{\frac{48d^2\sigma^2\log T}{\tau}}$. Then we have
\begin{align*}
u&=\sqrt{\frac{48d^2\sigma^2\log T}{\tau}}\leq  \sqrt{\frac{48d^2\sigma^2\log T\dotprod{x^*}{\thta}^2}{864p^2d^2\sigma^2\log T}} \leq \frac{\dotprod{x^*}{\thta}}{2|p|}~. \tag{$\tau \geq \frac{864p^2d^2\sigma^2\log T}{\dotprod{x^*}{\thta}^2}$}
\end{align*}
Thus, the quantity $x = \frac{1}{\dotprod{x^*}{\thta}}\sqrt{\frac{48d^2\sigma^2\log T}{\tau}} \leq \frac{1}{2|p|} = \frac{1}{2q}$ (as $|p|=q$).

\textit{Case 2: $-1 \leq p < 0$:} For this case, $ p_a=1$. Again, let $u=\sqrt{\frac{48d^2\sigma^2\log T}{\tau}}$. Then we have
\begin{align*}
u&=\sqrt{\frac{48d^2\sigma^2\log T}{\tau}}\leq \sqrt{\frac{48d^2\sigma^2\log T\dotprod{x^*}{\thta}^2}{864d^2\sigma^2\log T}} \leq \frac{\dotprod{x^*}{\thta}}{2} \tag{$\tau \geq \frac{864p^2d^2\sigma^2\log T}{\dotprod{x^*}{\thta}^2}$}~,
\end{align*}
which implies, the quantity $x = \frac{1}{\dotprod{x^*}{\thta}}\sqrt{\frac{48d^2\sigma^2\log T}{\tau}} \leq \frac{1}{2}$ . Thus, we have, by Facts \ref{lem:binomial_second} and  \ref{lem:binomial_third} with $x = \frac{1}{\dotprod{x^*}{\thta}}\sqrt{\frac{48d^2\sigma^2\log T}{\tau}}$, $\forall \ p < 0$,
\begin{align*}
\frac{1}{y} & \leq \frac{ \mathbb{E}\left[\sum_{t=\tau+1}^{T} {\dotprod{x^*}{\thta}^{-q}\left(1 + \frac{4q\beta_{t-1}\|x_t\|_{\overline V_{t-1}^{-1}}}{\dotprod{x^*}{\thta}}  \right)} \right]}{T(\Pr\{\cG\})^q}   \leq \dotprod{x^*}{\thta}^{-q}\frac{T+ \frac{4q\beta_T}{\dotprod{x^*}{\thta}}\mathbb{E}\left[\sum_{t=\tau+1}^{ T} {{\|x_t\|_{\overline V_{t-1}^{-1}}}}\right] }{T(\Pr\{\cG\})^q}~,
\end{align*}
where the last inequality holds using $\sum_{t=\tau+1}^{T}1 \leq T$ and $\beta_{t-1}\leq \beta_T$. Now, using the analysis in Lemma \ref{lem:nwphase2_UCB}, we know that $\mathbb{E}\left[\sum_{t=\tau+1}^{ T} {{\|x_t\|_{\overline V_{t-1}^{-1}}}}\right] \leq \sqrt{4Td\log T}$ via Cauchy-Schwarz inequality. Thus, we have
\begin{align*}
    \frac{1}{y} \leq \dotprod{x^*}{\thta}^{-q}\frac{T+ \frac{4q\beta_T\sqrt{4d\log T} }{\dotprod{x^*}{\thta}}}{T(\Pr\{\cG\})^q} &\leq \frac{\dotprod{x^*}{\thta}^{-q}\left(1+ \frac{16q\log T}{\dotprod{x^*}{\thta}}\sqrt{\frac{d^2\sigma^2}{T}} \right)}{T(\Pr\{\cG\})^q} \tag{Using $\beta_T\leq \sqrt{4d\sigma^2\log T}$}\\
    &\leq \frac{1}{\left(\dotprod{x^*}{\thta}^q - {16q\log T\dotprod{x^*}{\thta}^{q-1}}  \sqrt{{  \frac{d^2\sigma^2 }{T} }}\right)(\Pr\{\cG\})^q}~,
\end{align*}
where the last inequality holds using $(1+x) \leq \frac{1}{1-x} \ \forall \ 0 \le x \le 1$. This completes the proof of the lemma.
\end{proof}
\begin{lemma}
     Under the event $\cal G$, if the algorithm \textsc{FairLinBandit} calls \textsc{LinPE} in Phase II, then $y$ satisfies
     \begin{align*}
         \frac{1}{y} \leq  \frac{ 1}{\left(\dotprod{x^*}{\thta}^q - {48q\log T\dotprod{x^*}{\thta}^{q-1}}  \sqrt{{  \frac{d^2\sigma^2 }{T} }}\right)(\Pr\{\cG\})^q}~.
     \end{align*}
\end{lemma}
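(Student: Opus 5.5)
The plan follows the proof of Lemma~\ref{lem:ucb1byy}. The only change is that the per-round bound coming from the UCB analysis is replaced by the episode-wise near-optimality guarantee of \textsc{LinPE}.

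We start from inequality~\eqref{ineq:1ByY}:
\[
\frac{1}{y}\le \frac{\E\bigl[\sum_{t=\tau+1}^T \dotprod{x_t}{\thta}^{-q}\mid\cG'\bigr]}{T(\Pr\{\cG'\})^q}.
\]
We group the rounds of Phase~II by the episodes $\cE_j$ of \textsc{LinPE}, with $T'_j = 2^{j}\tau/3$. By Lemma~\ref{lem:best_arm_survives_infinite} and Lemma~\ref{lem:near_optimality_phaseII}, under $\cG_2'$ every arm pulled in episode $j$ satisfies $\dotprod{x_t}{\thta}\ge \dotprod{x^*}{\thta}-u_j$, where $u_j = 12\sqrt{d^2\sigma^2\log T/T'_j}$. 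This bound is deterministic given the event, so
\[
\dotprod{x_t}{\thta}^{-q}\le \dotprod{x^*}{\thta}^{-q}\bigl(1-u_j/\dotprod{x^*}{\thta}\bigr)^{-q}.
\]

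To linearize this factor we need $u_j/\dotprod{x^*}{\thta}\le \tfrac{1}{2q}$ when $p<-1$, and $\le \tfrac12$ when $-1\le p<0$. Since $T'_j\ge \tfrac23\tau$, this follows from the Phase~I lower bound $\tau\ge 864\,p_a^2 d^2\sigma^2\log T/\dotprod{x^*}{\thta}^2$ in Lemma~\ref{lem:phase1Length}, exactly as in the two cases of Lemma~\ref{lem:ucb1byy}. This is the step I expect to be the main obstacle, because the constants are tight: $12\sqrt{3/(2\cdot 864)}\approx 0.5$. If the check fails, I would absorb the slack into a larger constant, or shift the first episode by one doubling. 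With the condition in hand, Facts~\ref{lem:binomial_second} and~\ref{lem:binomial_third} give
\[
\dotprod{x_t}{\thta}^{-q}\le \dotprod{x^*}{\thta}^{-q}\bigl(1+2q\,u_j/\dotprod{x^*}{\thta}\bigr).
\]

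Summing over episodes gives
\[
\sum_{t>\tau}\dotprod{x_t}{\thta}^{-q}\le \dotprod{x^*}{\thta}^{-q}\Bigl(T+\frac{24q\sqrt{d^2\sigma^2\log T}}{\dotprod{x^*}{\thta}}\sum_j \frac{|\cE_j|}{\sqrt{T'_j}}\Bigr).
\]
We use $|\cE_j|\le T'_j+d(d+1)/2\le \tfrac32 T'_j$ (assuming $T'_j\ge d(d+1)$, as in the footnote of Lemma~\ref{lem:nwphase2_PE}), so each term $|\cE_j|/\sqrt{T'_j}$ is at most $\tfrac32\sqrt{T'_j}$. Since there are at most $\log T$ episodes and $\sum_j T'_j\le T$, Cauchy--Schwarz gives $\sum_j\sqrt{T'_j}\le\sqrt{T\log T}$. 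The correction term is therefore at most
\[
\frac{36q\,T\log T}{\dotprod{x^*}{\thta}}\sqrt{\frac{d^2\sigma^2}{T}}\le \frac{48q\,T\log T}{\dotprod{x^*}{\thta}}\sqrt{\frac{d^2\sigma^2}{T}}.
\]

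Dividing by $T(\Pr\{\cG'\})^q$ yields
\[
\frac1y\le \frac{\dotprod{x^*}{\thta}^{-q}\bigl(1+\epsilon\bigr)}{(\Pr\{\cG'\})^q},\qquad \epsilon=\frac{48q\log T}{\dotprod{x^*}{\thta}}\sqrt{\frac{d^2\sigma^2}{T}}.
\]
Finally, $1+\epsilon\le 1/(1-\epsilon)$ for $\epsilon\in[0,1)$, and multiplying out $\dotprod{x^*}{\thta}^{q}(1-\epsilon)$ gives the stated denominator $\dotprod{x^*}{\thta}^q-48q\log T\,\dotprod{x^*}{\thta}^{q-1}\sqrt{d^2\sigma^2/T}$. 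The requirement $\epsilon<1$ is the same regime assumption, $\dotprod{x^*}{\thta}$ large relative to $\sqrt{d^2\sigma^2/T}\log T$, that was already used in the proof of Theorem~\ref{theorem:improvedNashRegret}.
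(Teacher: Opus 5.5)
Your proposal is correct and follows the paper's proof step for step. The shared steps are the episode-wise near-optimality from Lemma~\ref{lem:near_optimality_phaseII}, the two-case linearization via Facts~\ref{lem:binomial_second} and~\ref{lem:binomial_third} (your constant check is exact, since $12\sqrt{3/(2\cdot 864)}=12/24=\tfrac12$), the bound $|\cE_j|\le T'_j+d(d+1)/2$, Cauchy--Schwarz over at most $\log T$ episodes, and $1+\epsilon\le 1/(1-\epsilon)$.

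One correction: $\epsilon<1$ does not follow from the Nash-regret regime $\dotprod{x^*}{\thta}\ge\sqrt{d^2\sigma^2/T}\,\log T$, which only gives $\epsilon\le 48q$. It follows from the regime $\dotprod{x^*}{\thta}\ge 85|p|\sigma(2(d+1))^{(|p|+2)/2}\log T/\sqrt{T}$ assumed in Lemma~\ref{lem:pltzerocase}, which is where this bound is actually used.
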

\begin{proof}
    From Inequality \eqref{eqn:npreg_y_bound}, we have
\begin{align*}
    \frac{\E \left[\sum_{t=\tau+1}^T \frac{1}{\left(\langle x_t, \thta\rangle \right)^q} \bigg\vert\mathcal{G}\right]}{T(\Pr\{\mathcal{G}\})^q} & = \frac{\E \left[\sum_{\varepsilon_j}\sum_{t\in \mathcal{\varepsilon}_j} \frac{1}{\dotprod{x_t}{\thta}^q} \bigg\vert \cG\right]}{T(\Pr\{\cG\})^q}
      \leq \frac{ \sum_{\varepsilon_j} {|\varepsilon_j|\left(\dotprod{x^*}{\thta} - 12  \sqrt{\frac{  d^2\sigma^2 \log T }{T_j'}}\right)^{-q}} }{T(\Pr\{\cG\})^q}~.\tag{via Lemma \ref{lem:near_optimality_phaseII}}
\end{align*}
Now, we split our analysis into two cases: when $p < -1$ and $-1\leq p< 0$ respectively.

\textit{Case 1: $p < -1$:} In this case, $q = |p_a| =|p|$.
Now, let $u=12\sqrt{\frac{d^2\sigma^2\log T}{T'_j}}$. Then we have
\begin{align*}
u&=12\sqrt{\frac{d^2\sigma^2\log T}{T'_j}}\leq 12\sqrt{\frac{3d^2\sigma^2\log T}{2\tau}}     \tag{$T'_j\geq\frac{2\tau}{3}$}\\
& \leq 12\sqrt{\frac{d^2\sigma^2\log T\dotprod{x^*}{\thta}^2}{576p^2d^2\sigma^2\log T}} = \frac{12\dotprod{x^*}{\thta}}{24|p|}~. \tag{$\tau \geq \frac{864p^2d^2\sigma^2\log T}{\dotprod{x^*}{\thta}^2}$}
\end{align*}
Thus, the quantity $x = \frac{u}{\dotprod{x^*}{\thta}} \leq \frac{1}{2|p|} = \frac{1}{2q}$ (as $|p|=q$).

\textit{Case 2: $-1 \leq p < 0$:} For this case, $ p_a=1$. Again, let $u=12\sqrt{\frac{d^2\sigma^2\log T}{T'_j}}$. Then we have
\begin{align*}
u&=12\sqrt{\frac{d^2\sigma^2\log T}{T'_j}}\leq 12\sqrt{\frac{3d^2\sigma^2\log T}{2\tau}}  \tag{$T'_j\geq\frac{2\tau}{3}$}\\
& \leq 12\sqrt{\frac{d^2\sigma^2\log T\dotprod{x^*}{\thta}^2}{576d^2\sigma^2\log T}} = \frac{12\dotprod{x^*}{\thta}}{24} \tag{$\tau \geq \frac{864p^2d^2\sigma^2\log T}{\dotprod{x^*}{\thta}^2}$}~,
\end{align*}
which implies, the quantity $x = \frac{u}{\dotprod{x^*}{\thta}} \leq \frac{1}{2}$ . Thus, we have, by Facts \ref{lem:binomial_second} and  \ref{lem:binomial_third} with $x = \frac{u}{\dotprod{x^*}{\thta}}$, $\forall \ p < 0$,
\begin{align*}
\frac{1}{y} & \leq \frac{ \sum_{\varepsilon_j} {|\varepsilon_j|\dotprod{x^*}{\thta}^{-q}\left(1 + \frac{24q}{\dotprod{x^*}{\thta}}  \sqrt{\frac{  d^2\sigma^2 \log T }{T_j'}}\right)} }{T(\Pr\{\cG\})^q}  \\
& \leq \dotprod{x^*}{\thta}^{-q}\frac{ \sum_{\varepsilon_j} {|\varepsilon_j| + \sum_{\varepsilon_j}|\varepsilon_j|\left(\frac{24q}{\dotprod{x^*}{\thta}}  \sqrt{\frac{  d^2\sigma^2 \log T }{T_j'}}\right)} }{T(\Pr\{\cG\})^q}  \\
     & \leq \dotprod{x^*}{\thta}^{-q}\frac{T+ \sum_{j=1}^{\log T} {\left(T'_j+\frac{d(d+1)}{2}\right)\left( \frac{24q}{\dotprod{x^*}{\thta}}  \sqrt{\frac{  d^2\sigma^2 \log T }{T_j'}}\right)} }{T(\Pr\{\cG\})^q}  \tag{$\sum_{\epsilon_j}|\varepsilon_j| \leq T$}\\
     & \leq \dotprod{x^*}{\thta}^{-q}\frac{ T + \sum_{j=1}^{\log T} {\left({2T'_j}\right)\left(1 + \frac{24q}{\dotprod{x^*}{\thta}}  \sqrt{\frac{  d^2\sigma^2 \log T }{T_j'}}\right)} }{T(\Pr\{\cG\})^q}  \tag{Assuming $T'_j \geq d(d+1)> d(d+1)/2$}\\
     &\leq \frac{ \dotprod{x^*}{\thta}^{-q} {\left(T + \sum_{j=1}^{\log T}\frac{48q\sqrt{T'_j}}{\dotprod{x^*}{\thta}}  \sqrt{{  d^2\sigma^2 \log T }}\right)} }{T(\Pr\{\cG\})^q} \tag{$\sum_{j} T'_j \leq T$} \\
     &\leq \frac{ \dotprod{x^*}{\thta}^{-q} {\left(T + \frac{48q\log T\sqrt{T}}{\dotprod{x^*}{\thta}}  \sqrt{{  d^2\sigma^2 }}\right)} }{T(\Pr\{\cG\})^q} \tag{Cauchy-Schwarz Inequality} \\
     & = \frac{ \dotprod{x^*}{\thta}^{-q} {\left(1 + \frac{48q\log T}{\dotprod{x^*}{\thta}}  \sqrt{{  \frac{d^2\sigma^2 }{T} }}\right)} }{(\Pr\{\cG\})^q} \leq \frac{ 1}{\left(\dotprod{x^*}{\thta}^q - {48q\log T\dotprod{x^*}{\thta}^{q-1}}  \sqrt{{  \frac{d^2\sigma^2 }{T} }}\right)(\Pr\{\cG\})^q}~,
\end{align*}
where the last inequality holds using $(1+x) \leq \frac{1}{1-x} \ \forall \ 0 \le x \le 1$. This completes the proof of the lemma.
\end{proof}
Now, we formally establish the regret bound for $p<0$. We will give the proof for \textsc{LinUCB}. The proof for \textsc{LinPE} is identical (with only minor changes in constants).
\begin{lemma}[p-mean regret of \textsc{FairLinBandit} for $p<0$]
\label{lem:pltzerocase}
    Consider a stochastic linear bandit problem with an infinite set of arms $\mathcal{X} \subset \mathbb{R}^d$ having $\sigma-$sub-gaussian rewards, time horizon $T$, and fairness parameter $p\in \mathbb{R}$. Then, the $p$-mean regret of \textsc{FairLinBandit} in the regine $p<0$ satisfies 
    \begin{align*}
         \mathrm{R}^{p}_T \leq   O\left(\frac{\sigma d^\frac{|p|+2}{2}{\log T}}{\sqrt T}\cdot\max(1,|p|)\right)
    \end{align*}
\end{lemma}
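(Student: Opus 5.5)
We combine the decomposition \eqref{eqn:npreg_decomp} with Lemma~\ref{lem:1byx} and Lemma~\ref{lem:ucb1byy} (the \textsc{LinPE} case is the same up to the constant $48$ in place of $16$). Write $\mu^*=\dotprod{x^*}{\thta}$. We may assume $\mu^*\ge c\,\max(1,q)\,\sigma d^{(q+2)/2}\log T/\sqrt{T}$ for a suitable constant $c$, since otherwise $\mathrm{R}^q_T\le\mu^*$ already gives the claimed bound.

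\textbf{Step 1 (normalize by $\mu^{*-q}$).} By Lemma~\ref{lem:1byx}, $\frac{1}{x}\le \mu^{*-q}\frac{\tau}{T}(2(d+1))^q$. Lemma~\ref{lem:phase1Length} gives $\tau\le 3072\,d\,S$ with $S=\frac{p_a^2 d\sigma^2\log T}{\mu^{*2}}$, so
\[
\frac{1}{x}\le \mu^{*-q}A,\qquad A:=O\!\Big(\frac{\max(1,q^2)\,d^2\sigma^2\log T\,(2(d+1))^q}{T\mu^{*2}}\Big).
\]
From the proof of Lemma~\ref{lem:ucb1byy}, before the final $(1+x)\le 1/(1-x)$ step, we have
\[
\frac{1}{y}\le \mu^{*-q}\,\frac{1+B}{\Pr\{\cG\}^q},\qquad B:=\frac{16q\log T}{\mu^*}\sqrt{\frac{d^2\sigma^2}{T}}.
\]
Since $\Pr\{\cG\}\ge 1-4/T$, we get $\Pr\{\cG\}^{-q}\le 1+8q/T$ for $T\ge 8q$ by Fact~\ref{lem:binomial_second}. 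Hence
\[
\frac{1}{x}+\frac{1}{y}\le \mu^{*-q}(1+\varepsilon),\qquad \varepsilon:=O(A+B+q/T).
\]

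\textbf{Step 2 (invert).} Plugging into \eqref{eqn:npreg_decomp} gives
\[
\mathrm{R}^q_T\le \mu^*\big(1-(1+\varepsilon)^{-1/q}\big)\le \mu^*\,\frac{\varepsilon}{q},
\]
using $(1+\varepsilon)^{-1/q}\ge 1-\varepsilon/q$ (Bernoulli). Then we bound each term of $\mu^*\varepsilon/q$.
\begin{itemize}
\item The $B$ term contributes $O(\sigma d\log T/\sqrt T)$, independent of $q$.
\item The $q/T$ term contributes $O(1/T)$.
\item The $A$ term contributes $O\big(\max(1,q)\,d^2\sigma^2\log T\,(2(d+1))^q/(T\mu^*)\big)$.
\end{itemize}

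\textbf{Step 3 (the obstacle).} The $A$ term is where the $d^{q/2}$ factor must come from. The naive substitution $\mu^*\ge \sigma d\log T/\sqrt T$ leaves a factor $(2(d+1))^q$, i.e.\ $d^q$, rather than $d^{q/2}$. I would therefore balance the two regimes:
\begin{itemize}
\item If $\mu^*\le \Lambda:=\max(1,q)\,\sigma d^{(q+2)/2}\log T/\sqrt{T}$, the trivial bound $\mathrm{R}^q_T\le\mu^*\le\Lambda$ suffices.
\item Otherwise $\mu^*>\Lambda$, and the $A$ term is at most $\max(1,q)\,\sigma d\,(2(d+1))^q\, d^{-q/2}/\sqrt T$.
\end{itemize}
This still carries an excess $d^{q/2}$ together with a $2^q$ constant. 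To absorb it I would use $\frac{1}{x}+\frac{1}{y}$ more carefully: when $A$ dominates, bound $\mathrm{R}^q_T\le \mu^*\min\big(1,(A+B)/q\big)$ and take the minimum as a geometric mean, $\min(1,z)\le\sqrt z$ applied to the $A$ part. This yields $\sqrt{d^{q+2}\sigma^2\log T/T}$ up to $c^q$ and $\max(1,q)$ factors, which matches the stated rate with constants $2^{q/2}$ absorbed into the $O(\cdot)$. Getting this exponent bookkeeping right is the step I expect to require the most care.
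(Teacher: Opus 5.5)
Your Steps 1--2 follow the paper's proof. Both use the decomposition \eqref{eqn:npreg_decomp}, Lemmas~\ref{lem:1byx} and~\ref{lem:ucb1byy}, the bound $\tau\le 3072dS$, and then a case split on $\mu^*$. Your inversion $(1+\varepsilon)^{-1/q}\ge 1-\varepsilon/q$ by convexity is cleaner than the paper's, because it holds for every $\varepsilon\ge 0$. The paper instead uses $(1-v)^{1/q}\ge 1-2v/q$, which first requires checking $v\le 1/2$; that check is what forces the constant $85$ in its threshold.

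The obstacle you describe in Step~3 does not exist; it is an exponent-counting slip. Using $(d+1)^q d^{-q/2}=(1+1/d)^q d^{q/2}$, your own bound for the $A$ term when $\mu^*>\Lambda$ is
\[
\max(1,q)\,\frac{\sigma d\,(2(d+1))^q\, d^{-q/2}}{\sqrt T}
=\max(1,q)\,\bigl(2(1+1/d)\bigr)^q\,\frac{\sigma d^{(q+2)/2}}{\sqrt T}.
\]
There is no excess power of $d$, only a constant at most $4^q$. So your two-case split with threshold $\Lambda$ already completes the proof, subject to the caveat below.

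This is exactly what the paper does, except that its threshold uses $(2(d+1))^{(q+2)/2}$ instead of $d^{(q+2)/2}$: it requires $\mu^*\ge 85|p|\sigma(2(d+1))^{(|p|+2)/2}\log T/\sqrt T$. Both cases then come out as $O\bigl(\max(1,q)\,\sigma(2(d+1))^{(q+2)/2}\log T/\sqrt T\bigr)$, hiding only a $2^{q/2}$-type constant. Your $\min(1,z)\le\sqrt z$ device is the same balancing, with the optimal threshold chosen automatically. It is valid, and even gives $\sqrt{q\log T}$ in place of $q\log T$, but it is not repairing anything.

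One real looseness remains, and the paper's write-up shares it. For $0<q\le 1$ the stopping rule uses $p_a=1$, so $A=O\bigl(d^2\sigma^2\log T\,(2(d+1))^q/(T\mu^{*2})\bigr)$. Then $\mu^*A/q$ carries a factor $1/q$ (or $1/\sqrt q$ after your balancing), not $\max(1,q)$ as you state. The paper hides the same factor by writing $q^2$ for $p_a^2$ in that range.

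To get a bound uniform as $q\to 0$, keep the Phase~II count $T-\tau$ rather than bounding it by $T$ as the proof of Lemma~\ref{lem:ucb1byy} does. The Phase~I contribution to $\varepsilon$ then becomes $\frac{\tau}{T}\bigl((2(d+1))^q-1\bigr)\le q\,\frac{\tau}{T}(2(d+1))^q\ln(2(d+1))$. The $1/q$ cancels, and the extra $\ln(2(d+1))$ is absorbed by $\log T$.
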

\begin{proof}
First, observe that when $\dotprod{x^*}{\thta}\leq\frac{85{|p|}\sigma (2(d+1))^{\frac{|p|+2}{2}}{\log T}}{\sqrt T}$ the $p$-mean regret satisfies
\begin{align}
    \mathrm{R}^{q}_T = \dotprod{x^*}{\thta} - \left(\frac{\sum_{t=1}^T(\E[\langle x_t, \thta\rangle ])^p}{T}\right)^\frac{1}{p} \leq \dotprod{x^*}{\thta} \leq O\left(\frac{\sigma |p|d^\frac{|p|+2}{2}{2^\frac{|p|}{2}\log T}}{\sqrt T}\right)~.\label{ineq:muLess}
\end{align}
Next, we consider the case when $\dotprod{x^*}{\thta}\geq\frac{85{|p|}\sigma (2(d+1))^{\frac{|p|+2}{2}}{\log T}}{\sqrt T}$. Let $x$ and $y$ be the quantities as defined in Equation \eqref{eq:xAndy} and $q = -p$ as defined earlier. We focus on the quantity $\frac{1}{\frac{1}{x} + \frac{1}{y}}$. Then, using Lemma \ref{lem:1byx} and \ref{lem:ucb1byy}, we get
\begin{align*}
    \frac{1}{\frac{1}{x}+\frac{1}{y}} &\geq \frac{1}{\frac{\tau(2(d+1))^q}{\dotprod{x^*}{\thta}^q T} +\frac{1}{\left(\dotprod{x^*}{\thta}^q - {16q\log T\dotprod{x^*}{\thta}^{q-1}}  \sqrt{{  \frac{d^2\sigma^2 }{T} }}\right)(\Pr\{\cG\})^q}} \\
    & = \frac{\left(\dotprod{x^*}{\thta}^q - 16q\log T\dotprod{x^*}{\thta}^{q-1}  \sqrt{{  \frac{d^2\sigma^2 }{T} }}\right)(\Pr\{\cG\})^q}{1+\frac{\tau(2(d+1))^q}{T}\left(1- \frac{16q\log T}{\dotprod{x^*}{\thta}}  \sqrt{{  \frac{d^2\sigma^2 }{T} }}\right){(\Pr\{\cG\})^q}}~.
\end{align*}
Multiplying the numerator and denominator by $1-\frac{\tau(2(d+1))^q}{T}\left(1- \frac{16q\log T}{\dotprod{x^*}{\thta}}  \sqrt{{  \frac{d^2\sigma^2}{T} }}\right){(\Pr\{\cG\})^q}$, we have
\begin{align*}
    \frac{1}{\frac{1}{x}+\frac{1}{y}} &\geq \frac{\left(\dotprod{x^*}{\thta}^q - 16q\log T\dotprod{x^*}{\thta}^{q-1}  \sqrt{{  \frac{d^2\sigma^2}{T} }}\right)(\Pr\{\cG\})^q\left(1-\frac{\tau(2(d+1))^q}{T}\left(1- \frac{16q\log T}{\dotprod{x^*}{\thta}}  \sqrt{{  \frac{d^2\sigma^2}{T} }}\right){(\Pr\{\cG\})^q}\right)}{1-\left(\frac{\tau(2(d+1))^q}{T}\left(1- \frac{16q\log T}{\dotprod{x^*}{\thta}}  \sqrt{{  \frac{d^2\sigma^2 }{T} }}\right){(\Pr\{\cG\})^q}\right)^2}\nonumber \\
    &\geq  \left(\dotprod{x^*}{\thta}^q - 16q\log T\dotprod{x^*}{\thta}^{q-1}  \sqrt{{  \frac{d^2\sigma^2 }{T} }}\right)(\Pr\{\cG\})^q\left(1-\frac{\tau(2(d+1))^q}{T}\left(1- \frac{16q\log T}{\dotprod{x^*}{\thta}}  \sqrt{{  \frac{d^2\sigma^2 }{T} }}\right){(\Pr\{\cG\})^q}\right),
\end{align*}
where the last inequality holds because of the denominator being less than 1. We can further expand this to get
\begin{align*}
	\frac{1}{\frac{1}{x} + \frac{1}{y}} &\geq {\dotprod{x^*}{\thta}^q \left(1- \frac{16q\log T}{\dotprod{x^*}{\thta}}  \sqrt{{  \frac{d^2\sigma^2 }{T} }}\right)(\Pr\{\cG\})^q\left(1-\frac{\tau(2(d+1))^q}{T}\left(1- \frac{16q\log T}{\dotprod{x^*}{\thta}}  \sqrt{{  \frac{d^2\sigma^2 }{T} }}\right){(\Pr\{\cG\})^q}\right)} \\
    &\geq {\dotprod{x^*}{\thta}^q\log T \left(1- \frac{16q\log T}{\dotprod{x^*}{\thta}}  \sqrt{{  \frac{d^2\sigma^2 }{T} }}\right)(\Pr\{\cG\})^q\left(1-\frac{\tau(2(d+1))^q}{2T}\right)}  \tag{since $\left(1- \frac{16q}{\dotprod{x^*}{\thta}}  \sqrt{{  \frac{d^2\sigma^2 \log T}{T} }}{(\Pr\{\cG\})^q}\right) \leq 1$}\\
    & \geq {\dotprod{x^*}{\thta}^q(\Pr\{\cG\})^q \left(1- \frac{16q\log T}{\dotprod{x^*}{\thta}}  \sqrt{{  \frac{d^2\sigma^2 }{T} }}-\frac{\tau(2(d+1))^q}{2T}\right)} \tag{using $(1-x)(1-y) \geq (1-x-y) \ \forall \ x,y >0$} \\
    & =  {\dotprod{x^*}{\thta}^q(\Pr\{\cG\})^q \left(1- \frac{16q\log T}{\dotprod{x^*}{\thta}}  \sqrt{{  \frac{d^2\sigma^2}{T} }}-\frac{1536dS(2(d+1))^q}{T}\right)} \tag{$\tau\leq 3072dS$}\\
    & =  {\dotprod{x^*}{\thta}^q(\Pr\{\cG\})^q\left(1- \frac{16q\log T}{\dotprod{x^*}{\thta}}  \sqrt{{  \frac{d^2\sigma^2}{T} }}-\frac{1536q^2d^2\sigma^2(2(d+1))^q\log T}{\dotprod{x^*}{\thta}^2T}\right)} \tag{$q = |p|$}~.
\end{align*}
Exponentiating the last inequality by $\frac{1}{q}$, we have
\begin{align*}
\left(\frac{1}{\frac{1}{x} + \frac{1}{y}}\right)^{\tfrac{1}{q}}&\geq  {\dotprod{x^*}{\thta}(\Pr\{\cG\}) \left(1- \frac{16q\log T}{\dotprod{x^*}{\thta}}  \sqrt{{  \frac{d^2\sigma^2 }{T} }}-\frac{1536q^2d^2\sigma^2(2(d+1))^q\log T}{\dotprod{x^*}{\thta}^2T}\right)^\frac{1}{q}}~.
\end{align*}
Now, consider the following term
\begin{align*}
    v& = \frac{16q\log T}{\dotprod{x^*}{\thta}}\sqrt{\frac{d^2\sigma^2}{T}} + \frac{1536q^2d^2\sigma^2(2(d+1))^q\log T}{\dotprod{x^*}{\thta}^2T} \\
    & \leq \frac{16d}{85(2(d+1))^{\frac{q+2}{2}}} +\frac{1536}{7225} \tag{$\dotprod{x^*}{\thta}\geq\frac{85{|p|}\sigma (2(d+1))^{\frac{|p|+2}{2}}\log T}{\sqrt T}$ } \leq \frac{1}{2}~.
\end{align*}
We can apply Claim \ref{lem:binomial} on $(1-v)^{\frac{1}{q}}$ to get
\begin{align*}
\left(\frac{1}{\frac{1}{x} + \frac{1}{y}}\right)^{\tfrac{1}{q}}&\geq {\dotprod{x^*}{\thta}(\Pr\{\cG\}) \left(1- \frac{32\log T}{\dotprod{x^*}{\thta}}  \sqrt{{  \frac{d^2\sigma^2 }{T} }}-\frac{3072q^2d^2\sigma^2(2(d+1))^q\log T}{q\dotprod{x^*}{\thta}^2T}\right)}~.
\end{align*}
Further, substituting $\Pr\{\mathcal{G}\} \geq 1 - \frac{4}{T}$ via union bound, we have
\begin{align*}
\left(\frac{1}{\frac{1}{x} + \frac{1}{y}}\right)^{\frac{1}{q}} &\geq {\dotprod{x^*}{\thta}\left(1-\frac{4}{T}\right)\left(1- \frac{32\log T}{\dotprod{x^*}{\thta}}  \sqrt{{  \frac{d^2\sigma^2 }{T} }}-\frac{3072q^2d^2\sigma^2(2(d+1))^q\log T}{q\dotprod{x^*}{\thta}^2T}\right)}\\
&\geq {\dotprod{x^*}{\thta}\left(1-\frac{4}{T}- \frac{32\log T}{\dotprod{x^*}{\thta}}  \sqrt{{  \frac{d^2\sigma^2 }{T} }}-\frac{3072q^2d^2\sigma^2(2(d+1))^q\log T}{q\dotprod{x^*}{\thta}^2T}\right)}~. \tag{using $(1-x)(1-y) \geq (1-x-y) \ \forall \ x,y >0$}
\end{align*}
Thus, the $q-$regret satisfies
\begin{align*}
	\mathrm{R}_T^q &\leq \dotprod{x^*}{\thta} -  {\dotprod{x^*}{\thta}\left(1-\frac{4}{T}- \frac{32\log T}{\dotprod{x^*}{\thta}}  \sqrt{{  \frac{d^2\sigma^2 }{T} }}-\frac{3072q^2d^2\sigma^2(2(d+1))^q\log T}{q\dotprod{x^*}{\thta}^2T}\right)}\\
    & \leq  {\frac{4\dotprod{x^*}{\thta}}{T} + {32\log T}  \sqrt{{  \frac{d^2\sigma^2 }{T} }}+\frac{3072q^2d^2\sigma^2(2(d+1))^q\log T}{q\dotprod{x^*}{\thta}T}} \\
    & \leq  {\frac{4\dotprod{x^*}{\thta}}{T} + {32\log T}  \sqrt{{  \frac{d^2\sigma^2 }{T} }}+\frac{3072\sigma(2(d+1))^{\frac{q+2}{2}}}{85\sqrt T}}~. \tag{$\dotprod{x^*}{\thta}\geq\frac{85{|p|}\sigma (2(d+1))^{\frac{|p|+2}{2}}{\log T}}{\sqrt T}$ }
\end{align*}
As a result, the $p-$mean regret satisfies
\begin{align}
    \mathrm{R}_T^p \leq  O\left(\frac{\sigma d^\frac{|p|+2}{2}}{\sqrt T}\right) =   O\left(\frac{\sigma d^\frac{|p|+2}{2} {\log T}}{\sqrt T}\right)~.\label{ineq:muGreaterUCB}
\end{align}
Thus, from inequalities \eqref{ineq:muLess} and \eqref{ineq:muGreaterUCB}, we get the final regret bound as
\begin{align*}
    \mathrm{R}_T^p \leq  O\left(\frac{\sigma d^\frac{|p|+2}{2}{\log T}}{\sqrt T}\cdot\max(1,|p|)\right)~,
\end{align*}
which completes the proof of the lemma.
\end{proof}

\section{Additional Computational Details}\label{app:exps}
In this section, we present implementation details and some additional results from our numerical simulations.

\textbf{Computational complexity of Algorithm \ref{algo:linwelfare}}
We observe that Algorithm~\ref{algo:linwelfare} (\textsc{FairLinBandit}) admits a polynomial-time implementation when number of arms is finite.
Specifically, in Part~I the algorithm invokes the subroutine \textsc{PullArms} (Algorithm~\ref{alg:Part1})
to compute the John Ellipsoid.
Given the collection of arm vectors, this computation can be carried out efficiently
(see Chapter~3 of \citet{todd2016minimum}).
Moreover, for the purposes of our algorithm, an approximate John Ellipsoid is sufficient,
and such an approximation can be computed significantly faster \citep{cohen2019near},
namely in time $O(|\mathcal{X}|^{2} d)$.

In addition, Part~I requires solving the D-optimal design problem at most $O(\log T)$ times.
This problem is a concave maximization task and can be efficiently addressed using
the Frank-Wolfe algorithm with rank-1 updates.
Each iteration of the method incurs a cost of $O(|\mathcal{X}|^{2})$,
and the total number of iterations is bounded by $O(d)$
(see, for example, Chapter~21 of \citet{lattimore2020bandit} and Chapter~3 of \citet{todd2016minimum}).
For Part II, the computational complexity of the \textsc{LinPE} algorithm
(Algorithm~\ref{alg:Part2_PE}) is $O(|\mathcal{X}|^{2} d \log T + T)$,
since the number of D-optimal design iterations is similarly bounded.

For \textsc{LinUCB} (Algorithm~\ref{alg:Part2_UCB}), each iteration requires
$O(d^{2})$ time for Line~3 and $O(|\mathcal{X}|)$ time for Line~4,
leading to an overall complexity of $O(T d^{2} + T |\mathcal{X}|)$.
Putting everything together, we conclude that \textsc{FairLinBandit}
runs in polynomial time under both implementations.

\textbf{Approximation of John Ellipsoid.} To approximate the center of John’s ellipsoid efficiently, we compute the \emph{Chebyshev center} of the convex hull of the active arm set. Let $\mathcal{P}_t$ denote the convex hull of the active arms at round $t$, represented as $\mathcal{P}_t = \{ x \in \mathbb{R}^d \mid a_k^\top x \le b_k,\; k=1,\ldots,K \}$. The Chebyshev center is the center of the largest Euclidean ball inscribed within $\mathcal{P}_t$, found by solving the optimization problem: 
\begin{align*}
    &\underset{c,\, r}{\text{maximize}} \quad r \\
    &\text{s.t.} \quad  a_k^\top c + r \|a_k\|_2 \le b_k, \quad \forall k \in [K].
\end{align*}
This linear program acts as a computationally efficient surrogate for the nonlinear John ellipsoid problem and is solved using standard convex optimization solvers.

 \begin{figure*}[t]\centering

        		\begin{subfigure}[b]{0.32\textwidth}
        \centering
			\includegraphics[scale=0.28]{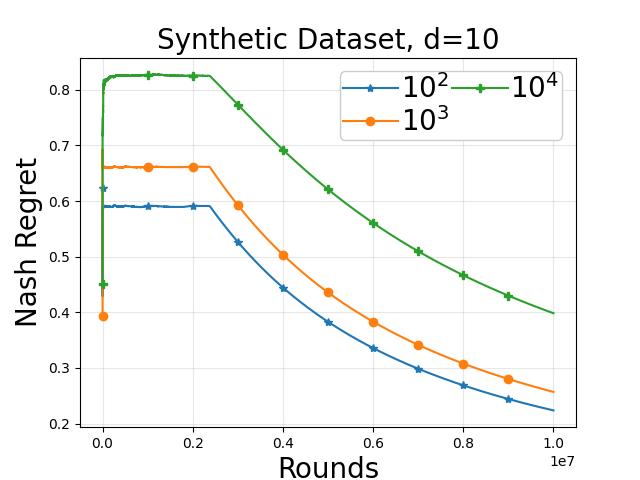}
			\caption{\centering FairLinUCB; vary \#arms}
		\end{subfigure} 
		\begin{subfigure}[b]{0.32\textwidth}
        \centering
			\includegraphics[scale=0.28]{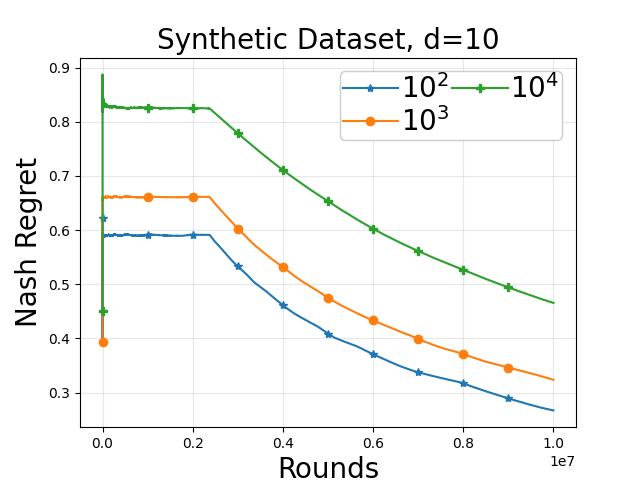}
			\caption{\centering FairLinPE; vary \#arms}
		\end{subfigure} 
        \caption{Numerical results showing the effect of variation of the number of arms on the Nash regret. (a) and (b) show the effect on FairLinPE and FairLinUCB, respectively.}
        \end{figure*}

\paragraph{Ablation on varying number of arms.} 
To empirically validate the effect of increasing the number of arms, we evaluate our algorithm in a synthetic linear bandit environment, systematically varying the number of arms $N \in \{10^2, 10^3, 10^4\}$. For each configuration, we fix the feature dimension at $d=10$ and generate a ground-truth parameter $\theta^*$ by training a Lasso model on $n=2000$ noisy synthetic observations. This training step ensures that $\theta^*$ possesses a sparse structure, mimicking the feature selection process used in our MSLR and Yahoo! experiments. The arm features are sampled from a standard normal distribution and subsequently normalized to the unit sphere. We then flip any arms with a mean reward $\langle x, \theta^* \rangle < 0$, ensuring strictly positive rewards across the action space. As illustrated in Figures 4(a) and 4(b), the algorithm exhibits robust performance across varying arm densities, with regret increasing with the number of arms.

\section{On the reward model assumed in \citet{sawarni2023nash}}\label{sec:poisson}

\citet{sawarni2023nash} considered a very specific reward model to satisfy the technical necessity of using multiplicative concentration bounds in line with their use of estimate-dependent confidence widths. Their reward model has only non-negative rewards and follows a $\nu$-sub-Poisson distribution, which yields a specific multiplicative concentration bound. This reward model is restrictive, as it admits only non-negative rewards, unlike the more general sub-Gaussian rewards. \citet[lemma 2]{sawarni2023nash} shows that if the random reward $r_x$ is non-negative and $\sigma$-sub-Gaussian with mean $\langle x,\theta^*\rangle$, then it is also $\nu=\frac{\sigma^2}{\langle x,\theta^*\rangle}$ sub-Poisson. Hence, one would expect their analysis to hold for non-negative sub-Gaussian rewards as well. However, upon closer inspection, one would realize this is not the case. Recall that the analysis of their \textsc{LinNash} algorithm (which uses a fixed number of rounds $\widetilde T$ for Part I) hinges crucially on estimate-dependent confidence widths
\begin{align*}
    \left| \langle x, \theta^* \rangle - \langle x, \widehat{\theta} \rangle \right|
\le
3 \sqrt{\frac{d^2\,\nu \langle x, \theta^* \rangle \, \log T}{\widetilde{T}}}~.
\end{align*}
Now, if we put $\nu = \frac{\sigma^2}{\langle x, \theta^* \rangle}$, then the confidence widths would become estimate-independent, leading to failure of their analysis. Thus, while the sub-Poisson rewards admit non-negative sub-Gaussian rewards as a special case, the value of the corresponding sub-Poisson parameter renders this observation entirely unhelpful for analyzing their algorithm.

\end{document}